\theoremstyle{plain}
\newtheorem{theorem}{Theorem}[section]
\newtheorem{proposition}[theorem]{Proposition}
\newtheorem{lemma}[theorem]{Lemma}
\theoremstyle{definition}
\newtheorem{assumption}[theorem]{Assumption}
\theoremstyle{remark}
\newcommand{\indep}{\perp \!\!\! \perp}
\newcommand{\p}[0]{\mathbb{P}}
\newcommand{\E}[0]{\mathbb{E}}
\newcommand{\var}[0]{\textup{Var}}
\newcommand{\tar}[0]{\pi^*}
\newcommand{\beh}[0]{\pi^b}
\newcommand{\hatbeh}[0]{\widehat{\pi}^b}
\newcommand{\ptar}[0]{p_{\pi^*}}
\newcommand{\Etar}[0]{\mathbb{E}_{\pi^*}}
\newcommand{\Etarred}[0]{\mathbb{E}_{\textcolor{red}{\pi^*}}}
\newcommand{\Ebeh}[0]{\mathbb{E}_{\pi^b}}
\newcommand{\Ebehblue}[0]{\mathbb{E}_{\textcolor{blue}{\pi^b}}}
\newcommand{\Vbeh}[0]{\textup{Var}_{\pi^b}}
\newcommand{\V}[0]{\textup{Var}}
\newcommand{\pbeh}[0]{p_{\pi^b}}
\newcommand{\thetadr}[0]{\hat{\theta}_{\textup{DR}}}
\newcommand{\thetamr}[0]{\hat{\theta}_{\textup{MR}}}
\newcommand{\thetaipw}[0]{\hat{\theta}_{\textup{IPW}}}
\newcommand{\thetaswitch}[0]{\hat{\theta}_{\textup{SwitchDR}}}
\newcommand{\thetadros}[0]{\hat{\theta}_{\textup{DRos}}}
\newcommand{\approxipw}[0]{\tilde{\theta}_{\textup{IPW}}}
\newcommand{\approxmr}[0]{\tilde{\theta}_{\textup{MR}}}
\newcommand{\thetagmdr}[0]{\tilde{\theta}_{\textup{DM-DR}}}
\newcommand{\ipw}[0]{\textup{IPW}}
\newcommand{\mr}[0]{\textup{MR}}
\newcommand{\dr}[0]{\textup{DR}}
\newcommand{\ate}[0]{\textup{ATE}}
\newcommand{\Xspace}[0]{{\mathcal{X}}}
\newcommand{\Yspace}[0]{\mathcal{Y}}
\newcommand{\Aspace}[0]{\mathcal{A}}
\newcommand{\eqas}[0]{\overset{\textup{a.s.}}{=}}
\newcommand{\D}[0]{\mathcal{D}}
\newcommand{\Dtr}[0]{\mathcal{D}_{\textup{tr}}}
\newcommand{\gt}[0]{{\textup{gt}}}
\newcommand{\ind}[0]{{\mathbbm{1}}}
\newcommand{\ateipw}{\widehat{\ate}_{\ipw}}
\newcommand{\atemr}{\widehat{\ate}_{\mr}}
\newcommand{\atedr}{\widehat{\ate}_{\dr}}
\newcommand{\thetasnmr}{\theta_{\textup{SNMR}}}
\newcommand{\tr}{{\textup{tr}}}
\newcommand{\myparagraph}[1]{\paragraph{#1}}
\newcommand{\red}[1]{\textcolor{red}{#1}}
\newcommand{\flag}[1]{#1}
\title{Marginal Density Ratio for Off-Policy Evaluation in Contextual Bandits}
\author{%
  Muhammad Faaiz Taufiq\thanks{Part of the work was done during an internship at ByteDance Research. \\
  Correspondence to \texttt{muhammad.taufiq@stats.ox.ac.uk} and \texttt{jeanfrancois@bytedance.com}} \\
  Department of Statistics\\
  University of Oxford\\
   \And
   Arnaud Doucet \\
   Department of Statistics \\
   University of Oxford\\
   \AND
   Rob Cornish \\
   Department of Statistics \\
   University of Oxford\\
   \And
   Jean-Fran\c cois Ton \\
   ByteDance Research \\
   ByteDance\\
}
\begin{document}

\maketitle

\begin{abstract}

Off-Policy Evaluation (OPE) in contextual bandits is crucial for assessing new policies using existing data without costly experimentation. However, current OPE methods, such as Inverse Probability Weighting (IPW) and Doubly Robust (DR) estimators, suffer from high variance, particularly in cases of low overlap between target and behavior policies or large action and context spaces. In this paper, we introduce a new OPE estimator for contextual bandits, the Marginal Ratio (MR) estimator, which focuses on the shift in the marginal distribution of outcomes $Y$ instead of the policies themselves. Through rigorous theoretical analysis, we demonstrate the benefits of the MR estimator compared to conventional methods like IPW and DR in terms of variance reduction. Additionally, we establish a connection between the MR estimator and the state-of-the-art Marginalized Inverse Propensity Score (MIPS) estimator, proving that MR achieves lower variance among a generalized family of MIPS estimators. We further illustrate the utility of the MR estimator in causal inference settings, where it exhibits enhanced performance in estimating Average Treatment Effects (ATE). Our experiments on synthetic and real-world datasets corroborate our theoretical findings and highlight the practical advantages of the MR estimator in OPE for contextual bandits. 
\end{abstract}
\section{Introduction} 
In contextual bandits, the objective is to select an action $A$, guided by contextual information $X$, to maximize the resulting outcome $Y$. This paradigm is prevalent in many real-world applications such as healthcare, personalized recommendation systems, or online advertising \citep{li2010contextual, bastani2019online, xu2020contextual}. The objective is to perform actions, such as prescribing medication or recommending items, which lead to desired outcomes like improved patient health or higher click-through rates. Nonetheless, updating the policy presents challenges, as na\"ively implementing a new, untested policy may raise ethical or financial concerns. For instance, prescribing a drug based on a new policy poses risks, as it may result in unexpected side effects. As a result, recent research \citep{swaminathan2015counterfactual, wang2017optimal, farajtabar2018more, su2019continuous, metelli2021subgaussian, liu2019triply, sugiyama2012machine, swaminathan2017off} has concentrated on evaluating the performance of new policies (target policy) using only existing data that was generated using the current policy (behaviour policy). This problem is known as Off-Policy Evaluation (OPE).

Current OPE methods in contextual bandits, such as the Inverse Probability Weighting (IPW) \citep{horvitz1952generalization} and Doubly Robust (DR) \citep{dudik2014doubly} estimators primarily account for the policy shift by re-weighting the data using the ratio of the target and behaviour polices to estimate the target policy value. This can be problematic as it may lead to high variance in the estimators in cases of substantial policy shifts. The issue is further exacerbated in situations with large action or context spaces \citep{saito2022off}, since in these cases the estimation of policy ratios is even more difficult leading to extreme bias and variance.

In this work we show that this problem of high variance in OPE can be alleviated by using methods which directly consider the shift in the marginal distribution of the outcome $Y$ resulting from the policy shift, instead of considering the policy shift itself (as in IPW and DR). To this end, we propose a new OPE estimator for contextual bandits called the Marginal Ratio (MR) estimator, which weights the data directly based on the shift in the marginal distribution of outcomes $Y$ and consequently is much more robust to increasing sizes of action and context spaces than existing methods like IPW or DR. 
Our extensive theoretical analyses show that MR enjoys better variance properties than the existing methods making it highly attractive for a variety of applications in addition to OPE. One such application is the estimation of Average Treatment Effect (ATE) in causal inference, for which we show that MR provides greater sample efficiency than the most commonly used methods.

Our contributions in this paper are as follows:

\begin{itemize}
    \item Firstly, we introduce MR, an OPE estimator for contextual bandits, that focuses on the shift in the marginal distribution of $Y$ rather than the joint distribution of $(X, A, Y)$. 
    \flag{We show that MR has favourable theoretical properties compared to existing methods like IPW and DR. Our analysis also encompasses theory on the approximation errors of our estimator. 
    }

    \item Secondly, we explicitly lay out the connection between MR and  Marginalized Inverse Propensity Score (MIPS) \cite{saito2022off}, a recent state-of-the-art contextual bandits OPE method, and prove that MR attains lowest variance among a generalized family of MIPS estimators. 
    \item Thirdly, we show that the MR estimator can be applied in the setting of causal inference to estimate average treatment effects (ATE), and theoretically prove that the resulting estimator is more data-efficient with higher accuracy and lower variance than commonly used methods. 
    \item Finally, we verify all our theoretical analyses through a variety of experiments on synthetic and real-world datasets and empirically demonstrate that the MR estimator achieves better overall performance compared to current state-of-the-art methods. 
\end{itemize}

\section{Background}
\subsection{Setup and Notation} \label{sec:setup_notation}
We consider the standard contextual bandit setting. Let $X\in\Xspace$ be a context vector (e.g., user features), $A\in \Aspace$ denote an action (e.g., recommended website to the user), and $Y\in \Yspace$ denote a scalar reward or outcome (e.g., whether the user clicks on the website). The outcome and context are sampled from unknown probability distributions $p(y\mid x, a)$ and $p(x)$ respectively. Let $\D\coloneqq \{(x_i, a_i, y_i)\}_{i=1}^n$ be a historically logged dataset with $n$ observations, generated by a (possibly unknown) \emph{behaviour policy} $\beh(a\mid x)$.
Specifically, $\D$ consists of i.i.d. samples from the joint density under\textit{ behaviour policy},
\begin{align}
    \pbeh(x, a, y) \coloneqq p(y\mid x, a)\, \textcolor{blue}{\beh(a\mid x)}\,p(x). \label{eq:behav-joint-factorisation}
\end{align}
We denote the joint density of $(X, A, Y)$ under the \textit{target policy} as
\begin{align}
    \ptar(x, a, y) \coloneqq p(y\mid x, a)\, \textcolor{red}{\tar(a\mid x)}\,p(x). \label{eq:tar-joint-factorisation}
\end{align}

Moreover, we use $\pbeh(y)$ to denote the marginal density of $Y$ under the behaviour policy, 
\begin{align*}
    \pbeh(y) &= \int_{\Aspace \times \Xspace} \pbeh(x, a, y)\, \mathrm{d}a \, \mathrm{d}x,
\end{align*}
and likewise for the target policy $\tar$. Similarly, we use $\Ebeh$ and $\Etar$ to denote the expectations under the joint densities $\pbeh(x, a, y)$ and $\ptar(x, a, y)$ respectively.

\myparagraph{Off-policy evaluation (OPE)}
The main objective of OPE is to estimate the expectation of the outcome $Y$ under a given target policy $\tar$, i.e., $\Etar [Y]$, using only the logged data $\D$.

Throughout this work, we assume that the support of the target policy $\tar$ is included in the support of the behaviour policy $\beh$. This is to ensure that importance sampling yields unbiased off-policy estimators, and is satisfied for exploratory behaviour policies such as the $\epsilon$-greedy policies. 
\begin{assumption}[Support]
    For any $x \in \Xspace, a \in \Aspace$,  $\tar(a\mid x) >0 \implies \beh(a\mid x) >0$. 
\end{assumption}

\subsection{Existing off-policy evaluation methodologies}
Next, we will present some of the most commonly used OPE estimators before outlining the limitations of these methodologies. This motivates our proposal of an alternative OPE estimator. 

The value of the target policy can be expressed as the expectation of outcome $Y$ under the target data distribution $\ptar(x, a, y)$.
However in most cases, we do not have access to samples from this target distribution and hence we have to resort to importance sampling methods.
\paragraph{Inverse Probability Weighting (IPW) estimator}
One way to compute the target policy value, $\Etar[Y]$, when only given data generated from $\pbeh(x, a, y)$ is to rewrite the policy value as follows:
\begin{small}
\begin{align*}
    \Etarred[Y] =
    \int y \, \ptar(x, a, y) \,\mathrm{d}y \, \mathrm{d}a\, \mathrm{d}x   =
    \int y \, \underbrace{\frac{\ptar(x, a, y)}{\pbeh(x, a, y)}}_{\rho(a,x)}\, \pbeh(x, a, y) \,\mathrm{d}y \, \mathrm{d}a\, \mathrm{d}x =
    \Ebehblue\left[Y\,\rho(A, X)\right],
\end{align*}
\end{small} 
where 
$
\rho(a, x) \coloneqq \frac{\ptar(x, a, y)}{\pbeh(x, a, y)} = \frac{\tar(a|x)}{\beh(a|x)}
$, given the factorizations in Eqns. \eqref{eq:behav-joint-factorisation} and \eqref{eq:tar-joint-factorisation}.
This leads to the commonly used \emph{Inverse Probability Weighting (IPW)} \citep{horvitz1952generalization} estimator:
\[
\thetaipw \coloneqq \frac{1}{n}\sum_{i=1}^n \rho(a_i, x_i)\,y_i.
\]
When the behaviour policy is known, IPW is an unbiased and consistent estimator. However, it can suffer from high variance, especially as the overlap between the behaviour and target policies decreases. 

\myparagraph{Doubly Robust (DR) estimator} 
To alleviate the high variance of IPW, \cite{dudik2014doubly} proposed a \emph{Doubly Robust (DR)} estimator for OPE. 
DR uses an estimate of the conditional mean $\hat{\mu}(a, x) \approx\E[Y\mid X=x, A=a]$ (\emph{outcome model}), as a control variate to decrease the variance of IPW. It is also doubly robust in that it yields accurate value estimates if either the importance weights $\rho(a, x)$ or the outcome model $\hat{\mu}(a, x)$ is well estimated \citep{dudik2014doubly, jiang2016doubly}. 
The DR estimator for $\Etar[Y]$ can be written as follows:
\[
\thetadr = \frac{1}{n} \sum_{i=1}^n \rho(a_i, x_i)\,(y_i - \hat{\mu}(a_i, x_i)) + \hat{\eta}(\tar),
\]
where
$
\hat{\eta}(\tar) = \frac{1}{n} \sum_{i=1}^n \sum_{a'\in \Aspace} \hat{\mu}(a', x_i) \tar(a'\mid x_i) \approx \E_{\tar}[\hat{\mu}(A, X)]$. Here, $\hat{\eta}(\tar)$ is referred to as the Direct Method (DM) as it uses $\hat{\mu}(a, x)$ directly to estimate target policy value. 

\subsection{Limitation of existing methodologies} 
To estimate the value of the target policy $\tar$, the existing methodologies consider the shift in the joint distribution of $(X, A, Y)$  as a result of the policy shift (by weighting samples by policy ratios). As we show in Section \ref{subsec:comparison}, considering the joint shift can lead to inefficient policy evaluation and high variance especially as the policy shift increases \citep{li2018addressing}.
Since our goal is to estimate $\Etar[Y]$, we will show in the next section that considering only the shift in the marginal distribution of the outcomes $Y$ from $\pbeh(Y)$ to $\ptar(Y)$, leads to a more efficient OPE methodology compared to existing approaches.

To better comprehend why only considering the shift in the marginal distribution is advantageous, let us examine an extreme example where we assume that $Y \indep A \mid X$, i.e., the outcome $Y$ for a user $X$ is independent of the action $A$ taken. In this specific instance, $\Etar[Y] = \Ebeh[Y] \approx 1/n\sum_{i=1}^n y_i,$ indicating that an unweighted empirical mean serves as a suitable unbiased estimator of $\Etar[Y]$. However, IPW and DR estimators use policy ratios $\rho(a, x)  = \frac{\tar(a \mid x)}{\beh(a \mid x)}$ as importance weights. In case of large policy shifts, these ratios may vary significantly, leading to high variance in IPW and DR.

In this particular example, the shift in policies is inconsequential as it does not impact the distribution of outcomes $Y$. Hence, IPW and DR estimators introduce additional variance due to the policy ratios when they are not actually required. This limitation is not exclusive to this special case; in general, methodologies like IPW and DR exhibit high variance when there is low overlap between target and behavior policies \citep{li2018addressing} even if the resulting shift in marginals of the outcome $Y$ is not significant.

Therefore, we propose the \emph{Marginal Ratio (MR)} OPE estimator for contextual bandits in the subsequent section, which circumvents these issues by focusing on the shift in the marginal distribution of the outcomes $Y$. Additionally, we provide extensive theoretical insights on the comparison of MR to existing state-of-the-art methods, such as IPW and DR.

\section{Marginal Ratio (MR) estimator}

Our method's key insight involves weighting outcomes by the marginal density ratio of outcome $Y$:
\begin{align*}
\Etarred[Y] &= \int_{\Yspace} y \, \ptar(y)\, \mathrm{d}y = \int_\Yspace y\, \frac{\ptar(y)}{\pbeh(y)} \, \pbeh(y) \, \mathrm{d}y = \Ebehblue\left[Y\, w(Y) \right],
\end{align*}
where 
$
w(y) \coloneqq \frac{\ptar(y)}{\pbeh(y)}.
$
This leads to the Marginal Ratio OPE estimator:
\begin{align*}
    \thetamr \coloneqq \frac{1}{n}\sum_{i=1}^n w(y_i) \, y_i.
\end{align*}
In Section \ref{subsec:comparison} we prove that by only considering the shift in the marginal distribution of outcomes, the MR estimator achieves a lower variance than the standard OPE methods. In fact, this estimator does not depend on the shift between target and behaviour policies directly. Instead, it depends on the shift between the marginals $\pbeh(y)$ and $\ptar(y)$.

\myparagraph{Estimation of $w(y)$} When the weights $w(y)$ are known exactly, the MR estimator is unbiased and consistent. However, in practice the weights $w(y)$ are often not known and must be estimated using the logged data $\D$. Here, we outline an efficient way to estimate $w(y)$ by first representing it as a conditional expectation, which can subsequently be expressed as the solution to a regression problem.
\begin{lemma}\label{lemma:weights-est}
Let $w(y)=\frac{\ptar(y)}{\pbeh(y)}$ and $\rho(a, x) = \frac{\tar(a\mid x)}{\beh(a\mid x)}$, then $w(y) = \Ebeh\left[ \rho(A, X) \mid \,Y=y \right]$, and,
\begin{align}
 w = \arg\min_{f} \, \Ebeh \left[(\rho(A, X)-f(Y))^2\right]. \label{eq:weights-obj}
\end{align}
\end{lemma}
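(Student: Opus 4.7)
The plan is to prove the two assertions in sequence. The identity $w(y) = \Ebeh[\rho(A,X) \mid Y=y]$ is essentially a direct manipulation of the marginal densities, after which the variational characterization follows from the standard $L^2$ projection property of conditional expectation.

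For the first claim, I would exploit the fact that $\pbeh(x,a,y)$ and $\ptar(x,a,y)$ share the common factors $p(y\mid x,a)$ and $p(x)$ and differ only in the policy term. This immediately yields the pointwise identity $\ptar(x,a,y) = \rho(a,x)\,\pbeh(x,a,y)$. Integrating over $(x,a)$ gives
\[
\ptar(y) = \int_{\Xspace \times \Aspace} \rho(a,x)\,\pbeh(x,a,y)\,\mathrm{d}a\,\mathrm{d}x,
\]
and dividing both sides by $\pbeh(y)$ (which is positive wherever $\ptar(y)$ is, by Assumption 2.1) produces
\[
w(y) = \int_{\Xspace \times \Aspace} \rho(a,x)\,\pbeh(x,a\mid y)\,\mathrm{d}a\,\mathrm{d}x = \Ebeh\!\left[\rho(A,X) \mid Y=y\right].
\]

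For the second claim, I would use the orthogonality decomposition that is standard for conditional expectations. For any measurable $f$ with $\Ebeh[f(Y)^2] < \infty$, write $\rho(A,X) - f(Y) = \left(\rho(A,X) - w(Y)\right) + \left(w(Y) - f(Y)\right)$ and expand the square. The cross term $\Ebeh\!\left[(\rho(A,X) - w(Y))(w(Y) - f(Y))\right]$ vanishes by the tower property, since $w(Y) - f(Y)$ is $\sigma(Y)$-measurable and $\Ebeh[\rho(A,X) - w(Y) \mid Y] = 0$ by the first part of the lemma. Thus
\[
\Ebeh\!\left[(\rho(A,X) - f(Y))^2\right] = \Ebeh\!\left[(\rho(A,X) - w(Y))^2\right] + \Ebeh\!\left[(w(Y) - f(Y))^2\right],
\]
and the first term on the right-hand side does not depend on $f$, so the objective is minimized (uniquely, $\pbeh(y)$-almost surely) at $f = w$.

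There is no substantive obstacle here, only bookkeeping: one must verify that $w(Y)$ lies in $L^2(\pbeh)$ so the orthogonality argument is rigorous (which holds whenever $\rho(A,X)$ does, by Jensen's inequality applied to the conditional expectation), and one must restrict the minimization in \eqref{eq:weights-obj} to functions $f$ with $\Ebeh[f(Y)^2] < \infty$ so the argmin is well-defined. Both are standard and can be stated briefly.
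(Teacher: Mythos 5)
Your proposal is correct and follows essentially the same route as the paper: the first identity is obtained by the same density manipulation (factoring $\ptar(x,a,y) = \rho(a,x)\,\pbeh(x,a,y)$ and marginalizing over $(x,a)$), and the variational characterization is the standard $L^2$-projection property of conditional expectation, which the paper simply invokes without writing out the orthogonality decomposition. Your additional care about restricting to $f \in L^2(\pbeh)$ and verifying $w(Y) \in L^2$ is a minor tightening of a step the paper leaves implicit, not a different argument.
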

Lemma \ref{lemma:weights-est} allows us to approximate $w(y)$ using a parametric family $\{f_\phi: \mathbb{R}\rightarrow \mathbb{R} \mid \phi \in \Phi\}$ (e.g.\ neural networks) and defining $\hat{w}(y)\coloneqq f_{\phi^*}(y)$, where $\phi^*$ solves the regression problem in Eq. \eqref{eq:weights-obj}.

Note that MR can also be estimated alternatively by directly estimating $h(y) \coloneqq w(y)\,y$ 
using a similar regression technique as above and computing $\thetamr = 1/n \sum_{i=1}^n h(y_i)$. We include additional details along with empirical comparisons in Appendix \ref{sec:alt-estimation-method}. 

\subsection{Theoretical analysis}\label{subsec:comparison}
Recall that the traditional OPE estimators like IPW and DR use importance weights which account for the the shift in the joint distributions of $(X, A, Y)$. In this section, we prove that by considering only the shift in the marginal distribution of $Y$ instead, MR achieves better variance properties than these estimators.
Our analysis in this subsection assumes that the ratios $\rho(a, x)$ and $w(y)$ are known exactly. Since the OPE estimators considered are unbiased in this case, 
our analysis of variance is analogous to that of the mean squared error (MSE) here.
We address the case where the weights are not known exactly in Section \ref{subsec:weight-estimation-error}.
First, we make precise our intuition that the shift in the joint distribution of $(X, A, Y)$ is `greater' than the shift in the marginal distribution of outcomes $Y$. 
We formalise this using the notion of $f$-divergences.
\begin{proposition}\label{tv_prop}
Let $f:[0, \infty) \rightarrow \mathbb{R}$ be a convex function with $f(1)=0$, and $\textup{D}_f(P || Q)$ denotes the $f$-divergence between distributions $P$ and $Q$. Then,
\[
\textup{D}_f\left(\ptar(x,a,y)\,||\, \pbeh(x,a,y)\right) \geq \textup{D}_f\left(\ptar(y)\,||\, \pbeh(y)\right).
\]
\end{proposition}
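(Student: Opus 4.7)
The plan is to recognise this as an instance of the data processing inequality for $f$-divergences: passing from the joint law of $(X,A,Y)$ to the marginal law of $Y$ is a Markov kernel (projection), and $f$-divergences contract under such kernels. Since the marginal $\pbeh(y)$ is obtained from $\pbeh(x,a,y)$ by integration, and similarly for $\tar$, only one inequality has to be justified, namely Jensen applied to a conditional expectation.

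Concretely, I would first write out the $f$-divergence explicitly. Let $\rho(a,x) = \ptar(x,a,y)/\pbeh(x,a,y)$ as in the paper (note this ratio depends only on $(a,x)$ since $p(y \mid x,a)$ and $p(x)$ cancel). Then
\begin{align*}
\textup{D}_f\bigl(\ptar(x,a,y)\,\|\,\pbeh(x,a,y)\bigr) = \Ebeh\!\left[f(\rho(A,X))\right].
\end{align*}
By Lemma \ref{lemma:weights-est}, we have $w(y) = \Ebeh[\rho(A,X) \mid Y = y]$, so by the tower property
\begin{align*}
\textup{D}_f\bigl(\ptar(y)\,\|\,\pbeh(y)\bigr) = \Ebeh\!\left[f(w(Y))\right] = \Ebeh\!\left[f\bigl(\Ebeh[\rho(A,X)\mid Y]\bigr)\right].
\end{align*}

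Next I would apply Jensen's inequality for conditional expectations: since $f$ is convex,
\begin{align*}
f\bigl(\Ebeh[\rho(A,X)\mid Y]\bigr) \leq \Ebeh\!\left[f(\rho(A,X)) \,\middle|\, Y\right] \quad \text{a.s.}
\end{align*}
Taking expectations under $\pbeh$ and using the tower property again collapses the right-hand side to $\Ebeh[f(\rho(A,X))]$, which is exactly the joint $f$-divergence computed above. Chaining the two yields the claimed inequality.

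There is no real obstacle here once Lemma \ref{lemma:weights-est} is in hand; the only subtlety is ensuring that $f$-divergences on both sides are well-defined under Assumption 2.1 (support containment), so that the ratios exist $\pbeh$-a.e. and Jensen's inequality applies without integrability issues. I might add a brief remark on this and note that equality holds iff $\rho(A,X)$ is $\pbeh$-a.s. a function of $Y$, i.e.\ when there is no additional variability in the policy ratio beyond what $Y$ already captures, which is precisely the intuition motivating the MR estimator.
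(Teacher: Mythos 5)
Your proof is correct and follows essentially the same route as the paper's: both express the joint $f$-divergence as $\Ebeh[f(\rho(A,X))]$, condition on $Y$, and apply Jensen's inequality together with the identity $w(Y)=\Ebeh[\rho(A,X)\mid Y]$ to recover the marginal divergence. The data-processing framing and the remarks on well-definedness and the equality case are nice additions but do not change the substance of the argument.
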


\paragraph{Intuition}
Proposition \ref{tv_prop} shows that the shift in the joint distributions is at least as `large' as the shift in the marginals of the outcome $Y$. Traditional OPE estimators, therefore take into consideration more of a distribution shift than needed, and consequently lead to inefficient estimators. In contrast, the MR estimator mitigates this problem by only considering the shift in the marginal distributions of outcomes resulting from the policy shift. 
This provides further intuition on why the MR estimator has lower variance compared to existing methods. 

\begin{proposition}[Variance comparison with IPW estimator]\label{prop:var_mr}
When the weights $\rho(a, x)$ and $w(y)$ are known exactly, we have that $\Vbeh[\thetamr] \leq \Vbeh[\thetaipw]$. In particular,
\begin{align*}
    \Vbeh[\thetaipw] - \Vbeh[\thetamr]
    = \frac{1}{n} \Ebeh \left[ \Vbeh\left[ \rho(A, X) \mid Y \right]\, Y^2 \right] \geq 0.
\end{align*}
\end{proposition}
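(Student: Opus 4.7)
The plan is to reduce everything to a comparison of second moments for single i.i.d.\ terms, then invoke Lemma~\ref{lemma:weights-est} to relate the two via a conditional variance.

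First, since $\D$ consists of i.i.d.\ samples and both estimators are sample averages, I would write
\[
\Vbeh[\thetaipw] = \frac{1}{n}\,\Vbeh\bigl[\rho(A,X)\,Y\bigr], \qquad \Vbeh[\thetamr] = \frac{1}{n}\,\Vbeh\bigl[w(Y)\,Y\bigr].
\]
Both estimators are unbiased for $\Etar[Y]$ under the assumption that the weights are known exactly (the IPW case is standard; for MR it follows from the marginal change-of-measure identity derived just above the statement). Therefore the two single-sample variances differ only through their second moments, and the problem reduces to showing
\[
\Ebeh\bigl[\rho(A,X)^2 Y^2\bigr] - \Ebeh\bigl[w(Y)^2 Y^2\bigr] = \Ebeh\bigl[\Vbeh[\rho(A,X)\mid Y]\,Y^2\bigr].
\]

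The next step is to use Lemma~\ref{lemma:weights-est}, which gives $w(Y) = \Ebeh[\rho(A,X)\mid Y]$ almost surely. Conditioning on $Y$ and applying the tower property,
\[
\Ebeh\bigl[\rho(A,X)^2 Y^2\bigr] = \Ebeh\bigl[\Ebeh[\rho(A,X)^2 \mid Y]\,Y^2\bigr],
\]
while by definition of conditional variance,
\[
\Ebeh[\rho(A,X)^2 \mid Y] = \Vbeh[\rho(A,X)\mid Y] + \bigl(\Ebeh[\rho(A,X)\mid Y]\bigr)^2 = \Vbeh[\rho(A,X)\mid Y] + w(Y)^2.
\]
Multiplying by $Y^2$ and taking expectations yields the claimed identity, and non-negativity is immediate since conditional variances are non-negative and $Y^2 \geq 0$.

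The derivation is short and the only substantive ingredient is Lemma~\ref{lemma:weights-est}; the main thing to be careful about is justifying the interchange of conditioning and multiplication by $Y^2$, which is trivial because $Y$ is $\sigma(Y)$-measurable and so can be pulled out of the inner conditional expectation. No integrability subtleties arise beyond assuming the second moments involved are finite, which is implicit in stating the variances.
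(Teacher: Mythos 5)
Your proposal is correct and follows essentially the same route as the paper: reduce to single-sample second moments via unbiasedness and i.i.d.\ sampling, condition on $Y$, and apply Lemma~\ref{lemma:weights-est} together with the identity $\Ebeh[\rho(A,X)^2\mid Y]=\Vbeh[\rho(A,X)\mid Y]+w(Y)^2$. No gaps; your remark on pulling $Y^2$ out of the conditional expectation is the only justification the paper leaves implicit.
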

\myparagraph{Intuition}
Proposition \ref{prop:var_mr} shows that the variance of MR estimator is smaller than that of the IPW estimator when the weights are known exactly. 
Moreover, the proposition also shows that the difference between the two variances will increases as the variance $\Vbeh\left[ \rho(A, X) \mid Y \right]$ increases. This variance is likely to be large when the policy shift between $\beh$ and $\tar$ is large, or when the dimensions of contexts $X$ and/or the actions $A$ is large, and therefore in these cases the MR estimator will perform increasingly better than the IPW estimator.
A similar phenomenon occurs for DR as we show next, even though in this case the variance of MR is not in general smaller than that of DR. 

\begin{proposition}[Variance comparison with DR estimator]\label{prop:var_dr}
When the weights $\rho(a, x)$ and $w(y)$ are known exactly and $\mu(A, X) \coloneqq \E[Y\mid X, A]$, we have that,
\begin{align*}
     \Vbeh[\thetadr] - \Vbeh[\thetamr]
    \geq \frac{1}{n} \Ebeh \left[ \Vbeh\left[\rho(A, X)\,Y \mid Y \right] -  \Vbeh\left[\rho(A, X)\,\mu(A, X) \mid X \right] \right].
\end{align*}
\end{proposition}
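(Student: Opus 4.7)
The plan is a three-step reduction: first express $\Vbeh[\thetadr]$ in a convenient form, then derive the variance gap between IPW and DR, and finally combine with Proposition~\ref{prop:var_mr} to reach the claim.

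Since $\thetadr$ is an empirical mean of i.i.d.\ copies of $Z := \rho(A,X)(Y - \mu(A,X)) + g(X)$, where $g(x) := \sum_{a'} \mu(a', x)\tar(a' \mid x)$, we have $\Vbeh[\thetadr] = \frac{1}{n}\Vbeh[Z]$. The first key observation is that, by a change of measure, $g(X) = \Etar[\mu(A,X) \mid X] = \Ebeh[\rho(A,X)\mu(A,X) \mid X]$. This representation is what will ultimately glue the DR-side and MR-side decompositions together.

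Next I would apply the law of total variance twice to both $\Vbeh[\rho Y]$ and $\Vbeh[Z]$. Conditioning on $X$ and using $\Ebeh[\rho(Y-\mu)\mid X] = 0$ gives $\Vbeh[Z] = \Ebeh[\Vbeh[\rho(Y-\mu) \mid X]] + \Vbeh[g(X)]$; a further conditioning on $(A,X)$, together with $\E[Y \mid A, X] = \mu(A,X)$, reduces $\Vbeh[\rho(Y-\mu) \mid X]$ to $\Ebeh[\rho^2\, \V[Y \mid A,X] \mid X]$. The same conditioning applied to $\Vbeh[\rho Y]$ yields $\Vbeh[\rho Y] = \Ebeh[\rho^2\, \V[Y \mid A, X]] + \Vbeh[\rho\mu]$. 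Subtracting, the common noise term cancels, and substituting $g(X) = \Ebeh[\rho\mu \mid X]$ before one last application of the law of total variance gives the intermediate identity
\begin{align*}
    \Vbeh[\thetaipw] - \Vbeh[\thetadr] = \frac{1}{n}\,\Ebeh\!\left[\Vbeh[\rho(A,X)\mu(A,X) \mid X]\right].
\end{align*}

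To finish, I would rewrite Proposition~\ref{prop:var_mr} using $\Vbeh[\rho Y \mid Y = y] = y^2\,\Vbeh[\rho \mid Y = y]$, so that $\Vbeh[\thetaipw] - \Vbeh[\thetamr] = \frac{1}{n}\,\Ebeh[\Vbeh[\rho(A,X) Y \mid Y]]$. Subtracting this from the previous identity yields the claimed expression (in fact with equality, which is stronger than the stated ``$\geq$''). The main obstacle, I expect, is keeping the conditioning structures straight: the DR variance naturally decomposes against $X$ while the MR variance naturally decomposes against $Y$, and the IPW estimator plays the role of a common pivot. The critical step that makes the two decompositions compatible is recognising $g(X) = \Ebeh[\rho\mu \mid X]$, which is exactly what allows $\Vbeh[g(X)]$ to be absorbed into a conditional-variance decomposition of $\Vbeh[\rho\mu]$.
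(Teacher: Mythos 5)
Your overall strategy --- pivoting through the IPW estimator, decomposing $\Vbeh[\thetadr]$ and $\Vbeh[\rho(A,X)Y]$ via repeated applications of the law of total variance, and then invoking Proposition~\ref{prop:var_mr} --- is exactly the skeleton of the paper's proof, and your individual computations are correct. However, there is a genuine gap in scope: you have silently assumed that the DR estimator is built with the \emph{true} conditional mean, i.e.\ $Z = \rho(A,X)(Y-\mu(A,X)) + g(X)$ with $\mu(a,x)=\E[Y\mid X=x,A=a]$. The DR estimator in this paper is defined with an arbitrary outcome model $\hat{\mu}(a,x)\approx\E[Y\mid X=x,A=a]$, and the proposition's hypothesis only fixes the weights $\rho$ and $w$ --- the clause ``$\mu(A,X)\coloneqq\E[Y\mid X,A]$'' merely defines the symbol appearing in the bound, not the model used inside $\thetadr$. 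This is why the statement is an inequality rather than an identity: carrying a general $\hat{\mu}$ through your decomposition produces the extra term
\begin{align*}
\frac{1}{n}\,\Ebeh\bigl[\Vbeh[\rho(A,X)\,(\mu(A,X)-\hat{\mu}(A,X))\mid X]\bigr]\;\geq\;0,
\end{align*}
which is dropped to obtain the lower bound. Your claim that the result holds ``in fact with equality'' is therefore incorrect for the proposition as stated; equality holds only in the special case where $\rho\,(\mu-\hat{\mu})$ is $\pbeh$-a.s.\ a function of $X$ alone (in particular when $\hat{\mu}=\mu$), which is the setting of the appendix result for the DR extensions but not of this proposition.

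The fix is mechanical given what you already have: replace $\mu$ by $\hat{\mu}$ in the definition of $Z$, note that $\Ebeh[Z\mid X,A] = \rho(A,X)(\mu(A,X)-\hat{\mu}(A,X)) + \Ebeh[\rho(A,X)\hat{\mu}(A,X)\mid X]$ is no longer a function of $X$ alone, apply the law of total variance once more conditioning on $X$, and observe that the conditional expectation given $X$ collapses to $\Ebeh[\rho(A,X)\mu(A,X)\mid X]$ because the $\hat{\mu}$ contributions cancel. Everything downstream of your intermediate identity then goes through unchanged with ``$=$'' replaced by ``$\geq$''.
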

\paragraph{Intuition}
Proposition \ref{prop:var_dr} shows that if $\Vbeh\left[ \rho(A, X)\,Y \mid Y \right]$ is greater than $\Vbeh\left[ \rho(A, X)\,\mu(A, X) \mid X \right]$ on average, the variance of the MR estimator will be less than that of the DR estimator. 
Intuitively, this will occur when the dimension of context space $\Xspace$ is high because in this case the conditional variance over $X$ and $A$, $\Vbeh\left[\rho(A, X)\,Y \mid Y \right]$ is likely to be greater than the conditional variance over $A$, $\Vbeh\left[ \rho(A, X)\,\mu(A, X) \mid X \right]$. Our empirical results in Appendix \ref{subsec:mips-empirical} are consistent with this intuition.
Additionally, we also provide theoretical comparisons with other extensions of DR, such as Switch-DR \citep{wang2017optimal} and DR with Optimistic Shrinkage (DRos) \citep{su2020doubly} in Appendix \ref{sec:dr-extensions}, and show that a similar intuition applies for these results. 
\flag{We emphasise that the well known results in \cite{wang2017optimal} which show that IPW and DR estimators achieve the optimal \emph{worst case} variance (where the worst case is taken over a class of possible outcome distributions $Y\mid X, A$) are not at odds with our results presented here (as the distribution of $Y\mid X, A$ is fixed in our setting).}

\subsubsection{Comparison with Marginalised Inverse Propensity Score (MIPS) \citep{saito2022off}}\label{subsec:mips-comparison}
In this section, we compare MR against the recently proposed Marginalised Inverse Propensity Score (MIPS) estimator \citep{saito2022off}, which uses a marginalisation technique to reduce variance and provides a robust OPE estimate specifically in contextual bandits with large action spaces. We prove that the MR estimator achieves lower variance than the MIPS estimator and doesn't require new assumptions.

\myparagraph{MIPS estimator}
As we mentioned earlier, the variance of the IPW estimator may be high when the action $A$ is high dimensional. To mitigate this, the MIPS estimator assumes the existence of a (potentially lower dimensional) action embedding $E$, which summarises all `relevant' information about the action $A$. Formally, this assumption can be written as follows: 
\begin{assumption}\label{assum:indep-mips}
    The action $A$ has no direct effect on the outcome $Y$, i.e., 
    $Y \indep A \mid X, E$.
\end{assumption}
For example, in the setting of a recommendation system where $A$ corresponds to the items recommended, $E$ may correspond to the item categories. Assumption \ref{assum:indep-mips} then intuitively means that item category $E$ encodes all relevant information about the item $A$ which determines the outcome $Y$. Assuming that such action embedding $E$ exists, \cite{saito2022off} prove that the MIPS estimator $\hat{\theta}_{\textup{MIPS}}$, defined as
\[
\hat{\theta}_{\textup{MIPS}} \coloneqq \frac{1}{n}\sum_{i=1}^n \frac{\ptar(e_i, x_i)}{\pbeh(e_i, x_i)}\, y_i = \frac{1}{n}\sum_{i=1}^n \frac{\ptar(e_i\mid x_i)}{\pbeh(e_i \mid x_i)}\, y_i,
\]
provides an unbiased estimator of target policy value $\Etar[Y]$. Moreover,
$\Vbeh[\hat{\theta}_{\textup{MIPS}}] \leq \Vbeh[\thetaipw]$.
\begin{wrapfigure}{l}{0.4\textwidth}
\centering
\begin{tikzpicture}

\node[circle,draw, minimum size=1.2cm] (R0) at (0,0) {\begin{small}$(X, A)$\end{small}
};
\node[circle,draw, minimum size=1.2cm] (R1) at (2,0) {\begin{small}$(X, E)$\end{small}};
\node[circle,draw, minimum size=1.2cm] (Y) at (4,0) {$Y$};

\path[->, thick] (R0) edge (R1);
\path[->, thick] (R1) edge (Y);

\end{tikzpicture}
\caption{Bayesian network corresponding to Assumption \ref{assum:indep-mips}.}
\label{fig:embedding_mips}
\vspace{-0.2cm}
\end{wrapfigure}
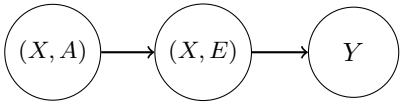

\myparagraph{Intuition}
The context-embedding pair $(X, E)$ can be seen as a representation of the context-action pair $(X, A)$ which contains less `redundant information' regarding the outcome $Y$. Intuitively, the MIPS estimator, which only considers the shift in the distribution of $(X, E)$ is therefore more efficient than the IPW estimator (which considers the shift in the distribution of $(X, A)$ instead). 

\myparagraph{MR achieves lower variance than MIPS}
Given the intuition above, we should achieve greater variance reduction as the amount of redundant information in the representation $(X, E)$ decreases. We formalise this in Appendix \ref{app:gmips} and show that the variance of MIPS estimator decreases as the representation gets closer to $Y$ in terms of information content. As a result, we achieve the greatest variance reduction by considering the marginal shift in the outcome $Y$ itself (as in MR) rather than the shift in the representation $(X, E)$ (as in MIPS). The following result formalizes this finding. 
\begin{theorem}\label{prop:mips_main_text}
    When the weights $w(y)$, $\frac{\ptar(e, x)}{\pbeh(e, x)}$ and $\rho(a, x)$ are known exactly, then under Assumption \ref{assum:indep-mips}, $\Ebeh[\thetamr] = \Ebeh[\hat{\theta}_{\textup{MIPS}}] = \Etar[Y]$, and $\Vbeh[\thetamr] \leq \Vbeh[\hat{\theta}_{\textup{MIPS}}] \leq \Vbeh[\thetaipw].$
\end{theorem}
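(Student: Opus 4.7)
The plan is to prove the theorem in three stages: unbiasedness of both estimators, then $\Vbeh[\hat{\theta}_{\textup{MIPS}}] \leq \Vbeh[\thetaipw]$, and finally $\Vbeh[\thetamr] \leq \Vbeh[\hat{\theta}_{\textup{MIPS}}]$. Unbiasedness of MR is immediate from the derivation of the MR estimator, namely $\Etar[Y] = \Ebeh[w(Y)\,Y]$. For MIPS, Assumption \ref{assum:indep-mips} yields $p(y\mid x,a,e) = p(y\mid x,e)$, so marginalising $A$ out of $\pbeh(x,a,e,y)$ gives $\pbeh(x,e,y) = p(y\mid x,e)\,\pbeh(x,e)$ and analogously for $\ptar$; substituting this into $\Ebeh[\frac{\ptar(E,X)}{\pbeh(E,X)}\,Y]$ cancels $\pbeh(x,e)$ and leaves $\int y\,\ptar(x,e,y)\,\mathrm{d}x\,\mathrm{d}e\,\mathrm{d}y = \Etar[Y]$.

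Both variance inequalities will follow from one application of the law of total variance each, once two conditional-expectation identities are established. The first identity is
\[
\Ebeh\!\left[\rho(A,X)\,Y \,\big|\, X,E,Y\right] \,=\, Y\,\frac{\ptar(E,X)}{\pbeh(E,X)}.
\]
I would prove it by invoking Assumption \ref{assum:indep-mips} to replace $\pbeh(a\mid x,e,y)$ by $\pbeh(a\mid x,e) = \beh(a\mid x)\,p(e\mid x,a)/\pbeh(e\mid x)$, and integrating $\rho(a,x)$ against this density: the $\beh(a\mid x)$ factor cancels the denominator of $\rho(a,x)$, leaving $\int \tar(a\mid x)\,p(e\mid x,a)\,\mathrm{d}a = \ptar(e\mid x)$, which matches $\ptar(e,x)/\pbeh(e,x)$ after multiplying and dividing by $p(x)$. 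Given this identity, $\V[\rho(A,X)\,Y] \geq \V[\E[\rho(A,X)\,Y\mid X,E,Y]]$ immediately yields $\Vbeh[\thetaipw] \geq \Vbeh[\hat{\theta}_{\textup{MIPS}}]$ after dividing by $n$.

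The second identity is
\[
\Ebeh\!\left[\frac{\ptar(E,X)}{\pbeh(E,X)}\,Y \,\Big|\, Y\right] \,=\, w(Y)\,Y,
\]
which I would derive by writing $\Ebeh[\ptar(E,X)/\pbeh(E,X)\mid Y=y] = \pbeh(y)^{-1}\int \ptar(e,x)\,p(y\mid x,e)\,\mathrm{d}x\,\mathrm{d}e$ after substituting the factorisation $\pbeh(e,x,y) = p(y\mid x,e)\,\pbeh(e,x)$ from Assumption \ref{assum:indep-mips}; the remaining integral equals $\ptar(y)$ by the analogous factorisation for the target policy, so the conditional expectation collapses to $w(y)$. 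A second application of the law of total variance then produces $\Vbeh[\hat{\theta}_{\textup{MIPS}}] \geq \Vbeh[\thetamr]$, completing the chain.

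The main obstacle I anticipate is the careful bookkeeping inside these two conditional-expectation identities: each requires invoking Assumption \ref{assum:indep-mips} at precisely the right moment to decouple $A$ from $Y$ given $(X,E)$, and then recognising that the leftover action integral collapses either to $\ptar(e\mid x)$ or to $\ptar(y)$. Once these identities are in hand, the variance inequalities are one-line consequences of $\V[Z] = \V[\E[Z\mid \mathcal{F}]] + \E[\V[Z\mid \mathcal{F}]] \geq \V[\E[Z\mid \mathcal{F}]]$, and the first equality of the theorem follows from the unbiasedness of both MR and MIPS.
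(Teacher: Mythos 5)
Your proposal is correct: both conditional-expectation identities hold under Assumption \ref{assum:indep-mips} (the first because $\pbeh(a\mid x,e,y)=\pbeh(a\mid x,e)$, the second because $p(y\mid x,e)$ is policy-independent once it does not depend on $a$), and each variance inequality then follows from a single application of $\V[Z]\geq \V[\E[Z\mid\mathcal{F}]]$. The paper takes a different route: it proves Theorem \ref{prop:mips_main_text} as a corollary of a more general statement (Proposition \ref{prop:mips_generalised}) about a chain of nested representations $(X,A)\to R^{(1)}\to R^{(2)}\to Y$, established by induction using the analogous tower identity $\frac{\ptar(r^{(j+1)})}{\pbeh(r^{(j+1)})}=\Ebeh\bigl[\frac{\ptar(R^{(j)})}{\pbeh(R^{(j)})}\mid R^{(j+1)}\bigr]$. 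Because in that general setting $Y$ is not a function of the intermediate representation, the paper must condition only on $R^{(j+1)}$, factor $Y$ out via the conditional independence, and finish with a Jensen step; your trick of including $Y$ in the conditioning set (conditioning on $(X,E,Y)$ for the first inequality and on $Y$ alone for the second) makes $Y$ measurable and collapses each comparison to a one-line law-of-total-variance argument, mirroring what the paper itself does in Proposition \ref{prop:var_mr}. The trade-off is scope: your argument is shorter and more self-contained for the specific MIPS comparison, while the paper's detour through the generalized chain also yields the monotone variance ordering across arbitrary nested embeddings, which it uses elsewhere in Appendix \ref{app:gmips}.
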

This analysis provides a link between the MR and MIPS estimators in the framework of contextual bandits, and shows that the MR estimator achieves lower variance than MIPS estimator while not requiring any additional assumptions (e.g.\ Assumption \ref{assum:indep-mips} as in MIPS). We also verify this empirically in Section \ref{sec:exp-synth} by reproducing the experimental setup in \cite{saito2022off} along with the MR baseline.

\subsubsection{Weight estimation error}\label{subsec:weight-estimation-error}
Our analysis so far assumes prior knowledge of the behavior policy $\beh$ and the marginal ratios $w(y)$. However, in practice, both quantities are often unknown and must be estimated from data. To this end, we assume access to an additional training dataset $\Dtr = \{(x^\tr_i, a^\tr_i, y^\tr_i)\}_{i=1}^m$ (for weight estimation), in addition to the evaluation dataset $\D = \{(x_i, a_i, y_i)\}_{i=1}^n$ (for computing the OPE estimate). 
The estimation of $\hat{w}(y)$ involves a two-step process that exclusively utilizes data from $\Dtr$:
\begin{enumerate}[label=(\roman*)]
    \item First, we estimate the policy ratio $\hat{\rho}(a, x) \approx \frac{\tar(a | x)}{\beh(a | x)}$. This can be achieved by estimating the behaviour policy $\hatbeh$, and defining $\hat{\rho}(a, x)\coloneqq \frac{\tar(a\mid x)}{\hatbeh(a\mid x)}$. Alternatively, $\hat{\rho}(a, x)$ can also be estimated directly by using density ratio estimation techniques as in \cite{sondhi2020balanced}.
    \item Secondly, we estimate the weights $\hat{w}(y)$ using Eq. \eqref{eq:weights-obj} with $\hat{\rho}$ instead of $\rho$.
\end{enumerate}

In practice, one may consider splitting $\Dtr$ for each estimation step outlined above. Moreover,
each approximation step may introduce bias and therefore, the MR estimator may have two sources of bias.
While classical OPE methods like IPW and DR also suffer from bias because of $\hat{\rho}$ estimation, the estimation of $\hat{w}(y)$ is specific to MR. However, we show below
that given any policy ratio estimate $\hat{\rho}$, if $\hat{w}(y)$ approximates $\Ebeh[\hat{\rho}(A, X)\mid Y=y]$ `well enough' (i.e., the estimation step (ii) shown above is `accurate enough'), 
then MR achieves a lower variance than IPW and incurs little extra bias.

\begin{proposition}\label{prop:bias-and-var-main}
Suppose that the IPW and MR estimators are defined as,
\[
\approxipw \coloneqq \frac{1}{n}\sum_{i=1}^n\hat{\rho}(a_i, x_i)\, y_i, \quad \textup{and }\quad \approxmr \coloneqq \frac{1}{n}\sum_{i=1}^n\hat{w}(y_i)\, y_i,
\]
and define the approximation error as $\epsilon \coloneqq \hat{w}(Y) - \tilde{w}(Y)$, where $\tilde{w}(Y) \coloneqq \Ebeh[\hat{\rho}(A, X)\mid Y]$. Then we have that, $\textup{Bias}(\approxmr) - \textup{Bias}(\approxipw) = \Ebeh[\epsilon\,Y]$. Moreover,
\begin{small}
\begin{align}
    \Vbeh[\approxipw] - \Vbeh[\approxmr]
    &= \frac{1}{n}(\underbrace{\Ebeh[\Vbeh[\hat{\rho}(A, X)\,Y\mid Y]]}_{\geq 0} - \Vbeh[\epsilon\,Y] - 2\,\textup{Cov}(\tilde{w}(Y)\,Y, \epsilon\,Y)). \label{eq:var-difference-approximate-weights}
\end{align}
\end{small}
\end{proposition}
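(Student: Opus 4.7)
The plan is to exploit the defining relation $\tilde{w}(Y) = \Ebeh[\hat{\rho}(A,X)\mid Y]$, rewrite $\hat{w}(Y) = \tilde{w}(Y) + \epsilon$, and then reduce everything to a standard law-of-total-variance computation. This is essentially a bookkeeping exercise once the right decomposition is in place; the main obstacle will simply be keeping the conditioning and expectation operators straight.

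For the bias statement, I would first note that since the evaluation sample is i.i.d.\ from $\pbeh$ and independent of the training data used to build $\hat{\rho}$ and $\hat{w}$, we have $\Ebeh[\approxipw] = \Ebeh[\hat{\rho}(A,X)\,Y]$ and $\Ebeh[\approxmr] = \Ebeh[\hat{w}(Y)\,Y]$. By the tower property applied to $(A,X)\mid Y$,
\begin{equation*}
\Ebeh[\hat{\rho}(A,X)\,Y] = \Ebeh\bigl[Y\cdot \Ebeh[\hat{\rho}(A,X)\mid Y]\bigr] = \Ebeh[\tilde{w}(Y)\,Y].
\end{equation*}
Substituting $\hat{w}(Y) = \tilde{w}(Y) + \epsilon$ into $\Ebeh[\approxmr]$ yields $\Ebeh[\approxmr] - \Ebeh[\approxipw] = \Ebeh[\epsilon\,Y]$. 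Since both estimators target the same quantity $\Etar[Y]$, the difference in biases equals this difference of expectations, which gives the first claim.

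For the variance statement, because the samples are i.i.d.\ it suffices to compare $\Vbeh[\hat{\rho}(A,X)\,Y]$ with $\Vbeh[\hat{w}(Y)\,Y]$ and divide by $n$. Apply the law of total variance conditioning on $Y$:
\begin{equation*}
\Vbeh[\hat{\rho}(A,X)\,Y] = \Ebeh\bigl[\Vbeh[\hat{\rho}(A,X)\,Y\mid Y]\bigr] + \Vbeh\bigl[\Ebeh[\hat{\rho}(A,X)\,Y\mid Y]\bigr],
\end{equation*}
and observe that $\Ebeh[\hat{\rho}(A,X)\,Y\mid Y] = Y\,\tilde{w}(Y)$, so the second term equals $\Vbeh[\tilde{w}(Y)\,Y]$. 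On the other side, expanding the quadratic in $\hat{w}(Y) = \tilde{w}(Y) + \epsilon$ gives
\begin{equation*}
\Vbeh[\hat{w}(Y)\,Y] = \Vbeh[\tilde{w}(Y)\,Y] + \Vbeh[\epsilon\,Y] + 2\,\textup{Cov}(\tilde{w}(Y)\,Y,\ \epsilon\,Y).
\end{equation*}
Subtracting these two expressions cancels the common $\Vbeh[\tilde{w}(Y)\,Y]$ term and produces exactly Eq.~\eqref{eq:var-difference-approximate-weights}.

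The one subtlety worth flagging is the independence between the estimation data $\Dtr$ (on which $\hat{\rho}$ and $\hat{w}$ are built) and the evaluation data $\D$: all expectations and variances above should be read as conditional on $\Dtr$, so that $\hat{\rho}$ and $\hat{w}$ may be treated as deterministic functions when applying the tower property. With that caveat, no inequality is invoked and the identities hold exactly; no further bounding or concentration argument is needed for the stated proposition.
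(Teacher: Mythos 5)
Your proposal is correct and follows essentially the same route as the paper's proof: the bias identity via the tower property on $(A,X)\mid Y$, and the variance identity via the law of total variance conditioning on $Y$ combined with the expansion $\hat{w}(Y)=\tilde{w}(Y)+\epsilon$. The remark about conditioning on the training data so that $\hat{\rho}$ and $\hat{w}$ are deterministic is a sensible clarification that the paper leaves implicit.
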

\myparagraph{Intuition} The $\epsilon$ term defined in Proposition \ref{prop:bias-and-var-main} denotes the error of the second approximation step outlined above. 
As a direct consequence of this result, we show in Appendix \ref{sec:wide_nns_weight_estimation} that as the error $\epsilon$ becomes small (specifically as $\Ebeh[\epsilon^2]\rightarrow 0$), the difference between biases of MR and IPW estimator becomes negligible.
Likewise, the terms $\Vbeh[\epsilon\,Y]$ and $\textup{Cov}(\tilde{w}(Y)\,Y, \epsilon\,Y)$ in Eq. \eqref{eq:var-difference-approximate-weights} will also be small and as a result the variance of MR will be lower than that of IPW (as the first term is positive). 

In fact, using recent results regarding the generalisation error of neural networks \citep{lai2023generalization}, we show that when using 2-layer wide neural networks to approximate the weights $\hat{w}(y)$, the estimation error $\epsilon$ declines with increasing training data size $m$. Specifically, under certain regularity assumptions we obtain $\Ebeh[\epsilon^2] = O(m^{-2/3})$. Using this we show that as the training data size $m$ increases, the biases of MR and IPW estimators become roughly equal with a high probability, and
\[
\Vbeh[\approxipw] - \Vbeh[\approxmr] = \frac{1}{n}\,\Ebeh[\Vbeh[\hat{\rho}(A, X)\,Y\mid Y]] + O(m^{-1/3}).
\]
Therefore the variance of MR estimator falls below that of IPW for large enough $m$. The empirical results shown in Appendix \ref{subsec:mips-empirical} are consistent with this result. Due to space constraints, the main technical result has been included in Appendix \ref{sec:wide_nns_weight_estimation}.

\subsection{Application to causal inference}\label{subsec:application-to-causal-inference}
 Beyond contextual bandits, the variance reduction properties of the MR estimator make it highly useful in a wide variety of other applications. Here, we show one such application in the field of causal inference, where MR can be used for the estimation of average treatment effect (ATE) \cite{pearl2009causality} and leads to some desirable properties in comparison to the conventional ATE estimation approaches. Specifically, we illustrate that the MR estimator for ATE utilizes the evaluation data $\D$ more efficiently and achieves lower variance than state-of-the-art ATE estimators and consequently provides more accurate ATE estimates.
To be concrete, the goal in this setting is to estimate ATE, defined as follows:
\[
\ate \coloneqq \E[Y(1)-Y(0)].
\]
Here $Y(a)$ corresponds to the outcome under a deterministic policy $\pi_a(a'\mid x) \coloneqq \ind(a'=a)$. Hence any OPE estimator can be used to estimate $\E[Y(a)]$ (and therefore ATE) by considering target policy $\tar = \pi_a$.
An important distinction between MR and existing approaches (like IPW or DR) is that, when estimating $\E[Y(a)]$, the existing approaches only use datapoints in $\D$ with $A=a$. To see why this is the case, we note that the policy ratios $\frac{\tar(A|X)}{\beh(A|X)} = \frac{\ind(A=a)}{\beh(A|X)}$ are zero when $A\neq a$. In contrast, the MR weights $\frac{\ptar(Y)}{\pbeh(Y)}$ are not necessarily zero for datapoints with $A\neq a$, and therefore the MR estimator uses all evaluation datapoints when estimating $\E[Y(a)]$. 

As such we show that MR applied to ATE estimation leads to a smaller variance than the existing approaches. Moreover, because MR is able to use all datapoints when estimating $\E[Y(a)]$, MR will generally be more accurate than the existing methods especially in the setting where the data is imbalanced, i.e., the number of datapoints with $A=a$ is small for a specific action $a$.
In Appendix \ref{app:causal-inference}, we formalise this variance reduction of the MR ATE estimator compared to IPW and DR estimators, by deriving analogous results to Propositions \ref{prop:var_mr} and \ref{prop:var_dr}. In addition, we also show empirically in Section \ref{subsec:causal-experiments} that the MR ATE estimator outperforms the most commonly used ATE estimators.

\section{Related Work}
Off-Policy evaluation is a central problem both in contextual bandits \citep{dudik2014doubly, wang2017optimal, liu2018breaking, farajtabar2018more, su2019continuous, su2020doubly, kallus2021optimal, metelli2021subgaussian, saito2020open} and in RL \citep{thomas2016data, xie2019advances, kallus2020off, liu2020understanding}. 
Existing OPE methodologies can be broadly categorised into Direct Method (DM), Inverse Probability Weighting (IPW), and Doubly Robust (DR). 
While DM typically has a low variance, it suffers from high bias when the reward model is misspecified \citep{voloshin2021empirical}. 
On the other hand, IPW \citep{horvitz1952generalization} and DR \citep{dudik2014doubly, wang2017optimal, su2020doubly} use policy ratios as importance weights when estimating policy value and suffer from high variance as overlap between behaviour and target policies increases or as the action/context space gets larger \citep{sachdeva2020off, saito2022off}. To circumvent this problem, techniques like weight clipping or normalisation \citep{swaminathan2015counterfactual, swaminathan2015the, chaudhuri2019london} are often employed, however, these can often increase bias.

In contrast to these approaches, \cite{saito2022off} propose MIPS, which considers the marginal shift in the distribution of a lower dimensional embedding of the action space. While this approach reduces the variance associated with IPW, we show in Section \ref{subsec:mips-comparison} that the MR estimator achieves a lower variance than MIPS while not requiring any additional assumptions (like Assumption \ref{assum:indep-mips}).

In the context of Reinforcement Learning (RL), various marginalisation techniques of importance weights have been used to propose OPE methodologies.
\cite{liu2018breaking, xie2019advances, kallus2020off} use methods which considers the shift in the marginal distribution of the states, and applies importance weighting with respect to this marginal shift rather than the trajectory distribution. Similarly, \cite{Fujimoto2021deep} use marginalisation for OPE in deep RL, where the goal is to consider the shift in marginal distributions of state and action. Although marginalization is a key trick of these estimators, these techniques do not consider the marginal shift in reward as in MR and are aimed at resolving the curse of horizon, a problem specific to RL. Apart from this, \cite{rowland2020conditional} propose a general framework of OPE based on conditional expectations of importance ratios for variance reduction. While their proposed framework includes reward conditioned importance ratios, this is not the main focus and there is little theoretical and empirical comparison of their proposed methodology with existing state-of-the-art methods like DR. 

Finally we note that the idea of approximating the ratio of intractable marginal densities via leveraging the fact that this ratio can be reformulated as the conditional expectation of a ratio of tractable densities is a standard idea in computational statistics \cite{meng1996simulating} and has been exploited more recently to perform likelihood-free inference \cite{brehmer2020mining}. In particular, while  \cite{meng1996simulating} typically approximates this expectation through Markov chain Monte Carlo, \cite{brehmer2020mining} uses regression instead, however without any theory.

\section{Empirical Evaluation}
In this section, we provide empirical evidence to support our theoretical results by investigating the performance of our MR estimator against the current state-of-the-art OPE methods. The code to reproduce our experiments has been made available at: \href{https://github.com/faaizT/MR-OPE}{\textcolor{blue}{github.com/faaizT/MR-OPE}}.

\subsection{Experiments on synthetic data}\label{sec:exp-synth}
For our synthetic data experiment, we reproduce the experimental setup for the synthetic data experiment in \cite{saito2022off} by reusing their code with minor modifications.
Specifically, $\Xspace \subseteq \mathbb{R}^d$, for various values of $d$ as described below. Likewise, the action space $\Aspace = \{0, \dots, n_a-1\}$, with $n_a$ taking a range of different values. Additional details regarding the reward function, behaviour policy $\beh$, and the estimation of weights $\hat{w}(y)$ have been included in Appendix \ref{subsec:mips-empirical} for completeness.

\myparagraph{Target policies} 
To investigate the effect of increasing policy shift, we define a class of policies,
\[
\pi^{\alpha^\ast}(a | x) = \alpha^\ast\,\ind(a = \arg\max_{a'\in \Aspace} q(x, a')) + \frac{1-\alpha^\ast}{|\Aspace|} \quad \textup{where} \quad q(x, a) \coloneqq \E[Y\mid X=x, A=a],
\]
where $\alpha^\ast \in [0, 1]$ allows us to control the shift between $\beh$ and $\tar$. In particular, as we show later, the shift between $\beh$ and $\tar$ increases as $\alpha^\ast \rightarrow 1$. Using the ground truth behaviour policy $\beh$, we generate a dataset which is split into training and evaluation datasets of sizes $m$ and $n$ respectively.

\myparagraph{Baselines} 
We compare our estimator with DM, IPW, DR and MIPS estimators. Our setup includes action embeddings $E$ satisfying Assumption \ref{assum:indep-mips}, and so MIPS is unbiased.
Additional baselines have been considered in Appendix \ref{subsec:mips-empirical}.
For MR, we split the training data to estimate $\hatbeh$ and $\hat{w}(y)$, whereas for all other baselines we use the entire training data to estimate $\hatbeh$ for a fair comparison.
\begin{figure}[t]
     \centering
     \begin{subfigure}[b]{0.5\textwidth}
         \centering
         \includegraphics[height=0.93in]{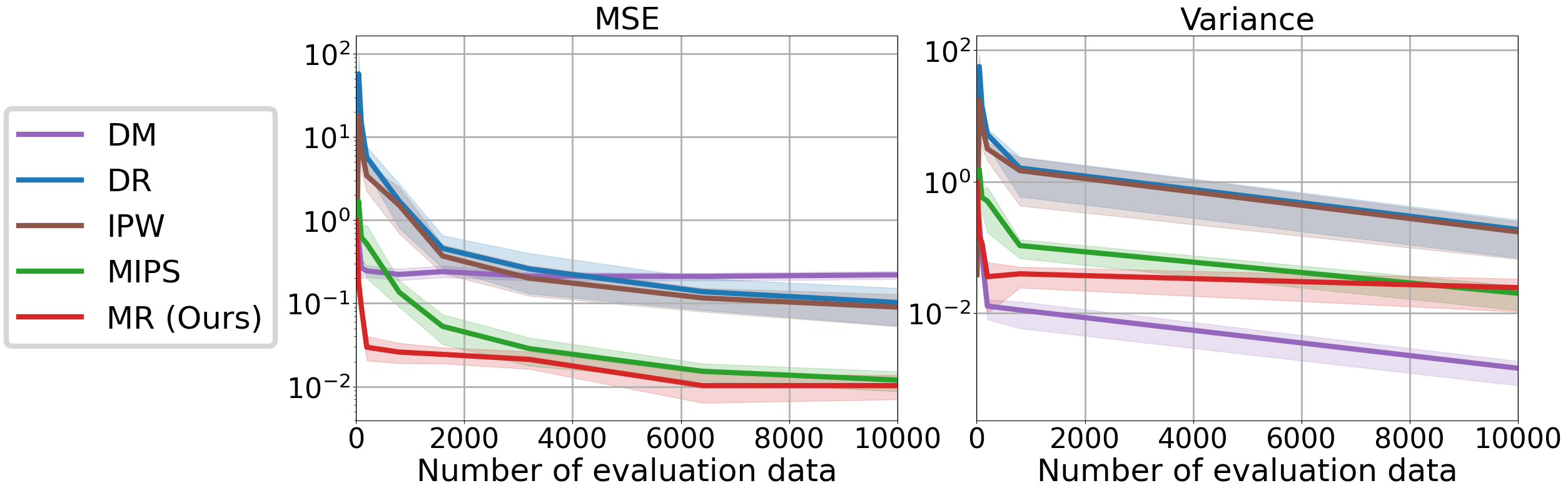}
         \caption{Results with varying size of evaluation dataset $n$.}
         \label{fig:mse-vs-neval}
     \end{subfigure}%
     \begin{subfigure}[b]{0.5\textwidth}
         \centering
         \includegraphics[height=0.93in]{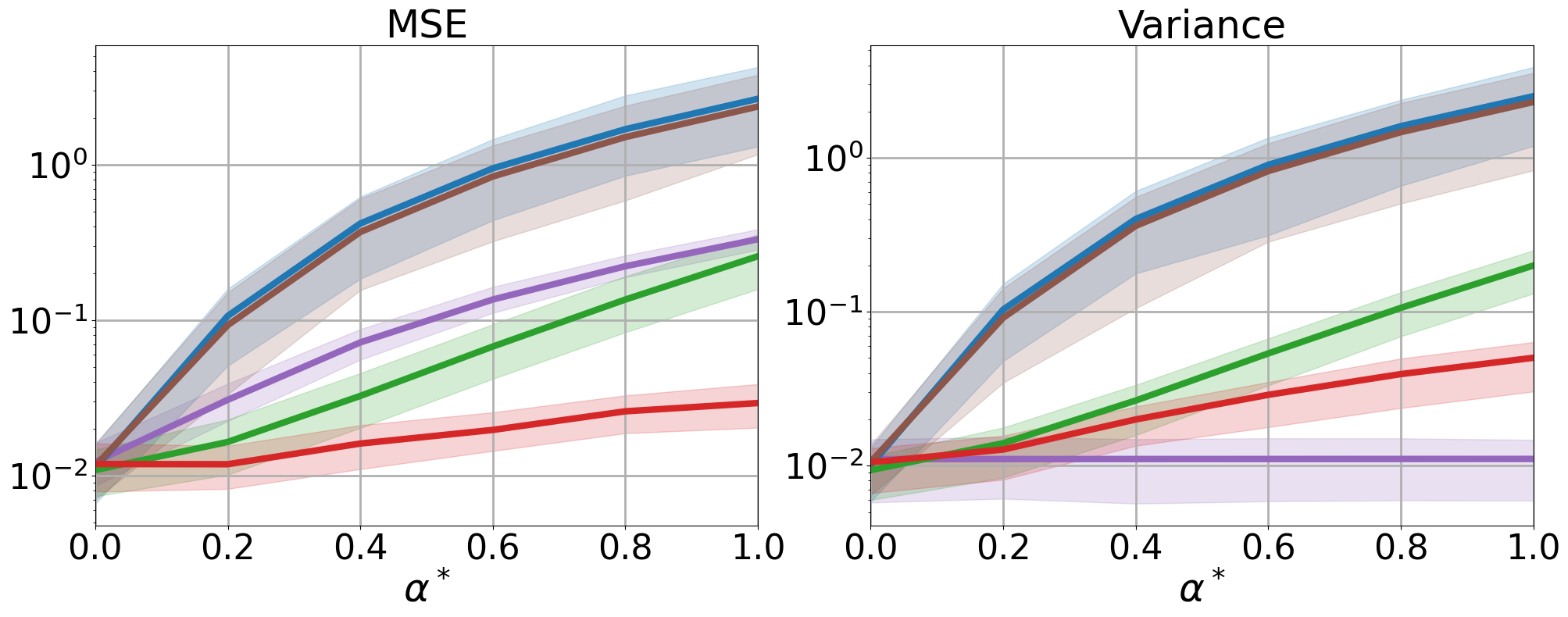}
         \caption{Results with varying $\alpha^\ast$.}
         \label{fig:mse-vs-betatar}
     \end{subfigure}\\
    \caption{Results for synthetic data experiment. In \ref{fig:mse-vs-neval} we have $\alpha^\ast=0.8$ and in \ref{fig:mse-vs-betatar} we have $n = 800$.}
    \label{fig:syn_results1}
\end{figure}

\myparagraph{Results}
We compute the target policy value using the $n$ evaluation datapoints. Here, the MSE of the estimators is computed over 10 different sets of logged data replicated with different seeds. The results presented have context dimension $d=1000$, number of actions $n_a=100$ and training data size $m=5000$. More experiments for a variety of parameter values are included in Appendix \ref{subsec:mips-empirical}.

\myparagraph{Varying number of evaluation data $n$} 
In Figure \ref{fig:mse-vs-neval} we plot the results with increasing size of evaluation data $n$ increases. MR achieves the smallest MSE among all the baselines considered when $n$ is small, with the MSE of MR being at least an order of magnitude smaller than every baseline for $n\leq 500$. This shows that MR is significantly more accurate than the baselines when the size of the evaluation data is small. As $n\rightarrow \infty$, the difference between the results for MR and MIPS decreases. However, MR attains smaller variance and MSE than MIPS generally, verifying our analysis in Section \ref{subsec:mips-comparison}.
Moreover, Figure \ref{fig:mse-vs-neval} shows that while the variance of MR is greater than that of DM, it still achieves the lowest MSE overall, owing to the high bias of DM.

\begin{wrapfigure}{r}{0.35\textwidth}
    \centering
    \includegraphics[width=0.35\textwidth]{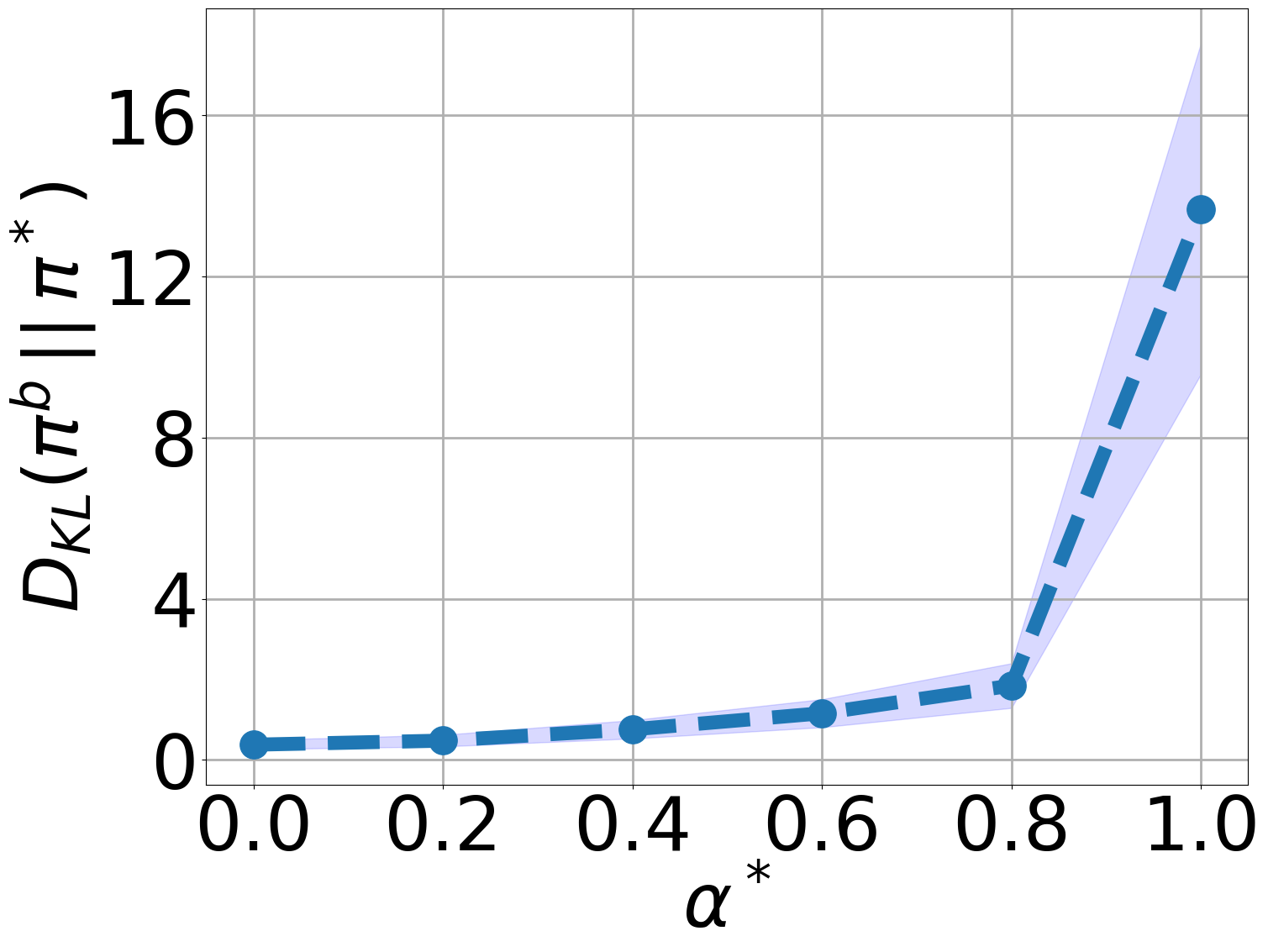}
\end{wrapfigure}
\myparagraph{Varying $\alpha^\ast$}
As $\alpha^\ast$ parameter of the target policy increases, so does the shift between the policies $\beh$ and $\pi^{\alpha^\ast}$ as illustrated by the figure on the right, which plots the KL-divergence $D_{\textup{KL}}(\beh\, || \, \pi^{\alpha^\ast})$ as a function of $\alpha$.
Figure \ref{fig:mse-vs-betatar} plots the results for increasing policy shift. 
Overall, the MSE of MR estimator is lowest among all the baselines. Moreover, while the MSE and variance of all estimators increase with increasing $\alpha^\ast$ the increase in these quantities is lower for the MR estimator than for the other baselines. Therefore, the relative performance of MR estimator improves with increasing policy shift and MR remains robust to increase in policy shift.

\myparagraph{Additional ablation studies}
In Appendix \ref{subsec:mips-empirical}, we investigate the effect of varying context dimensions $d$, number of actions $n_a$ and number of training data $m$. In every case, we observe that the MR estimator has a smaller MSE than all other baselines considered. In particular, MR remains robust to increasing $n_a$ whereas the MSE and variance of IPW and DR estimators degrade substantially when $n_a \geq 2000$. Likewise, MR outperforms the baselines even when the training data size $m$ is small.

\begin{table}[!htp]
    \centering
    \caption{Mean squared error of target policy value with standard errors over 10 different seeds for different classification datasets. Here, number of evaluation data $n=1000$, and $\alpha^\ast=0.6$.}
    \label{tab:classification-dataset-results}
    \begin{tiny}

\begin{tabular}{llllllll}
\toprule
Dataset &             Digits &               Letter &          OptDigits &          PenDigits &           SatImage  &              Mnist & CIFAR-100\\
\midrule
DM        &  0.1508$\pm$0.0015 &    0.0886$\pm$0.0026 &  0.0485$\pm$0.0016 &   0.0520$\pm$0.0016 &  0.0208$\pm$0.0009  &  0.1109$\pm$0.0014 & 0.0020$\pm$0.0001 \\
DR        &    0.1334$\pm$0.0400 &    \red{35.085$\pm$17.768} &  0.0464$\pm$0.0061 &  0.2343$\pm$0.1404 &   0.0560$\pm$0.0395 &  0.2617$\pm$0.0139 & \red{3823.9$\pm$2023.2} \\
DRos      &  0.0847$\pm$0.0025 &    0.2363$\pm$0.0586 &  0.0384$\pm$0.0025 &  0.0138$\pm$0.0029 &  0.0078$\pm$0.0008 &  0.2151$\pm$0.0061 & 0.2628$\pm$0.1087 \\
IPW       &  0.1632$\pm$0.0462 &  \red{45.253$\pm$22.057} &   0.0844$\pm$0.0056 &  0.1342$\pm$0.0531 &    0.0900$\pm$0.0676 & 0.3359$\pm$0.0118 & \red{4116.9$\pm$2097.9}\\
SwitchDR  &  0.0982$\pm$0.0032 &    0.2387$\pm$0.0507 &  0.0557$\pm$0.0047 &   0.0342$\pm$0.0090 &  0.0136$\pm$0.0012  &   0.2750$\pm$0.0102 & 1.1644$\pm$0.8227 \\
MR (Ours) &  \textbf{0.0034$\pm$0.0001} &    \textbf{0.0018$\pm$0.0004} &  \textbf{0.0006$\pm$0.0002} &  \textbf{0.0008$\pm$0.0002} &  \textbf{0.0016$\pm$0.0003} &  \textbf{0.0121$\pm$0.0009} &  \textbf{0.0007$\pm$0.0002}\\
\bottomrule
\end{tabular}

\end{tiny}
\end{table}
\subsection{Experiments on classification datasets}
Following previous works on OPE in contextual bandits \citep{dudik2014doubly, kallus2021optimal, mehrdad2018more,wang2017optimal}, we transform classification datasets into contextual bandit feedback data in this experiment.
We consider five UCI classification datasets \citep{dua2019uci} as well as Mnist \citep{deng2012mnist} and CIFAR-100 \citep{krizhevsky2009learning} datasets, each of which comprises $\{(x_i, a^\gt_i)\}_{i}$, where $x_i\in \Xspace$ are feature vectors and $a^\gt_i\in \Aspace$ are the ground-truth labels.
In the contextual bandits setup, the feature vectors $x_i$ are considered to be the contexts, whereas the actions correspond to the possible class of labels. For the context vector $x_i$ and the action $a_i$, the reward $y_i$ is defined as $y_i \coloneqq \ind(a_i = a^\gt_i)$, i.e., the reward is 1 when the action is the same as the ground truth label and 0 otherwise. Here, the baselines considered include the DM, IPW and DR estimators as well as Switch-DR \citep{wang2017optimal} and DR with Optimistic Shrinkage (DRos) \citep{su2020doubly}. We do not consider a MIPS baseline here as there is no natural embedding $E$ of $A$. Additional details are provided in Appendix \ref{subsec:additional-experiments-classification}. 

In Table \ref{tab:classification-dataset-results}, we present the results with number of evaluation data $n=1000$ and number of training data $m=500$. 
The table shows that across all datasets, MR achieves the lowest MSE among all methods. \flag{Moreover, for the Letter and CIFAR-100 datasets the IPW and DR yield large bias and variance arising from poor policy estimates $\hatbeh$. Despite this, the MR estimator which utilizes the \emph{same} $\hatbeh$ for the estimation of $\hat{w}(y)$ leads to much more accurate results.} We also verify that MR outperforms the baselines for increasing policy shift and evaluation data $n$ in Appendix \ref{subsec:additional-experiments-classification}.

\subsection{Application to ATE estimation}\label{subsec:causal-experiments}
In this experiment, we investigate the empirical performance of the MR estimator for ATE estimation. 

\myparagraph{Twins dataset}
We use the Twins dataset studied in \cite{louizos2017causal}, which comprises data from twin births in the USA between 1989-1991. The treatment $a=1$ corresponds to being born the heavier twin and the outcome $Y$ corresponds to the mortality of each of the twins in their first year of life. Specifically, $Y(1)$ corresponds to the mortality of the heavier twin (and likewise for $Y(0)$). To simulate the observational study, we follow a similar strategy as in \cite{louizos2017causal} to selectively hide one of the two twins as explained in Appendix \ref{app:ate-empirical}. We obtain a total of 11,984 datapoints, of which 5000 datapoints are used to train the behaviour policy $\hatbeh$ and outcome model $\hat{q}(x, a)$.

\begin{table}[t]
    \centering
    \caption{Mean absolute ATE estimation error $\epsilon_\ate$ with standard errors over 10 different seeds, for increasing number of evaluation data $n$.}
    \label{tab:ate_errors-main}
    \begin{tiny}
    \begin{tabular}{lllll}
\toprule
$n$ &             50   &             200  &             1600 &             3200 \\
\midrule
DM       &  0.092$\pm$0.003 &  0.092$\pm$0.003 &  0.092$\pm$0.004 &  0.092$\pm$0.004 \\
DR       &  0.101$\pm$0.024 &  \textbf{0.065$\pm$0.009} &  0.071$\pm$0.005 &  0.069$\pm$0.004 \\
DRos     &    0.100$\pm$0.017 &  0.089$\pm$0.006 &   0.093$\pm$0.004 &  0.087$\pm$0.004 \\
IPW      &  0.092$\pm$0.024 &  0.088$\pm$0.014 &  0.067$\pm$0.007 &  0.067$\pm$0.007 \\
SwitchDR &  0.101$\pm$0.024 &  \textbf{0.065$\pm$0.009} &  0.071$\pm$0.005 &  0.069$\pm$0.004 \\
MR (Ours)      &  \textbf{0.062$\pm$0.007} &  \textbf{0.065$\pm$0.007} &  \textbf{0.061$\pm$0.005} &  \textbf{0.061$\pm$0.006} \\
\bottomrule
\end{tabular}
\end{tiny}
\end{table}
Here, we consider the same baselines as the classification data experiments in previous section.
For our evaluation, we consider the absolute error in ATE estimation, $\epsilon_\ate$, defined as:
$
\epsilon_\ate \coloneqq | \hat{\theta}^{(n)}_\ate - \theta_\ate |.
$
Here, $\hat{\theta}^{(n)}_\ate$ denotes the value of the ATE estimated using $n$ evaluation datapoints.
We compute the ATE value using the $n$ evaluation datapoints, over 10 different sets of observational data (using different seeds). Table \ref{tab:ate_errors-main} shows that MR achieves the lowest estimation error $\epsilon_\ate$ for all values of $n$ considered here. While the performance of other baselines improves with increasing $n$, MR outperforms them all. 

\section{Discussion}

In this paper, we proposed an OPE method for contextual bandits called marginal ratio (MR) estimator, which considers only the shift in the marginal distribution of the outcomes resulting from the policy shift. Our theoretical and empirical analysis showed that MR achieves better variance and MSE compared to the current state-of-the-art methods and is more data efficient overall. Additionally, we demonstrated that MR applied to ATE estimation provides more accurate results than most commonly used methods. Next, we discuss limitations of our methodology and possible avenues for future work.

\myparagraph{Limitations}
The MR estimator requires the additional step of estimating $\hat{w}(y)$ which may introduce an additional source of bias in the value estimation. However, $\hat{w}(y)$ can be estimated by solving a simple 1d regression problem, and as we show empirically in Appendix \ref{app:experiments}, MR achieves the smallest bias among all baselines considered in most cases. Most notably, our ablation study in Appendix \ref{subsec:mips-empirical} shows that even when the training data is reasonably small, MR outperforms the baselines considered. 

\myparagraph{Future work}
The MR estimator can also be applied to policy optimisation problems, where the data collected using an `old' policy is used to learn a new policy. This approach has been used in Proximal Policy Optimisation (PPO) \citep{schulman2017proximal} for example, which has gained immense popularity and has been applied to reinforcement learning with human feedback (RLHF) \citep{lambert2022illustrating}. We believe that the MR estimator applied to these methodologies could lead to improvements in the stability and convergence of these optimisation schemes, given its favourable variance properties.

\section*{Acknowledgements}
We would like to thank Jake Fawkes, Siu Lun Chau, Shahine Bouabid and Robert Hu for their useful feedback. 
We also appreciate the insights and constructive criticisms provided by the anonymous reviewers.
MFT acknowledges his PhD funding from Google DeepMind.

\bibliography{sample}

\appendix

\newpage

\section{Proofs}
\begin{proof}[Proof of Lemma \ref{lemma:weights-est}]
First, we express the weights $w(y)$ as the conditional expectation as follows:
\begin{align*}
    w(y) &= \frac{\ptar(y)}{\pbeh(y)} \\
    &= \int_{\Xspace, \Aspace} \frac{\ptar(x,a,y)}{\pbeh(y)}\,\mathrm{d}a\, \mathrm{d}x\\
    &= \int_{\Xspace, \Aspace} \frac{\ptar(x,a,y)}{\pbeh(y)}\,\frac{\pbeh(x, a\mid y)}{\pbeh(x, a\mid y)}\,\mathrm{d}a\, \mathrm{d}x\\
    &= \int_{\Xspace, \Aspace} \frac{\ptar(x,a,y)}{\pbeh(x, a, y)}\,\pbeh(x, a\mid y)\,\mathrm{d}a\, \mathrm{d}x\\
    &= \int_{\Xspace, \Aspace} \rho(a, x)\,\pbeh(x, a\mid y)\,\mathrm{d}a\, \mathrm{d}x\\
    &= \Ebeh[\rho(A, X)\mid Y=y],
\end{align*}
where $\rho(a, x) = \frac{\ptar(x, a, y)}{\pbeh(x, a, y)} = \frac{\tar(a\mid x)}{\beh(a\mid x)}$.
Since conditional expectations can be defined as the solution of regression problem, the result follows. 
\end{proof}

\begin{proof}[Proof of Proposition \ref{tv_prop}] We have
\begin{align*}
    \textup{D}_f\left(\ptar(x,a,y)\,||\, \pbeh(x,a,y)\right) &= \Ebeh\left[ f\left( \frac{\ptar(X,A,Y)}{\pbeh(X,A,Y)}\right) \right]\\
    &= \Ebeh\left[ f\left( \frac{\tar(A\mid X)}{\beh(A\mid X)}\right) \right]\\
    &= \Ebeh\left[\Ebeh\left[ f\left( \frac{\tar(A\mid X)}{\beh(A\mid X)}\right) \Bigg| Y \right]\right]\\
    &\geq \Ebeh\left[ f\left( \Ebeh\left[\frac{\tar(A\mid X)}{\beh(A\mid X)}\Bigg| Y \right]\right) \right]\quad\text{(Jensen's inequality)}\\
    &= \Ebeh\left[ f\left( \frac{\ptar(Y)}{\pbeh(Y)} \right) \right]\\
    &= \textup{D}_f\left(\ptar(y)\,||\, \pbeh(y)\right).
\end{align*}
\end{proof}

\begin{proof}[Proof of Proposition \ref{prop:var_mr}]
Since $\Ebeh[\thetaipw] = \Ebeh[\thetamr] = \Etar[Y]$, we have that,  
\begin{align*}
    \Vbeh[\thetaipw] - \Vbeh[\thetamr] &= \Ebeh[\thetaipw]^2 - \Ebeh[\thetamr]^2 \\
    &= \frac{1}{n} \left(\Ebeh\left[\rho(A, X)^2\, Y^2 \right] - \Ebeh\left[w(Y)^2\, Y^2 \right] \right)\\
    &= \frac{1}{n} \left(\Ebeh\left[\Ebeh[\rho(A, X)^2\mid Y]\, Y^2 \right] - \Ebeh\left[w(Y)^2\, Y^2 \right] \right)\\
    &= \frac{1}{n} \left(\Ebeh\left[\Ebeh[\rho(A, X)^2\mid Y]\, Y^2 \right] - \Ebeh\left[\Ebeh[\rho(A, X)\mid Y]^2\, Y^2 \right] \right)\\
    &= \frac{1}{n} \Ebeh \left[ \Vbeh\left[ \rho(A, X) \mid Y \right]\, Y^2 \right].
\end{align*}
In the second last step above, we use the fact that $w(y) = \Ebeh[\rho(A, X)\mid Y=y]$. 
\end{proof}

\begin{proof}[Proof of Proposition \ref{prop:var_dr}]
Let $\hat{\mu}(a, x) \approx \E[Y\mid X=x, A=a]$ denote the outcome model in DR estimator. Then, using multiple applications of the law of total variance we get that
\begin{align*}
    n\, \Vbeh[\thetadr] &= \Vbeh\left[\rho(A, X)\,(Y - \hat{\mu}(A, X)) + \sum_{a'\in \Aspace} \hat{\mu}(a', X)\,\tar(a'\mid X)\right]\\
    &= \Vbeh\left[\rho(A, X)\,(Y - \hat{\mu}(A, X)) + \Etar[\hat{\mu}(A, X)\mid X]\right]\\
    &= \Ebeh[\Vbeh[\rho(A, X)\,(Y - \hat{\mu}(A, X)) + \Etar[\hat{\mu}(A, X)\mid X]\mid X, A]]\\
    &\quad+ \Vbeh[\Ebeh[\rho(A, X)\,(Y - \hat{\mu}(A, X)) + \Etar[\hat{\mu}(A, X)\mid X]\mid X, A]]\\
    &= \Ebeh[\rho(A, X)^2 \var[Y\mid X, A]] \\
    &\quad+ \Vbeh[\Ebeh[\rho(A, X)\,(Y - \hat{\mu}(A, X)) + \Ebeh[\rho(A, X)\,\hat{\mu}(A, X)\mid X]\mid X, A]]\\
    &= \Ebeh[\rho(A, X)^2 \var[Y\mid X, A]] \\
    &\quad+ \Vbeh[\rho(A, X)\,(\mu(A, X) - \hat{\mu}(A, X)) + \Ebeh[\rho(A, X)\,\hat{\mu}(A, X)\mid X]]\\
    &= \Ebeh[\rho(A, X)^2 \var[Y\mid X, A]] \\
    &\quad+ \Vbeh[\Ebeh[\rho(A, X)\,(\mu(A, X) - \hat{\mu}(A, X)) + \Ebeh[\rho(A, X)\,\hat{\mu}(A, X)\mid X]\mid X]] \\
    &\quad+ \Ebeh[\Vbeh[\rho(A, X)\,(\mu(A, X) - \hat{\mu}(A, X)) + \Ebeh[\rho(A, X)\,\hat{\mu}(A, X)\mid X]\mid X]]\\
    &= \Ebeh[\rho(A, X)^2 \var[Y\mid X, A]]+ \Vbeh[\Ebeh[\rho(A, X)\,\mu(A, X)\mid X]] \\
    &\quad+ \Ebeh[\Vbeh[\rho(A, X)\,(\mu(A, X) - \hat{\mu}(A, X))\mid X]]\\
    &\geq \Ebeh[\rho(A, X)^2 \var[Y\mid X, A]] + \Vbeh[\Ebeh[\rho(A, X)\,\mu(A, X)\mid X]].
\end{align*}
Using this, we get that
\begin{align*}
    &n (\Vbeh[\thetadr] - \Vbeh[\thetamr]) \\
    &\quad\geq \Ebeh[\rho(A, X)^2 \var[Y\mid X, A]] +  \Vbeh \left[\Ebeh[\rho(A, X) \,\mu(A, X)\mid X]\right] - \Vbeh[w(Y)\,Y].
\end{align*}
Again, using the law of total variance,
\begin{align*}
    \Vbeh[\rho(A, X)\,Y] &= \Ebeh[\Vbeh[\rho(A, X)\,Y \mid X, A]] + \Vbeh[\Ebeh[\rho(A, X)\,Y\mid X, A]]\\
    &= \Ebeh[\rho(A, X)^2\var[Y \mid X, A]] + \Vbeh[ \rho(A, X)\,\mu(A, X)]\\
    &= \Ebeh[\rho(A, X)^2\var[Y \mid X, A]] + \Vbeh \left[\Ebeh[\rho(A, X) \,\mu(A, X)\mid X]\right] \\
    &\quad+ \Ebeh \left[\Vbeh[\rho(A, X) \,\mu(A, X)\mid X]\right].
\end{align*}
Rearranging and substituting back into the expression earlier, we get that
\begin{align*}
    &n (\Vbeh[\thetadr] - \Vbeh[\thetamr]) \\
    &\quad\geq \Vbeh[\rho(A, X)\,Y] - \Ebeh \left[\Vbeh[\rho(A, X) \,\mu(A, X)\mid X]\right] - \Vbeh[w(Y)\,Y].
\end{align*}
Now, from Proposition \ref{prop:var_mr} we know that \begin{align*}
    n (\Vbeh[\thetaipw] - \Vbeh[\thetamr]) = \Vbeh[\rho(A, X)\,Y] - \Vbeh[w(Y)\,Y] = \Ebeh \left[ \Vbeh\left[ \rho(A, X) \mid Y \right]\, Y^2 \right].
\end{align*}
Therefore, 
\begin{align*}
    &n (\Vbeh[\thetadr] - \Vbeh[\thetamr]) \\
    &\quad\geq \Ebeh \left[ \Vbeh\left[ \rho(A, X) \mid Y \right]\, Y^2 \right] - \Ebeh \left[\Vbeh[\rho(A, X) \,\mu(A, X)\mid X]\right]\\
    &\quad= \Ebeh \left[\Vbeh\left[ \rho(A, X)\,Y \mid Y \right] - \Vbeh[\rho(A, X) \,\mu(A, X)\mid X] \right].
\end{align*}
\end{proof}

\begin{proof}[Proof of Theorem \ref{prop:mips_main_text}]
This result follows straightforwardly from Proposition \ref{prop:mips_generalised} in Appendix \ref{app:gmips}.    
\end{proof}

\begin{proof}[Proof of Proposition \ref{prop:bias-and-var-main}]
\begin{align*}
    \textup{Bias}(\thetaipw) &= \Ebeh[\hat{\rho}(A, X)\, Y] - \Etar[Y] \\
    &= \Ebeh\left[\Ebeh[\hat{\rho}(A, X)\mid Y]\,Y \right] - \Etar[Y]  \\
    &= \Ebeh[\hat{w}(Y)\, Y] - \Ebeh[\epsilon\, Y] - \Etar[Y] \\
    &= \textup{Bias}(\thetamr) - \Ebeh[\epsilon\, Y].
\end{align*}
Next, to prove the variance result, we first use the law of total variance to obtain
\begin{align*}
    \Vbeh[\thetaipw] &= \frac{1}{n} \Vbeh[\hat{\rho}(A, X)\,Y]\\
    &= \frac{1}{n} \left( \Vbeh[\Ebeh[\hat{\rho}(A, X)\,Y\mid Y]] + \Ebeh[\Vbeh[\hat{\rho}(A, X)\,Y\mid Y]]\right)\\
    &= \frac{1}{n} \left( \Vbeh[\tilde{w}(Y)\,Y] + \Ebeh[\Vbeh[\hat{\rho}(A, X)\,Y\mid Y]]\right).
\end{align*}
Moreover, using the fact that $\hat{w}(Y) = \tilde{w}(Y) + \epsilon$ we get that,
\begin{align*}
    \Vbeh[\thetamr] &= \frac{1}{n} \Vbeh[\hat{w}(Y)\,Y]\\
    &= \frac{1}{n} \Vbeh[\left(\tilde{w}(Y) + \epsilon \right)\,Y]\\
    &= \frac{1}{n} \left( \Vbeh[\tilde{w}(Y)\,Y] + \Vbeh[\epsilon\,Y] + 2\,\textup{Cov}(\tilde{w}(Y)\,Y, \epsilon\,Y)\right).
\end{align*}
Putting together the two variance expressions derived above, we get that
\begin{align*}
    &\Vbeh[\thetaipw] - \Vbeh[\thetamr]\\
    &\quad=
    \frac{1}{n}\left(\Ebeh[\Vbeh[\hat{\rho}(A, X)\mid Y]\,Y^2] - \Vbeh[\epsilon\,Y] - 2\,\textup{Cov}(\tilde{w}(Y)\,Y, \epsilon\,Y) \right).
\end{align*}

\end{proof}

\section{Comparison with extensions of the doubly robust estimator}\label{sec:dr-extensions}
In this section, we theoretically investigate the variance of MR against the commonly used extensions of the DR estimator, namely Switch-DR \citep{wang2017optimal} and DR with Optimistic Shrinkage (DRos) \citep{su2020doubly}. At a high level, these estimators seek to reduce the variance of the vanilla DR estimator by considering modified importance weights, thereby trading off the variance for additional bias.
Below, we provide the explicit definitions of these estimators for completeness.

\paragraph{Switch-DR estimator}
The original DR estimator can still have a high variance when the importance weights are large due to a large policy shift. Switch-DR \cite{wang2017optimal} aims to circumvent this problem by switching to DM when the importance weights are large:
\[
\thetaswitch \coloneqq \frac{1}{n} \sum_{i=1}^n \rho(a_i, x_i)\,(y_i - \hat{\mu}(a_i, x_i))\ind(\rho(a_i, x_i) \leq \tau) + \hat{\eta}(\tar),
\]
where $\tau \geq 0$ is a hyperparameter, $\hat{\mu}(a, x) \approx \E[Y \mid X=x, A=a]$ is the outcome model, and 
$$
\hat{\eta}(\tar) = \frac{1}{n} \sum_{i=1}^n \sum_{a'\in \Aspace} \hat{\mu}(a', x_i) \tar(a'\mid x_i) \approx \E_{\tar}[\hat{\mu}(A, X)]
$$
where $a_i^* \sim \tar(\cdot \mid x_i)$.

\paragraph{Doubly Robust with Optimal Shrinkage (DRos)}
DRos proposed by \citep{su2020doubly} uses new weights $\hat{\rho}_\lambda(a_i, x_i)$ which directly minimises sharp bounds on the MSE of the resulting estimator,
\[
\thetadros \coloneqq \frac{1}{n} \sum_{i=1}^n \hat{\rho}_\lambda(a_i, x_i)\,(y_i - \hat{\mu}(a_i, x_i)) + \hat{\eta}(\tar),
\]
where $\lambda \geq 0$ is a pre-defined hyperparameter and $\hat{\rho}_\lambda$ is defined as
\[
\hat{\rho}_\lambda(a, x) \coloneqq \frac{\lambda}{\rho^2(a, x) + \lambda}\, \rho(a, x).
\]
When $\lambda = 0$, $\hat{\rho}_\lambda(a, x) = 0$ leads to DM, whereas as $\lambda \rightarrow \infty$, $\hat{\rho}_\lambda(a, x) \rightarrow \rho(a, x)$ leading to DR.

More generally, both of these estimators can be written as follows:
\[
\hat{\theta}_{\textup{DR}}^{\tilde{\rho}} \coloneqq \frac{1}{n} \sum_{i=1}^n \tilde{\rho}(a_i, x_i)\,(y_i - \hat{\mu}(a_i, x_i)) + \hat{\eta}(\tar).
\]
Here, when $\tilde{\rho}(a, x) = \rho(a, x)\ind(\rho(a_i, x_i) \leq \tau)$, we recover the Switch-DR estimator and likewise when $\tilde{\rho}(a, x) = \hat{\rho}_\lambda(a, x)$, we recover DRos. 

\subsection{Variance comparison with the DR extensions}
Next, we provide a theoretical result comparing the variance of the MR estimator with these DR extension methods.
\begin{proposition}\label{prop:var_dr_extensions}
    When the weights $w(y)$ are known exactly and the outcome model is exact, i.e., $\hat{\mu}(a, x) = \mu(a, x) = \E[Y \mid X=x, A=a]$ in the DR estimator $\hat{\theta}_{\textup{DR}}^{\tilde{\rho}}$ defined above,
    \begin{align*}
    \Vbeh[\hat{\theta}_{\textup{DR}}^{\tilde{\rho}}] - \Vbeh[\thetamr] \geq \frac{1}{n} \Ebeh \left[ \Vbeh\left[ \rho(A, X) \mid Y \right]\, Y^2 -  \Vbeh\left[ \rho(A, X)\mu(A, X) \mid X \right] \right] - \Delta,
\end{align*}
where $\Delta \coloneqq \frac{1}{n}\Ebeh\left[(\rho^2(A, X) - \tilde{\rho}^2(A, X))\,\V[Y\mid X, A]\right]$. 
\end{proposition}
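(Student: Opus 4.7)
The plan is to mimic the variance decomposition used in the proof of Proposition~\ref{prop:var_dr} directly for $\hat{\theta}_{\textup{DR}}^{\tilde{\rho}}$, and then subtract it from the analogous decomposition of $\Vbeh[\thetadr]$. Since $\hat{\theta}_{\textup{DR}}^{\tilde{\rho}}$ and $\thetadr$ differ only in the importance weight attached to the residual $Y - \hat{\mu}(A,X)$, most of the variance terms will be identical and the difference will reduce cleanly to $-\Delta$. Combining this with the lower bound on $\Vbeh[\thetadr] - \Vbeh[\thetamr]$ provided by Proposition~\ref{prop:var_dr} then yields the statement.

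Concretely, the $n$ summands of $\hat{\theta}_{\textup{DR}}^{\tilde{\rho}}$ are i.i.d., so I would first write
\[
n\,\Vbeh[\hat{\theta}_{\textup{DR}}^{\tilde{\rho}}] = \Vbeh\!\left[\tilde{\rho}(A,X)\bigl(Y - \hat{\mu}(A,X)\bigr) + \Etar[\hat{\mu}(A,X)\mid X]\right],
\]
and then apply the law of total variance conditioning on $(X,A)$. Using that $\Etar[\hat{\mu}(A,X)\mid X]$ is a function of $X$ alone and that $\Ebeh[Y \mid X, A] = \mu(A,X)$, this yields
\[
n\,\Vbeh[\hat{\theta}_{\textup{DR}}^{\tilde{\rho}}] = \Ebeh\!\left[\tilde{\rho}^2(A,X)\,\V[Y\mid X,A]\right] + \Vbeh\!\left[\tilde{\rho}(A,X)(\mu(A,X) - \hat{\mu}(A,X)) + \Etar[\hat{\mu}(A,X)\mid X]\right],
\]
which is exactly the identity derived in the proof of Proposition~\ref{prop:var_dr} with $\rho$ replaced by $\tilde{\rho}$.

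Under $\hat{\mu} = \mu$, the residual $\mu - \hat{\mu}$ vanishes, so the second term collapses to $\Vbeh[\Etar[\mu(A,X)\mid X]]$, which is identical for the two estimators. Subtracting therefore gives
\[
\Vbeh[\hat{\theta}_{\textup{DR}}^{\tilde{\rho}}] - \Vbeh[\thetadr] = \frac{1}{n}\,\Ebeh\!\left[(\tilde{\rho}^2(A,X) - \rho^2(A,X))\,\V[Y\mid X,A]\right] = -\Delta.
\]
Adding $\Vbeh[\thetadr] - \Vbeh[\thetamr]$ to both sides and invoking Proposition~\ref{prop:var_dr} to lower bound this quantity by $\frac{1}{n}\Ebeh[\V[\rho(A,X)\mid Y]\,Y^2 - \V[\rho(A,X)\mu(A,X)\mid X]]$ delivers the claimed inequality.

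I do not anticipate any serious obstacle: the argument is almost entirely algebraic reassembly of the computation already carried out for Proposition~\ref{prop:var_dr}. The only point that requires care is tracking which occurrences of $\rho$ in that proof get replaced by $\tilde{\rho}$ and which do not; in particular, the $\Etar[\hat{\mu}\mid X]$ term that appears in both estimators is built from $\tar$ and $\hat{\mu}$ only, so it is unaffected by the swap $\rho \mapsto \tilde{\rho}$, and this is precisely what causes the second term of the decomposition to cancel.
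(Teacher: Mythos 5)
Your proof is correct and follows essentially the same route as the paper's: both rest on the law-of-total-variance decomposition of the DR-type estimator conditional on $(X,A)$, which isolates $\Ebeh[\tilde{\rho}^2(A,X)\,\V[Y\mid X,A]]$ and hence the $\Delta$ term, and both then reduce $\Ebeh[\rho^2(A,X)\,\V[Y\mid X,A]]$ to $\Vbeh[\thetamr]$ by conditioning on $Y$. The only difference is organizational --- you factor the remaining work through Proposition~\ref{prop:var_dr} (a valid invocation, since that proposition holds for arbitrary $\hat{\mu}$ and in particular for $\hat{\mu}=\mu$), whereas the paper re-derives that reduction from scratch; the underlying algebra is identical.
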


\begin{proof}[Proof of Proposition \ref{prop:var_dr_extensions}]
Using the fact that $\hat{\mu}(a, x) = \mu(a, x) $ and the law of total variance, we get that
\begin{align*}
    n\,\Vbeh[\hat{\theta}_{\textup{DR}}^{\tilde{\rho}}] &= \Vbeh[\tilde{\rho}(A, X)\,(Y -\hat{\mu}(A, X)) + \sum_{a'\in \Aspace}\hat{\mu}(a', X)\tar(a'\mid X) ]\\
    &= \Vbeh[\tilde{\rho}(A, X)\,(Y -\hat{\mu}(A, X)) + \Etar[\hat{\mu}(A, X)\mid X] ]\\
    &= \Vbeh[\tilde{\rho}(A, X)\,(Y -\mu(A, X)) + \Etar[\mu(A, X)\mid X] ]\\
    &= \Vbeh[\Ebeh[\tilde{\rho}(A, X)\,(Y -\mu(A, X)) + \Etar[\mu(A, X)\mid X] \mid X, A]] \\
    &\qquad+ \Ebeh[\Vbeh[\tilde{\rho}(A, X)\,(Y -\mu(A, X)) + \Etar[\mu(A, X)\mid X]\mid X, A]]\\
    &= \Vbeh[\Etar[\mu(A, X)\mid X]] + \Ebeh[\tilde{\rho}^2(A, X)\V[Y\mid X, A]]\\
    &= \Vbeh[\Etar[\mu(A, X)\mid X]] + \Ebeh[\rho^2(A, X)\,\V[Y\mid X, A]] \\
    &\qquad+ \underbrace{\Ebeh[(\tilde{\rho}^2(A, X) - \rho^2(A, X))\,\V[Y\mid X, A]]}_{-n\,\Delta}\\
    &= \Vbeh[\Ebeh[\rho(A, X)\,\mu(A, X)\mid X]] + \Ebeh[\rho^2(A, X)\,\V[Y\mid X, A]] - n\,\Delta.
\end{align*}
    Again, using the law of total variance we can rewrite the second term on the RHS above as,
    \begin{align*}
        &\Ebeh[\rho^2(A, X)\,\V[Y\mid X, A]] \\
        &\quad= \Vbeh[\rho(A, X)\, Y] - \Vbeh[\rho(A, X)\,\mu(A, X)]\\
        &\quad= \Vbeh[\Ebeh[\rho(A, X)\mid Y]\, Y] + \Ebeh[\Vbeh[\rho(A, X)\mid Y]\,Y^2] \\
        &\qquad- \Vbeh[\rho(A, X)\,\mu(A, X)]\\
        &\quad= \Vbeh[w(Y)\, Y] + \Ebeh[\Vbeh[\rho(A, X)\mid Y]\,Y^2] - \Vbeh[\rho(A, X)\,\mu(A, X)]\\
        &\quad= n\,\Vbeh[\thetamr] + \Ebeh[\Vbeh[\rho(A, X)\mid Y]\,Y^2] - \Vbeh[\rho(A, X)\,\mu(A, X)].
    \end{align*}
    Putting this together, we get that
    \begin{align*}
        &n\,\Vbeh[\hat{\theta}_{\textup{DR}}^{\tilde{\rho}}] \\
        &\quad= n\,\Vbeh[\thetamr] + \Ebeh[\Vbeh[\rho(A, X)\mid Y]\,Y^2] - \Vbeh[\rho(A, X)\,\mu(A, X)] \\
        &\qquad+ \Vbeh[\Ebeh[\rho(A, X)\,\mu(A, X)\mid X]] - n\,\Delta\\
        &\quad= n\,\Vbeh[\thetamr] + \Ebeh[\Vbeh[\rho(A, X)\mid Y]\,Y^2] - \Ebeh[\Vbeh[\rho(A, X)\,\mu(A, X)\mid X]] - n\,\Delta,
    \end{align*}
    where in the last step above, we again use the law of total variance. Rearranging the above leads us to the result. 
\end{proof}
\paragraph{Intuition}
Note that for both of the DR extensions under consideration, the modified ratios $\tilde{\rho}(a, x)$ satisfy $0\leq \tilde{\rho}(a, x)\leq \rho(a, x)$ and hence $\Delta \geq 0$ (using the definition of $\Delta$ in Proposition \ref{prop:var_dr_extensions}).
When the modified ratios $\tilde{\rho}(a, x)$ are `close' to the true policy ratios $\rho(a, x)$, then using the definition of $\Delta$, we have that $\Delta \approx 0$. In this case, the result above provides a similar intuition to Proposition \ref{prop:var_dr} in the main text. Specifically, in this case we have that if $\Vbeh\left[ \rho(A, X)\,Y \mid Y \right]$ is greater than $\Vbeh\left[ \rho(A, X)\,\mu(A, X) \mid X \right]$ on average, the variance of the MR estimator will be less than that of the DR extension under consideration. 
Intuitively, this will occur when the dimension of context space $\Xspace$ is high because in this case the conditional variance over $X$ and $A$, $\Vbeh\left[\rho(A, X)\,Y \mid Y \right]$ is likely to be greater than the conditional variance over $A$, $\Vbeh\left[ \rho(A, X)\,\mu(A, X) \mid X \right]$.

In contrast if the modified ratios $\tilde{\rho}(a, x)$ differ substantially from $\rho(a, x)$, then $\Delta$ will be large and the variance of MR may be higher than that of the resulting DR extension. However, this comes at the cost of significantly higher bias in the DR extension and consequently MSE of the DR extension will be high in this case.

\section{Weight estimation error} \label{sec:wide_nns_weight_estimation}
In this section, we theoretically investigate the effects of using the estimated importance weights $\hat{w}(y)$ rather than $\hat{\rho}(a, x)$ on the bias and variance of the resulting OPE estimator. Further to our discussion in Section \ref{subsec:weight-estimation-error}, we focus in this section on the approximation error when using a wide neural network to estimate the weights $\hat{w}(y)$. To this end, we use recent results regarding the generalization of wide neural networks \citep{lai2023generalization} to show that the estimation error of the approximation step (ii) in the Section \ref{subsec:weight-estimation-error} declines with increasing number of training data when $\hat{w}(y)$ is estimated using wide neural networks. Before providing the main result, we explicitly lay out the assumptions needed.

\subsection{Using wide neural networks to approximate the weights $\hat{w}(y)$}
\begin{assumption}\label{assumption:weights-in-rkhs}
    Let 
    $
    \tilde{w}(y) \coloneqq \Ebeh[\hat{\rho}(A, X)\mid Y=y].
    $
    Suppose $\tilde{w}  \in \mathcal{H}_1$ and $||\tilde{w}||_{\mathcal{H}_1} \leq R$ for some constant $R$, where $\mathcal{H}_1$ is the reproducing kernel Hilbert space (RKHS) associated with the Neural Tangent Kernel $K_1$ associated with 2 layer neural network defined on $\mathbb{R}$.
\end{assumption}
\begin{assumption}\label{assumption:outcome-bounded}
    There exists an $M \in [0, \infty)$ such that $\mathbb{P}_{\beh}(|Y| \leq M) = 1$.
\end{assumption}

\begin{assumption}\label{assumption:pol-ratios-bounded}
$\hat{\rho}(a_i, x_i)$ satisfies
    \begin{align*}
        \hat{\rho}(a_i, x_i) = \tilde{w}(y_i) + \eta_i,
    \end{align*}
    where $\eta_i \overset{\textup{iid}}{\sim} \mathcal{N}(0, \sigma^2)$ for some $\sigma > 0$. 
\end{assumption}

\begin{theorem}\label{prop:bias-and-var-v3}
Suppose that the IPW and MR estimators are defined as,
\[
\approxipw \coloneqq \frac{1}{n}\sum_{i=1}^n\hat{\rho}(a_i, x_i)\, y_i, \quad \text{and }\quad \approxmr \coloneqq \frac{1}{n}\sum_{i=1}^n\hat{w}_m(y_i)\, y_i,
\]
where $\hat{w}_m(y)$ is obtained by regressing to the estimated policy ratios $\hat{\rho}(a, x)$ using $m$ i.i.d. training samples $\Dtr \coloneqq \{(x^\tr_i, a^\tr_i, y^\tr_i)\}_{i=1}^m$, i.e., by minimising the loss
\begin{align*}
    \mathcal{L}(\phi) = \E_{(X, A, Y)\sim \Dtr} \left[\left(\hat{\rho}(A, X) - f_{\phi}(Y)\right)^2\right].
\end{align*}
Suppose Assumptions \ref{assumption:weights-in-rkhs}-\ref{assumption:pol-ratios-bounded} hold, then for any given $\delta \in (0, 1)$, if $f_\phi$ is a two-layer neural network with width $k$ that is sufficiently large and stops the gradient flow at time $t_* \propto m^{2/3}$, then for sufficiently large $m$, there exists a constant $C_1$ independent of $\delta$ and $m$, such that  
\[
|\textup{Bias}(\approxmr) - \textup{Bias}(\approxipw)| \leq C_1\, m^{-1/3} \log{\frac{6}{\delta}}
\]
holds with probability at least $(1-\delta)(1-o_{k}(1))$. Moreover, 
there exist constants $C_2, C_3$ independent of $\delta$ and $m$ such that 
\begin{align*}
    &n (\Vbeh[\approxipw] - \Vbeh[\approxmr]) \geq \underbrace{\Ebeh[\Vbeh[\hat{\rho}(A, X)\,Y\mid Y]]}_{\geq 0} - C_2\,m^{-2/3}\,\log^2{\frac{6}{\delta}} - C_3\,m^{-1/3}\,\log{\frac{6}{\delta}}
\end{align*}
holds with probability at least $(1-\delta)(1-o_{k}(1))$. Here, the randomness comes from the joint distribution of training samples and random initialization of parameters in the neural network $f_{\phi}$. 
\end{theorem}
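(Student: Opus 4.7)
The plan is to reduce the theorem to a quantitative control on the $L^2(\pbeh)$ error $\Ebeh[\epsilon^2]$ of the regression estimator $\hat{w}_m$, where $\epsilon \coloneqq \hat{w}_m(Y) - \tilde{w}(Y)$ and $\tilde{w}(Y) = \Ebeh[\hat{\rho}(A,X)\mid Y]$. Proposition \ref{prop:bias-and-var-main} already expresses both the bias and variance differences of interest as functionals of $\epsilon$, so the task reduces to (a) bounding $|\Ebeh[\epsilon Y]|$, $\Vbeh[\epsilon Y]$ and the covariance term in Eq. \eqref{eq:var-difference-approximate-weights} by norms of $\epsilon$, and (b) importing a generalization bound for 2-layer wide NTK regression to obtain a high-probability rate on $\Ebeh[\epsilon^2]$.

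For the bias, Proposition \ref{prop:bias-and-var-main} gives $\textup{Bias}(\approxmr) - \textup{Bias}(\approxipw) = \Ebeh[\epsilon Y]$. By Cauchy--Schwarz and Assumption \ref{assumption:outcome-bounded},
\[
|\Ebeh[\epsilon Y]| \leq \sqrt{\Ebeh[\epsilon^2]\,\Ebeh[Y^2]} \leq M\sqrt{\Ebeh[\epsilon^2]},
\]
so any $O(m^{-2/3})$-type bound on $\Ebeh[\epsilon^2]$ immediately yields the $O(m^{-1/3})$ claim for the bias gap. For the variance, the negative contributions in Eq. \eqref{eq:var-difference-approximate-weights} are controlled as $\Vbeh[\epsilon Y] \leq M^2\,\Ebeh[\epsilon^2]$ and, by Cauchy--Schwarz,
\[
|\textup{Cov}(\tilde{w}(Y)Y,\epsilon Y)| \leq \sqrt{\Vbeh[\tilde{w}(Y)Y]\,\Vbeh[\epsilon Y]} \leq M\sqrt{\Vbeh[\tilde{w}(Y)Y]}\,\sqrt{\Ebeh[\epsilon^2]}.
\]
The factor $\Vbeh[\tilde{w}(Y)Y]$ is a finite constant: the reproducing property in $\mathcal{H}_1$ combined with Assumption \ref{assumption:weights-in-rkhs} gives the pointwise bound $|\tilde{w}(y)| \leq R\sqrt{K_1(y,y)}$, which on the compact range of $Y$ (Assumption \ref{assumption:outcome-bounded}) is uniformly bounded. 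Matching the two resulting contributions with $\Ebeh[\epsilon^2]$ and $\sqrt{\Ebeh[\epsilon^2]}$ produces the claimed $C_2 m^{-2/3}\log^2(6/\delta)$ and $C_3 m^{-1/3}\log(6/\delta)$ terms.

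The final step is to plug in the generalization bound of \cite{lai2023generalization} for NTK regression. Under Assumptions \ref{assumption:weights-in-rkhs} and \ref{assumption:pol-ratios-bounded}, the regression targets $\hat{\rho}(a_i^\tr, x_i^\tr) = \tilde{w}(y_i^\tr) + \eta_i$ are i.i.d. noisy observations of a function $\tilde{w}$ lying in a bounded ball of the NTK RKHS $\mathcal{H}_1$, which is precisely the setup covered by their bounds for gradient-flow-trained 2-layer NNs. For sufficiently wide $f_\phi$ stopped at the minimax-optimal time $t_*\propto m^{2/3}$, their result gives
\[
\Ebeh[\epsilon^2] \leq C\, m^{-2/3}\log^2(6/\delta)
\]
with probability at least $(1-\delta)(1-o_k(1))$, where the $o_k(1)$ term is the NTK approximation error vanishing as the width $k\to\infty$. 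Substituting this rate into the Cauchy--Schwarz bounds above yields both conclusions of the theorem.

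The main obstacle is matching our regression problem cleanly to the hypotheses of \cite{lai2023generalization}. In particular, two points need care: first, Assumption \ref{assumption:pol-ratios-bounded} idealises $\hat{\rho}$ as a noisy observation of $\tilde{w}$ with Gaussian noise on the \emph{training} labels, which abstracts away any additional randomness from estimating $\hat{\rho}$ from a separate split and lets us treat the regression as a standard nonparametric one; second, the pointwise boundedness of $\tilde{w}$ used to control $\Vbeh[\tilde{w}(Y)Y]$ must be extracted from the RKHS norm bound in Assumption \ref{assumption:weights-in-rkhs} together with the compactness of the support of $Y$. Once these ingredients are in place the remainder of the argument is a mechanical combination of Proposition \ref{prop:bias-and-var-main}, elementary moment inequalities, and the cited NTK generalization bound.
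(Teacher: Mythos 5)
Your proposal follows essentially the same route as the paper's proof: reduce everything to $\Ebeh[\epsilon^2]$ via Proposition \ref{prop:bias-and-var-main}, import the $O(m^{-2/3}\log^2(6/\delta))$ high-probability rate from \cite[Theorem 4.1]{lai2023generalization}, and close with Cauchy--Schwarz and the boundedness of $Y$. The only (immaterial) differences are that you apply Cauchy--Schwarz directly to $\textup{Cov}(\tilde{w}(Y)Y,\epsilon Y)$ where the paper first splits it into $\Ebeh[\epsilon\,\tilde{w}(Y)Y^2]$ and $\Ebeh[\epsilon Y]\Ebeh[\tilde{w}(Y)Y]$, and that you explicitly justify the finiteness of the $\tilde{w}$-moments via the RKHS reproducing property, which the paper leaves implicit in its constant $C_3$.
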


\begin{proof}[Proof of Theorem \ref{prop:bias-and-var-v3}]
The proof of this theorem relies on \cite[Theorem 4.1]{lai2023generalization}. 
Recall the definition $\tilde{w}(Y)\coloneqq \Ebeh[\hat{\rho}(A, X) \mid Y]$. 
We can rewrite our setup in the setting of \cite[Theorem 4.1]{lai2023generalization}, by relabelling $\hat{\rho}(a, x)$ in our setup as $y$ in their setup and relabelling $y$ in our setup as $x$ in their setup. 
Then, given $\delta \in (0, 1)$, from \cite[Theorem 4.1]{lai2023generalization}, it follows that under Assumptions \ref{assumption:weights-in-rkhs}-\ref{assumption:pol-ratios-bounded} that there exists a constant $C$ independent of $\delta$ and $m$, such that
    \begin{align}
        \Ebeh[\epsilon^2] \leq C\, m^{-2/3} \, \log^2{\frac{6}{\delta}} \label{eqn:theorem-statement}
    \end{align}
    holds with probability at least $(1-\delta)(1-o_{k}(1))$, where $\epsilon \coloneqq \hat{w}_m(Y) - \tilde{w}(Y)$. Recall from Proposition \ref{prop:bias-and-var-main} that
    \[|\textup{Bias}(\approxmr) - \textup{Bias}(\approxipw)|  = |\Ebeh[\epsilon\,Y] |. \]
    From this it follows using Cauchy-Schwarz inequality that,
    \begin{align*}
        |\textup{Bias}(\approxmr) - \textup{Bias}(\approxipw)| &=|\Ebeh[\epsilon\,Y] | \leq \left(\Ebeh[\epsilon^2] \Ebeh[Y^2]\right)^{1/2}.
    \end{align*}
    Combining the above with Eqn. \eqref{eqn:theorem-statement}, it follows that,
    \begin{align*}
        |\textup{Bias}(\approxmr) - \textup{Bias}(\approxipw)| \leq C^{1/2}\, m^{-1/3}\, \log{\frac{6}{\delta}} (\Ebeh[Y^2])^{1/2} = C_1 \, m^{-1/3}\, \log{\frac{6}{\delta}}
    \end{align*}
    holds with probability at least $(1-\delta)(1-o_{k}(1))$, where $C_1 = C^{1/2}\, (\Ebeh[Y^2])^{1/2}$. 

    Next, to prove the variance result, we recall from Proposition \ref{prop:bias-and-var-main} that
    \begin{align*}
        n (\Vbeh[\approxipw] - \Vbeh[\approxmr]) &= \Ebeh[\Vbeh[\hat{\rho}(A, X) \mid Y]\, Y^2] - \Vbeh[\epsilon\,Y] - 2\,\textup{Cov}(\epsilon\,Y, \tilde{w}(Y)\,Y)
    \end{align*}
    Now note that, under Assumption \ref{assumption:outcome-bounded},
    \begin{align*}
         \Vbeh[\epsilon\,Y]  \leq \Ebeh[(\epsilon\,Y)^2] \leq M^2 \Ebeh[\epsilon^2] \leq C\, M^2\, m^{-2/3} \, \log^2{\frac{6}{\delta}} = C_2\, m^{-2/3} \, \log^2{\frac{6}{\delta}},
    \end{align*}
    holds with probability at least $(1-\delta)(1-o_{k}(1))$, where $C_2 = C\, M^2$. Similarly, we have that with probability at least $(1-\delta)(1-o_{k}(1))$,
    \begin{align*}
        |\textup{Cov}(\epsilon\,Y, \tilde{w}(Y)\,Y)| &= |\Ebeh[\epsilon\,\tilde{w}(Y)\,Y^2] - \Ebeh[\epsilon\,Y]\Ebeh[\tilde{w}(Y)\,Y] |\\ 
        &\leq |\Ebeh[\epsilon \,\tilde{w}(Y)\,Y^2]| + |\Ebeh[\epsilon\,Y]\Ebeh[\tilde{w}(Y)\,Y]|\\
        &\leq \left(\Ebeh[\epsilon^2]\Ebeh[\tilde{w}(Y)^2\,Y^4]\right)^{1/2} + (\Ebeh[\epsilon^2]\Ebeh[Y^2])^{1/2} |\Ebeh[\tilde{w}(Y)\,Y]|\\
        &= (\Ebeh[\epsilon^2])^{1/2}\,\left( (\Ebeh[\tilde{w}(Y)^2\,Y^4])^{1/2} + (\Ebeh[Y^2])^{1/2}\,|\Ebeh[\tilde{w}(Y)\,Y]|\right)\\
        &\leq C_3\,m^{-1/3}\,\log{\frac{6}{\delta}},
    \end{align*}
    where $C_3 = C\,(\Ebeh[\tilde{w}(Y)^2\,Y^4])^{1/2} + (\Ebeh[Y^2])^{1/2}\,|\Ebeh[\tilde{w}(Y)\,Y]|$, and we use Cauchy-Schwarz inequality in the third step above. Putting this together, we obtain the required result.
\end{proof}

\paragraph{Intuition} This theorem shows that as the number of training samples $m$ increases, the biases of MR and IPW estimators become roughly equal, whereas the variance of MR estimator falls below that of the IPW estimator. The empirical results shown in Appendix \ref{subsec:mips-empirical} are consistent with this result.
Moreover, in Theorem \ref{prop:bias-and-var-v3}, the estimated policy ratio $\hat{\rho}(a, x)$ is fixed for increasing $m$, i.e., we do not update $\hat{\rho}(a, x)$ as more training data becomes available. While this may seem as a disadvantage for the IPW estimator, we point out that the result also holds when the policy ratio is exact (i.e., $\hat{\rho}(a, x) = \rho(a, x)$) and hence the IPW estimator is unbiased.

\paragraph{Relaxing Assumption \ref{assumption:pol-ratios-bounded}}
\cite{lai2023generalization}[Theorem 4.1] suppose that the data has the relationship shown in Assumption \ref{assumption:pol-ratios-bounded}. However, the theorem relies on Corollary 4.4 in \cite{lin2020optimal}, which requires a strictly weaker assumption (Assumption 1 in \cite{lin2020optimal}). Therefore, we can relax Assumption \ref{assumption:pol-ratios-bounded} to the following assumption.
\begin{assumption}\label{assum:relaxed-assumption}
    There exists positive constants $Q$ and $M$ such that for all $l \geq 2$ with $l \in \mathbb{N}$
    \[
\Ebeh[\hat{\rho}(A, X)^l\mid Y] \leq \frac{1}{2} \,l!\,M^{l-2}\,Q^2 
    \]
    $\pbeh$-almost surely.
\end{assumption}
It is easy to check that Assumption \ref{assum:relaxed-assumption} is strictly weaker than Assumption \ref{assumption:pol-ratios-bounded}, and is also satisfied if the policy ratio $\hat{\rho}(A, X)$ is almost surely bounded. For simplicity, we use the stronger assumption in our Proposition \ref{prop:bias-and-var-v3}.  

\section{Generalised formulation of the MIPS estimator \citep{saito2022off}}\label{app:gmips}
As described in Section \ref{subsec:mips-comparison}, the MIPS estimator proposed by \cite{saito2022off} assumes the existence of \emph{action embeddings} $E$ which summarise all relevant information about the action $A$, and achieves a lower variance than the IPW estimator. To achieve this, the MIPS estimator only considers the shift in the distribution of $(X, E)$ as a result of policy shift, instead of considering the shift in $(X, A)$ (as in IPW estimator). In this section, we show that this idea can be generalised to instead consider general representations $R$ of the context-action pair $(X, A)$, which encapsulate all relevant information about the outcome $Y$. The MIPS estimator is a special case of this generalised setting where the representation $R$ is of the form $(X, E)$.

\paragraph{Generalised MIPS (G-MIPS) estimator}
Suppose that there exists an embedding $R$ of the context-action pair $(X, A)$, with the Bayesian network shown in Figure \ref{fig:embedding_single}. Here, $R$ may be a lower-dimensional representation of the $(X, A)$ pair which contains all the information necessary to predict the outcome $Y$. This corresponds to the following conditional independence assumption:
\begin{assumption}\label{assum:indep-general}
    The context-action pair $(X, A)$ has no direct effect on the outcome $Y$ given $R$, i.e., 
    $Y \indep (X, A) \mid R$.
\end{assumption}
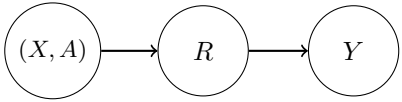
\begin{wrapfigure}{l}{0.4\textwidth}
\centering
\begin{tikzpicture}

\node[circle,draw, minimum size=1.2cm] (R0) at (0,0) {\begin{small}$(X, A)$\end{small}
};
\node[circle,draw, minimum size=1.2cm] (R1) at (2,0) {$R$};
\node[circle,draw, minimum size=1.2cm] (Y) at (4,0) {$Y$};

\path[->, thick] (R0) edge (R1);
\path[->, thick] (R1) edge (Y);

\end{tikzpicture}
\caption{Bayesian network corresponding to Assumption \ref{assum:indep-general}.}
\label{fig:embedding_single}
\end{wrapfigure}
As illustrated in Figure \ref{fig:embedding_single}, Assumption \ref{assum:indep-general} means that the embedding $R$ fully mediates every possible effect of $(X, A)$ on $Y$. The generalised MIPS estimator $\hat{\theta}_{\textup{G-MIPS}}$ of target policy value, $\Etar[Y]$, is defined as
\[
\hat{\theta}_{\textup{G-MIPS}} \coloneqq \frac{1}{n}\sum_{i=1}^n \frac{\ptar(r_i)}{\pbeh(r_i)}\, y_i,
\]
where $\pbeh(r)$ denote the density of $R$ under the behaviour policy (likewise for $\ptar(r)$). Under assumption \ref{assum:indep-general}, $\hat{\theta}_{\textup{G-MIPS}}$ provides an unbiased estimator of target policy value. 
Similar to Lemma \ref{lemma:weights-est}, the density ratio $\frac{\ptar(r)}{\pbeh(r)}$ can be estimated by solving the regression problem
\begin{align}
    \arg \min_f \Ebeh \left(\frac{\tar(A\mid X)}{\beh(A\mid X)} - f\left(R\right)\right)^2. \label{eq:embedding-ratio-estimation}
\end{align}

\subsection{Variance reduction of G-MIPS estimator}\label{app:gmips-var-reduction}
By only considering the shift in the embedding $R$, the G-MIPS estimator achieves a lower variance relative to the vanilla IPW estimator. The following result, which is a straightforward extension of \cite[Theorem 3.6]{saito2022off}, formalises this.

\begin{proposition}[Variance reduction of G-MIPS]\label{prop:mips_var_reduction}
    When the ratios $\rho(a, x)$ and $\frac{\ptar(r)}{\pbeh(r)}$ are known exactly then under Assumption \ref{assum:indep-general}, we have that $\Ebeh[\thetaipw] = \Ebeh[\hat{\theta}_{\textup{G-MIPS}}] = \Etar[Y]$. Moreover,
\begin{align*}
     \Vbeh[\thetaipw] - \Vbeh[\hat{\theta}_{\textup{G-MIPS}}]
    \geq \frac{1}{n}\Ebeh \left[ \E[Y^2\mid R] \Vbeh[\rho(A, X)\mid R] \right] \geq 0.
\end{align*}
\end{proposition}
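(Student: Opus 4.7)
The plan is to follow the template of Lemma \ref{lemma:weights-est} and Proposition \ref{prop:var_mr}, but condition on the representation $R$ throughout rather than on $Y$. The argument splits naturally into three steps: (i) unbiasedness of $\hat{\theta}_{\textup{G-MIPS}}$, (ii) a conditional expectation identity for the embedding ratio, and (iii) a law-of-total-variance computation that exploits Assumption \ref{assum:indep-general}.

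First, for unbiasedness, I would write $\Ebeh[w_R(R)\,Y] = \Ebeh\left[w_R(R)\,\Ebeh[Y\mid R]\right]$ where $w_R(r) := \ptar(r)/\pbeh(r)$. Under Assumption \ref{assum:indep-general}, the conditional law of $Y$ given $R$ does not depend on the policy, so $\Ebeh[Y \mid R=r] = \Etar[Y\mid R=r]$. Integrating against $\pbeh(r)$ and absorbing $w_R(r)\pbeh(r) = \ptar(r)$ yields $\Etar[Y]$. Unbiasedness of $\thetaipw$ is the standard importance-weighting argument.

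Second, I would establish the identity $w_R(R) = \Ebeh[\rho(A,X) \mid R]$, fully analogous to Lemma \ref{lemma:weights-est}, by writing
\[
w_R(r) = \int \frac{\ptar(x,a,r)}{\pbeh(r)}\,\mathrm{d}x\,\mathrm{d}a = \int \rho(a,x)\,\pbeh(x,a\mid r)\,\mathrm{d}x\,\mathrm{d}a.
\]
This identity is what lets a conditional-variance bound appear naturally in step (iii).

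Third, since both estimators are unbiased, the variance difference reduces to a second-moment difference:
\[
n\bigl(\Vbeh[\thetaipw] - \Vbeh[\hat{\theta}_{\textup{G-MIPS}}]\bigr) = \Ebeh[\rho(A,X)^2 Y^2] - \Ebeh[w_R(R)^2 Y^2].
\]
I would then condition on $R$. By Assumption \ref{assum:indep-general}, $Y \indep (X,A) \mid R$, so $\Ebeh[\rho(A,X)^2 Y^2 \mid R] = \Ebeh[\rho(A,X)^2 \mid R]\,\E[Y^2 \mid R]$, while the second term factors trivially as $w_R(R)^2 \E[Y^2 \mid R]$. Combining with the identity from step (ii),
\[
\Ebeh[\rho(A,X)^2 \mid R] - w_R(R)^2 = \Ebeh[\rho(A,X)^2 \mid R] - \Ebeh[\rho(A,X)\mid R]^2 = \Vbeh[\rho(A,X)\mid R],
\]
which gives the stated lower bound (in fact, as an equality).

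The main subtlety I anticipate is the correct use of the conditional independence $Y \indep (X,A) \mid R$ under $\pbeh$: this is what enables the factorization $\Ebeh[\rho^2 Y^2 \mid R] = \Ebeh[\rho^2 \mid R]\,\E[Y^2 \mid R]$ and must be invoked carefully, since the unconditional joint density of $(X,A,Y)$ is of course policy-dependent. Everything else is bookkeeping via the tower property and the definition of conditional variance.
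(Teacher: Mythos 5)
Your proposal is correct and rests on the same two ingredients as the paper's proof: the identity $\frac{\ptar(R)}{\pbeh(R)} = \Ebeh[\rho(A,X)\mid R]$ and the factorization $\Ebeh[\rho(A,X)^2 Y^2\mid R] = \Ebeh[\rho(A,X)^2\mid R]\,\E[Y^2\mid R]$ granted by Assumption \ref{assum:indep-general}; like the paper, you in fact obtain the bound as an equality. The only cosmetic difference is that you pass directly to the second-moment difference via unbiasedness, whereas the paper applies the law of total variance to each estimator and cancels the matching between-$R$ terms before comparing the within-$R$ conditional variances.
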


\begin{proof}[Proof of Proposition \ref{prop:mips_var_reduction}]
The following proof, which is included for completeness, is a straightforward extension of \cite[Theorem 3.6]{saito2022off}. 
\begin{align*}
    &n (\Vbeh[\thetaipw] - \Vbeh[\hat{\theta}_{\textup{MIPS}}])\\
    &\quad= \Vbeh\left[\frac{\tar(A|X)}{\beh(A|X)}\,Y\right] - \Vbeh\left[\frac{\ptar(R)}{\pbeh(R)}\,Y \right]\\
     &\quad= \Vbeh\left[\Ebeh\left[\frac{\tar(A|X)}{\beh(A|X)}\,Y \Bigg| R\right]\right] + \Ebeh\left[ \Vbeh\left[\frac{\tar(A|X)}{\beh(A|X)}\,Y\Bigg| R \right]\right] - \Vbeh\left[\Ebeh\left[ \frac{\ptar(R)}{\pbeh(R)}\,Y \Bigg| R\right]\right]\\
     &\qquad- \Ebeh\left[\Vbeh\left[ \frac{\ptar(R)}{\pbeh(R)}\,Y \Bigg| R\right]\right]
\end{align*}
Now using the conditional independence Assumption \ref{assum:indep-general}, the first term on the RHS above becomes,
\begin{align*}
    \Vbeh\left[\Ebeh\left[\frac{\tar(A|X)}{\beh(A|X)}\,Y \Bigg| R\right]\right] &= \Vbeh\left[\Ebeh\left[\frac{\tar(A|X)}{\beh(A|X)}\Bigg| R\right]\,\Ebeh\left[Y | R\right]\right]\\
    &= \Vbeh\left[\frac{\ptar(R)}{\pbeh(R)}\,\Ebeh\left[Y | R\right]\right],
\end{align*}
where in the last step above we use the fact that
\begin{align*}
    \Ebeh\left[\frac{\tar(A|X)}{\beh(A|X)}\Bigg| R\right] = \frac{\ptar(R)}{\pbeh(R)}.
\end{align*}
Putting this together, we get that
\begin{align}
    &n (\Vbeh[\thetaipw] - \Vbeh[\hat{\theta}_{\textup{MIPS}}]) \nonumber\\ 
    &\quad= \Ebeh\left[ \Vbeh\left[\frac{\tar(A|X)}{\beh(A|X)}\,Y\Bigg| R \right]\right] - \Ebeh\left[\Vbeh\left[ \frac{\ptar(R)}{\pbeh(R)}\,Y \Bigg| R\right]\right]. \label{eq:variance-difference}
\end{align}
Since we have that 
\begin{align*}
    \Ebeh\left[\frac{\tar(A|X)}{\beh(A|X)}\,Y\Bigg| R \right] = \Ebeh\left[\frac{\tar(A|X)}{\beh(A|X)}\Bigg| R \right]\,\Ebeh\left[Y| R \right] = \frac{\ptar(R)}{\pbeh(R)}\,\Ebeh\left[Y| R \right],
\end{align*}
Eq. \eqref{eq:variance-difference} becomes,
\begin{align*}
    &\Ebeh\left[ \Vbeh\left[\frac{\tar(A|X)}{\beh(A|X)}\,Y\Bigg| R \right]\right] - \Ebeh\left[\Vbeh\left[ \frac{\ptar(R)}{\pbeh(R)}\,Y \Bigg| R\right]\right] \\
    &\quad= \Ebeh\left[ \Ebeh\left[ \left(\frac{\tar(A|X)}{\beh(A|X)}\,Y \right)^2\Bigg| R  \right] - \Ebeh\left[ \left(\frac{\ptar(R)}{\pbeh(R)}\,Y \right)^2\Bigg| R \right] \right]\\
    &\quad= \Ebeh\left[ \Ebeh\left[ \left(\frac{\tar(A|X)}{\beh(A|X)} \right)^2 \Bigg| R  \right] \, \Ebeh\left[Y^2 | R  \right] - \left(\frac{\ptar(R)}{\pbeh(R)}\right)^2\,\Ebeh\left[Y^2 | R \right] \right]\\
    &\quad= \Ebeh\left[\Ebeh\left[Y^2 | R \right]\, \left( \Ebeh\left[ \left(\frac{\tar(A|X)}{\beh(A|X)} \right)^2 \Bigg| R  \right] - \left(\Ebeh\left[ \frac{\tar(A|X)}{\beh(A|X)} \Bigg| R  \right]\right)^2\, \right) \right]\\
    &\quad=\Ebeh\left[\Ebeh\left[Y^2 | R \right]\, \Vbeh\left[\frac{\tar(A|X)}{\beh(A|X)} \Bigg| R \right]\right].
\end{align*}
\end{proof}

\paragraph{Intuition}
Here, $R$ contains all relevant information regarding the outcome $Y$. Moreover, intuitively $R$ can be thought of as the state obtained by `filtering out' relevant information about $Y$ from $(X, A)$. Therefore, $R$ contains less `redundant' information regarding the outcome $Y$ as compared to the covariate-action pair $(X, A)$. As a result, the G-MIPS estimator which only considers the shift in the marginal distribution of $R$ due to the policy shift is more efficient than the IPW estimator, which considers the shift in the joint distribution of $(X, A)$ instead.
In fact, as the amount of `redundant' information regarding $Y$ decreases in the embedding $R$, the G-MIPS estimator becomes increasingly efficient with decreasing variance. We formalise this as follows:
\begin{assumption}\label{assum:two-embeddings}
    Assume there exist embeddings $R^{(1)}, R^{(2)}$ of the covariate-action pair $(X, A)$, with Bayesian network shown in Figure \ref{fig:embedding_double}. 
    This corresponds to the following conditional independence assumptions:
    \[
    R^{(2)} \indep (X, A) \mid R^{(1)}, \qquad \textup{and} \qquad Y \indep (R^{(1)}, X, A) \mid R^{(2)}.
    \]
\end{assumption}
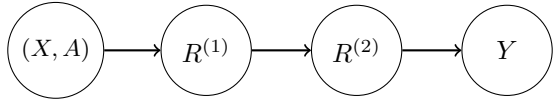
\begin{figure}[h!]
\centering
\begin{tikzpicture}

\node[circle,draw, minimum size=1.2cm] (R0) at (0,0) {\begin{small}$(X, A)$\end{small}};
\node[circle,draw, minimum size=1.2cm] (R1) at (2,0) {$R^{(1)}$};
\node[circle,draw, minimum size=1.2cm] (R2) at (4,0) {$R^{(2)}$};
\node[circle,draw, minimum size=1.2cm] (Y) at (6,0) {$Y$};

\path[->, thick] (R0) edge (R1);
\path[->, thick] (R1) edge (R2);
\path[->, thick] (R2) edge (Y);

\end{tikzpicture}
\caption{Bayesian network corresponding to Assumption \ref{assum:two-embeddings}.}
\label{fig:embedding_double}
\end{figure}  
We can define G-MIPS estimators for these embeddings to obtain unbiased OPE estimators under Assumption \ref{assum:two-embeddings} as follows:
\[    \hat{\theta}^{(j)}_{\textup{G-MIPS}} \coloneqq \frac{1}{n}\sum_{i=1}^n \frac{\ptar(r_i^{(j)})}{\pbeh(r_i^{(j)})}\, y_i,
\]
for $j\in \{1, 2\}$. Here, $\frac{\ptar(r^{(j)})}{\pbeh(r^{(j)})}$ is the ratio of marginal densities of $R^{(j)}$ under target and behaviour policies. 
We next show that the variance of $\hat{\theta}^{(j)}_{\textup{G-MIPS}}$ decreases with increasing $j$.
\begin{proposition}\label{prop:mips_generalised}
    When the ratios $\rho(a, x)$, $w(y)$ and $\frac{\ptar(r^{(j)})}{\pbeh(r^{(j)})}$ are known exactly for $j \in \{1, 2\}$, then under Assumption \ref{assum:two-embeddings} we get that
    \[
    \Ebeh[\thetaipw] = \Ebeh[\hat{\theta}^{(1)}_{\textup{G-MIPS}}] = \Ebeh[\hat{\theta}^{(2)}_{\textup{G-MIPS}}] = \Ebeh[\thetamr] = \Etar[Y].
    \]
    Moreover, 
    \[
    \Vbeh[\thetaipw] \geq \Vbeh[\hat{\theta}^{(1)}_{\textup{G-MIPS}}] \geq  \Vbeh[\hat{\theta}^{(2)}_{\textup{G-MIPS}}] \geq \Vbeh[\thetamr].
    \]
\end{proposition}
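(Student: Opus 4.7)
The plan is to recognise that the four weights form a tower sequence of conditional expectations along the Markov chain, and then to apply a single elementary variance-reduction lemma to each adjacent pair. Concretely, define
\[
W_0 \coloneqq \rho(A,X), \quad W_1 \coloneqq \frac{\ptar(R^{(1)})}{\pbeh(R^{(1)})}, \quad W_2 \coloneqq \frac{\ptar(R^{(2)})}{\pbeh(R^{(2)})}, \quad W_3 \coloneqq w(Y),
\]
so that $n\Vbeh[\thetaipw] = \Vbeh[W_0 Y]$, $n\Vbeh[\hat{\theta}^{(j)}_{\textup{G-MIPS}}] = \Vbeh[W_j Y]$, and $n\Vbeh[\thetamr] = \Vbeh[W_3 Y]$.

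First I would establish the tower identities
\[
W_1 = \Ebeh[W_0 \mid R^{(1)}], \quad W_2 = \Ebeh[W_1 \mid R^{(2)}], \quad W_3 = \Ebeh[W_2 \mid Y].
\]
The first is exactly Lemma \ref{lemma:weights-est} applied with $R^{(1)}$ in place of $Y$. For the second, the conditional independence $R^{(2)}\indep (X,A)\mid R^{(1)}$ in Assumption \ref{assum:two-embeddings} implies that the transition kernel $p(r^{(2)}\mid r^{(1)})$ is policy-free, so $\ptar(r^{(2)}) = \int p(r^{(2)}\mid r^{(1)})\,\ptar(r^{(1)})\,\mathrm{d}r^{(1)}$ and similarly under $\beh$; dividing and rewriting the ratio against $\pbeh(r^{(1)}\mid r^{(2)})$ yields the identity. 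The third identity follows in the same way from $Y\indep (R^{(1)},X,A)\mid R^{(2)}$. Together with Proposition \ref{prop:mips_var_reduction} and iterated expectations, these identities also give unbiasedness of all four estimators.

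Next I would state and prove a single abstract lemma: if $V$ is a random variable, $W'$ is $V$-measurable with $W' = \Ebeh[W\mid V]$, and $Y\indep W\mid V$, then $\Ebeh[WY] = \Ebeh[W'Y]$ and $\Vbeh[WY] \geq \Vbeh[W'Y]$. This follows in two lines from the law of total variance: the ``between'' term $\Vbeh[\Ebeh[WY\mid V]]$ equals $\Vbeh[\Ebeh[W'Y\mid V]]$ because both conditional expectations reduce to $W'\,\Ebeh[Y\mid V]$ under the independence hypothesis, while the ``within'' term satisfies
\[
\Vbeh[WY\mid V] - \Vbeh[W'Y\mid V] = \bigl(\Ebeh[W^2\mid V] - (W')^2\bigr)\,\Ebeh[Y^2\mid V] \geq 0
\]
by Jensen's inequality applied to $W$ given $V$.

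Finally I would apply this lemma three times along the chain: (a) with $W=W_0$, $W'=W_1$, $V=R^{(1)}$, using that $W_0$ is a function of $(X,A)$ and that the Markov chain yields $Y\indep (X,A)\mid R^{(1)}$; (b) with $W=W_1$, $W'=W_2$, $V=R^{(2)}$, using that $W_1$ is a function of $R^{(1)}$ and $Y\indep R^{(1)}\mid R^{(2)}$ (both from Assumption \ref{assum:two-embeddings}); and (c) with $W=W_2$, $W'=W_3$, $V=Y$, where the independence $Y\indep W_2\mid Y$ is trivial. Chaining these inequalities produces the required ordering $\Vbeh[\thetaipw] \geq \Vbeh[\hat{\theta}^{(1)}_{\textup{G-MIPS}}] \geq \Vbeh[\hat{\theta}^{(2)}_{\textup{G-MIPS}}] \geq \Vbeh[\thetamr]$. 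The only mildly delicate step is the verification of the tower identities in Step 1, since one must carefully exploit the Markov structure of Figure \ref{fig:embedding_double} to show the transition kernels are policy-invariant; once this is done, the remaining variance argument is a direct iteration of the argument already used for Proposition \ref{prop:mips_var_reduction}.
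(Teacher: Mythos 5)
Your proposal is correct and follows essentially the same route as the paper's proof: both rest on the identity $\frac{\ptar(r^{(j+1)})}{\pbeh(r^{(j+1)})} = \Ebeh\bigl[\frac{\ptar(R^{(j)})}{\pbeh(R^{(j)})} \mid R^{(j+1)}\bigr]$ obtained from the policy-invariance of the transition kernels, followed by the law of total variance conditioned on $R^{(j+1)}$ together with Jensen's inequality and the conditional independence $Y \indep R^{(j)} \mid R^{(j+1)}$, with the final inequality handled by treating $Y$ itself as the terminal embedding $R^{(3)}$. Packaging the per-step argument as a reusable abstract lemma is a clean organizational choice but not a substantively different proof.
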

\begin{proof}[Proof of Proposition \ref{prop:mips_generalised}]
    First, we prove that the G-MIPS estimators are unbiased using induction on $j$. We define $R^{(0)} \coloneqq (X, A)$ and $\hat{\theta}^{(0)}_{\textup{G-MIPS}}$ defined as
    \[
    \hat{\theta}^{(0)}_{\textup{G-MIPS}} \coloneqq \frac{1}{n}\sum_{i=1}^n \frac{\ptar(r_i^{(0)})}{\pbeh(r_i^{(0)})}\, y_i,
    \]
    recovers the IPW estimator $\thetaipw$. When $j=0$, we know that $\hat{\theta}^{(0)}_{\textup{G-MIPS}} = \thetaipw$ is unbiased. 
    Now, assume that $\Ebeh[\hat{\theta}^{(j)}_{\textup{G-MIPS}}] = \Etar[Y]$.

    Conditional on $R^{(j)}$, $R^{(j+1)}$ does not depend on the policy. Therefore, 
    \begin{align*}
        \frac{\ptar(r^{(j)})}{\pbeh(r^{(j)})} = \frac{\ptar(r^{(j)})\,p(r^{(j+1)}\mid r^{(j)}) }{\pbeh(r^{(j)})\,p(r^{(j+1)}\mid r^{(j)})} = \frac{\ptar(r^{(j)}, r^{(j+1)})}{\pbeh(r^{(j)}, r^{(j+1)})}.
    \end{align*}
    And therefore,
    \begin{align*}
        \frac{\ptar(r^{(j+1)})}{\pbeh(r^{(j+1)})} &= \int_{r^{(j)}} \frac{\ptar(r^{(j)}, r^{(j+1)})}{\pbeh(r^{(j)}, r^{(j+1)})} \, \pbeh(r^{(j)} \mid r^{(j+1)}) \,\mathrm{d} r^{(j)}\\ 
        &= \int_{r^{(j)}} \frac{\ptar(r^{(j)})}{\pbeh(r^{(j)})} \,\pbeh(r^{(j)} \mid r^{(j+1)}) \,\mathrm{d} r^{(j)}\\ 
        &= \Ebeh\left[\frac{\ptar(R^{(j)})}{\pbeh(R^{(j)})} \Bigg|  R^{(j+1)}=r^{(j+1)}\right].
    \end{align*}
    Using this and the fact that $R^{(j)}\indep Y \mid R^{(j+1)}$, we get that
    \begin{align*}
        \Ebeh\left[\hat{\theta}^{(j+1)}_{\textup{G-MIPS}} \right] &= \Ebeh\left[\frac{\ptar(R^{(j+1)})}{\pbeh(R^{(j+1)})}\, Y \right]\\
        &= \Ebeh\left[\frac{\ptar(R^{(j+1)})}{\pbeh(R^{(j+1)})}\, \Ebeh[Y| R^{(j+1)}] \right]\\
        &= \Ebeh\left[\Ebeh\left[\frac{\ptar(R^{(j)})}{\pbeh(R^{(j)})} \Bigg|  R^{(j+1)}\right]\, \Ebeh[Y| R^{(j+1)}] \right] \\
        &= \Ebeh\left[\Ebeh\left[\frac{\ptar(R^{(j)})}{\pbeh(R^{(j)})} \, Y \Bigg|  R^{(j+1)}\right]\right]\\
        &= \Ebeh\left[ \frac{\ptar(R^{(j)})}{\pbeh(R^{(j)})} \, Y \right]\\
        &= \Ebeh\left[\hat{\theta}^{(j)}_{\textup{G-MIPS}} \right] = \Etar[Y].
    \end{align*}
    Next, to prove the variance result we consider the difference
    \begin{align*}
        &\Vbeh[\hat{\theta}^{(j)}_{\textup{G-MIPS}}] - \Vbeh[\hat{\theta}^{(j+1)}_{\textup{G-MIPS}}] \\
        &= \frac{1}{n}\left(\Vbeh\left[\frac{\ptar(R^{(j)})}{\pbeh(R^{(j)})}\, Y\right] - \Vbeh\left[\frac{\ptar(R^{(j+1)})}{\pbeh(R^{(j+1)})}\, Y\right]\right) \\
        &= \frac{1}{n}\Bigg(\Vbeh\left[ \Ebeh\left[\frac{\ptar(R^{(j)})}{\pbeh(R^{(j)})}\, Y \Bigg| R^{(j+1)} \right] \right] + \Ebeh\left[ \Vbeh\left[\frac{\ptar(R^{(j)})}{\pbeh(R^{(j)})}\, Y \Bigg| R^{(j+1)} \right] \right] \\
        &\qquad- \Vbeh\left[\frac{\ptar(R^{(j+1)})}{\pbeh(R^{(j+1)})}\, \Ebeh[Y\mid R^{(j+1)}]\right] - \Ebeh \left[\left(\frac{\ptar(R^{(j+1)})}{\pbeh(R^{(j+1)})}\right)^2\,\Vbeh[Y\mid R^{(j+1)}] \right] \Bigg)
    \end{align*}
    where in the last step we use the law of total variance. Now, using the fact that $R^{(j)}\indep Y \mid R^{(j+1)}$, we can rewrite the expression above as
    \begin{align*}
        &= \frac{1}{n}\Bigg(\Vbeh\left[ \Ebeh\left[\frac{\ptar(R^{(j)})}{\pbeh(R^{(j)})}\Bigg| R^{(j+1)} \right]\, \Ebeh[Y | R^{(j+1)} ] \right] + \Ebeh\left[ \Vbeh\left[\frac{\ptar(R^{(j)})}{\pbeh(R^{(j)})}\, Y \Bigg| R^{(j+1)} \right] \right] \\
        &\qquad- \Vbeh\left[\frac{\ptar(R^{(j+1)})}{\pbeh(R^{(j+1)})}\, \Ebeh[Y\mid R^{(j+1)}]\right] - \Ebeh \left[\left(\frac{\ptar(R^{(j+1)})}{\pbeh(R^{(j+1)})}\right)^2\,\Vbeh[Y\mid R^{(j+1)}] \right] \Bigg)\\
        &= \frac{1}{n}\Bigg( \Ebeh\left[ \Vbeh\left[\frac{\ptar(R^{(j)})}{\pbeh(R^{(j)})}\, Y \Bigg| R^{(j+1)} \right] \right] - \Ebeh \left[\left(\frac{\ptar(R^{(j+1)})}{\pbeh(R^{(j+1)})}\right)^2\,\Vbeh[Y\mid R^{(j+1)}] \right]\Bigg).
    \end{align*}
    Moreover, again using the conditional independence $R^{(j)}\indep Y \mid R^{(j+1)}$, we can expand the first term in the expression above as follows:
    \begin{align*}
        \Ebeh\left[ \Vbeh\left[\frac{\ptar(R^{(j)})}{\pbeh(R^{(j)})}\, Y \Bigg| R^{(j+1)} \right] \right] &=
        \Ebeh\Bigg[ \Ebeh\left[\frac{\ptar^2(R^{(j)})}{\pbeh^2(R^{(j)})} \Bigg| R^{(j+1)} \right]\,\Ebeh[Y^2 | R^{(j+1)}] \\
        &\qquad- \left(\Ebeh\left[\frac{\ptar(R^{(j)})}{\pbeh(R^{(j)})} \Bigg| R^{(j+1)} \right] \Ebeh[Y | R^{(j+1)}] \right)^2 \Bigg]\\
        &\geq 
        \Ebeh\Bigg[ \left(\Ebeh\left[\frac{\ptar(R^{(j)})}{\pbeh(R^{(j)})} \Bigg| R^{(j+1)} \right]\right)^2 \,\Ebeh[Y^2 | R^{(j+1)}] \\
        &\qquad- \left(\frac{\ptar(R^{(j+1)})}{\pbeh(R^{(j+1)})} \Ebeh[Y | R^{(j+1)}] \right)^2 \Bigg]\\
        &= \Ebeh\Bigg[ \left(\frac{\ptar(R^{(j+1)})}{\pbeh(R^{(j+1)})}\right)^2 \, \Vbeh[Y\mid R^{(j+1)}] \Bigg].
    \end{align*}
    Here, to get the inequality above, we use the fact that $\E[X^2] \geq (\E[X])^2$. Putting this together, we get that $\Vbeh[\hat{\theta}^{(j)}_{\textup{G-MIPS}}] - \Vbeh[\hat{\theta}^{(j+1)}_{\textup{G-MIPS}}] \geq 0$.

    Moreover, the result $\Vbeh[\hat{\theta}^{(2)}_{\textup{G-MIPS}}] \geq \Vbeh[\thetamr]$ follows straightforwardly from above by defining $R^{(3)} \coloneqq Y$. Then, the embeddings satisfy the causal structure 
    \[
    R^{(0)} \rightarrow R^{(1)} \rightarrow R^{(2)}  \rightarrow R^{(3)} \rightarrow Y.
    \]
    Using the result above, we know that $\Vbeh[\hat{\theta}^{(2)}_{\textup{G-MIPS}}] \geq \Vbeh[\hat{\theta}^{(3)}_{\textup{G-MIPS}}]$. But now it is straightforward to see that $\hat{\theta}^{(3)}_{\textup{G-MIPS}} = \thetamr$, and the result follows.
\end{proof}

\paragraph{Intuition}
Here, $R^{(j+1)}$ can be thought of as the embedding obtained by `filtering out' relevant information about $Y$ from $R^{(j)}$. As such, the amount of `redundant' information regarding the outcome $Y$ decreases successively along the sequence $R^{(0)} (\coloneqq (X, A)), R^{(1)}, R^{(2)}$. As a result, the G-MIPS estimators which only consider the shift in the marginal distributions of $R^{(j)}$ due to policy shift become increasingly efficient with decreasing variance as $j$ increases. Define the representation $R^{(3)} \coloneqq Y$, then the corresponding G-MIPS estimator reduces to the MR estimator, i.e., $\hat{\theta}^{(3)}_{\textup{G-MIPS}} = \thetamr$. Moreover, this estimator has minimum variance among all the G-MIPS estimators $\{\hat{\theta}^{(j)}_{\textup{G-MIPS}}\}_{0\leq j\leq k}$, as the representation $R^{(3)}$ contains precisely the least amount of information necessary to obtain the outcome $Y$. In other words, $Y$ itself serves as the `best embedding' of covariate-action pair $R^{(0)}$ which contains all relevant information regarding $Y$. We verify this empirically in Appendix \ref{subsec:mips-empirical} by reproducing the experimental setup in \cite{saito2022off} along with the MR baseline. Additionally, the MR estimator does not rely on assumptions like \ref{assum:indep-general} for unbiasedness. 

In addition to this, solving the regression problem in Eq. \eqref{eq:embedding-ratio-estimation} will typically be more difficult when $R$ is higher dimensional (as is likely to be the case for many choices of embeddings $R$), leading to high bias. In contrast, for MR the embedding $R=Y$ is one dimensional and therefore the regression problem is significantly easier to solve and yields lower bias. Our empirical results in Appendix \ref{app:experiments} confirm this.

\subsection{Doubly robust G-MIPS estimators}
Consider the setup for the G-MIPS estimator shown in Figure \ref{fig:embedding_single}. In this case, we can derive a doubly robust extension of the G-MIPS estimator, denoted as GM-DR, which uses an estimate of the conditional mean $\tilde{\mu}(r) \approx \E[Y\mid R=r]$ as a control variate to decrease the variance of G-MIPS estimator. This can be explicitly written as follows:
\begin{align}
\thetagmdr \coloneqq \frac{1}{n} \sum_{i=1}^n \frac{\ptar(r_i)}{\pbeh(r_i)}\,(y_i - \tilde{\mu}(r_i)) + \tilde{\eta}(\tar). \label{eq:gmips-dr}    
\end{align}
where $\tilde{\eta}(\tar) = \frac{1}{n} \sum_{i=1}^n \sum_{r' \in \mathcal{R}} \tilde{\mu}(r') \, \ptar(r' \mid x_i)$ is the analogue of the direct method. Here, $\mathcal{R}$ denotes the space of the possible of the representations $R$\footnote{the $\sum_{r' \in \mathcal{R}}$ can be replaced with $\int_{r' \in \mathcal{R}} \mathrm{d}r'$ when $\mathcal{R}$ is continuous}. Moreover, given the density $p(r \mid x, a)$, we can compute $\ptar(r\mid x)$ using
\[
\ptar(r\mid x) = \sum_{a' \in \Aspace} p(r \mid x, a')\,\tar(a'\mid x).
\]
It is straightforward to extend ideas from \cite{dudik2014doubly} to show that estimator $\thetagmdr$ is doubly robust in that it will yield accurate value estimates if either the importance weights $\frac{\ptar(r)}{\pbeh(r)}$ or the outcome model $\tilde{\mu}(r)$ is well estimated. 

\paragraph{There is no analogous DR extension of the MR estimator}
A consequence of considering the embedding $R=Y$ (as in MR) is that in this case we do not have an analogous doubly robust extension as above. To see why this is the case, note that when $R=Y$, we get that $\tilde{\mu}(r) = \E[Y\mid R=r] = \E[Y\mid Y=y] = y$. If we substitute this $\tilde{\mu}(r)$ in \eqref{eq:gmips-dr}, we are simply left with $\tilde{\eta}(\tar)$ on the right hand side (as the first term cancels out). This means that the resulting estimator does not retain the doubly robust nature as we no longer obtain an accurate estimate if either the outcome model or the importance ratios are well estimated.
\section{Application to causal inference}\label{app:causal-inference}
In this section, we investigate the application of the MR estimator for the estimation of average treatment effect (ATE). In this setting, we suppose that $\Aspace = \{0, 1\}$, and the goal is to estimate ATE defined as follows:
\[
\ate \coloneqq \E[Y(1)-Y(0)]
\]
Here, we use the potential outcomes notation \citep{robins1986new} to denote the outcome under a deterministic policy $\tar(a'\mid x) = \mathbbm{1}(a'=a)$ as $Y(a)$. 

Specifically, the IPW estimator applied to ATE estimation yields:
\[
\ateipw = \frac{1}{n} \sum_{i=1}^n \rho_{\ate}(a_i, x_i) \times y_i,
\]
where 
\[
\rho_{\ate}(a, x) \coloneqq \frac{\mathbbm{1}(a=1) - \mathbbm{1}(a=0)}{\beh (a|x)}.
\]
Similarly, the MR estimator can be written as
\[
\atemr = \frac{1}{n}\sum_{i=1}^n w_{\ate}(y_i)\times y_i, 
\]
where
\[
w_{\ate}(y) = \frac{p_{\pi^{(1)}}(y) - p_{\pi^{(0)}}(y)}{\pbeh(y)},
\] 
and $\pi^{(a)}(a'\mid x) \coloneqq \mathbbm{1}(a'=a)$ for $a\in \{0,1\}$.

Again, using the fact that $w_{\ate}(Y) \eqas \E[\rho_{\ate}(A, X)\mid Y]$, we can obtain $w_{\ate}$ by minimising a simple mean-squared loss:
\begin{align*}
    w_{\ate} =\arg \min_{f} \Ebeh \Big[\frac{\mathbbm{1}(A=1)- \mathbbm{1}(A=0)}{\beh (A|X)}-f(Y)\Big]^2.
\end{align*}
\begin{proposition}[Variance comparison with IPW ATE estimator]\label{prop:ate_variance}
When the weights $\rho_{\ate}(a, x)$ and $w_{\ate}(y)$ are known exactly, we have that $\V[\atemr] \leq \V[\ateipw]$. Specifically,
\begin{align*}
    \V[\ateipw] - \V[\atemr] = \frac{1}{n}\E\left[ \V\left[\rho_{\ate}(A, X) | Y \right]\,Y^2 \right] \geq 0.
\end{align*}
\end{proposition}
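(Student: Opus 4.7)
The plan is to mirror the proof of Proposition \ref{prop:var_mr} almost verbatim, since Proposition \ref{prop:ate_variance} has the identical structure with $\rho$ replaced by $\rho_{\ate}$ and $w$ replaced by $w_{\ate}$. The starting observation is that both estimators are unbiased for $\ate$ (each of $\pi^{(0)}$ and $\pi^{(1)}$ is supported within $\beh$ under the standard overlap condition), so their variance difference reduces to the difference of second moments divided by $n$:
\[
\V[\ateipw] - \V[\atemr] = \frac{1}{n}\left(\E[\rho_{\ate}(A,X)^2\, Y^2] - \E[w_{\ate}(Y)^2\, Y^2]\right).
\]

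The central ingredient is the identity $w_{\ate}(Y) \eqas \Ebeh[\rho_{\ate}(A,X)\mid Y]$, which the excerpt already invokes just above the proposition. I would establish it by decomposing
\[
w_{\ate}(y) = \frac{p_{\pi^{(1)}}(y)}{\pbeh(y)} - \frac{p_{\pi^{(0)}}(y)}{\pbeh(y)}
\]
and applying Lemma \ref{lemma:weights-est} to each of the two marginal ratios. Since $\rho_{\ate}(a,x) = \frac{\pi^{(1)}(a\mid x) - \pi^{(0)}(a\mid x)}{\beh(a\mid x)}$, linearity of conditional expectation then yields the identity directly.

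With the identity in hand, the final step is a tower-property calculation: conditioning on $Y$ gives $\E[\rho_{\ate}(A,X)^2\, Y^2] = \E[\Ebeh[\rho_{\ate}(A,X)^2 \mid Y]\, Y^2]$, whereas $w_{\ate}(Y)^2 = \Ebeh[\rho_{\ate}(A,X)\mid Y]^2$. Subtracting these two terms produces $\frac{1}{n}\E[\Vbeh[\rho_{\ate}(A,X)\mid Y]\, Y^2]$, which is manifestly non-negative, giving both the equality and the inequality in the statement. I do not anticipate any substantive obstacle; the only minor subtlety is that $\rho_{\ate}$ and $w_{\ate}$ are signed (in contrast to the strictly positive $\rho$ and $w$ of Proposition \ref{prop:var_mr}), but the argument only uses linearity of conditional expectation and the definition of conditional variance, both of which handle signs transparently.
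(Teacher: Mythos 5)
Your proposal is correct and follows essentially the same route as the paper: both rest on the identity $w_{\ate}(Y) \eqas \Ebeh[\rho_{\ate}(A,X)\mid Y]$ together with a conditional (law-of-total-variance) decomposition given $Y$, the only cosmetic difference being that the paper applies the law of total variance to $\V[\rho_{\ate}(A,X)\,Y]$ directly while you subtract second moments after noting the means agree. Your remark that the signed nature of $\rho_{\ate}$ and $w_{\ate}$ is harmless is accurate, and deriving the conditional-expectation identity by applying Lemma \ref{lemma:weights-est} to each of the two marginal ratios and using linearity is a valid justification of a step the paper simply asserts.
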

\begin{proof}[Proof of Proposition \ref{prop:ate_variance}] We have
\begin{align}
    \V[\ateipw] - \V[\atemr] &= \frac{1}{n}\left( \V[\rho_{\ate}(A, X)\, Y] - \V[w_{\ate}(Y)\,Y] \right). \label{eq:variance_ate_ipw_minus_mr}
\end{align}
Using the tower law of variance, we get that
\begin{align*}
    \V[\rho_{\ate}(A, X)\, Y] 
    &= \V[\E[\rho_{\ate}(A, X)\,  Y\mid Y]] + \E[\V[\rho_{\ate}(A, X)\, Y\mid Y]]\\
    &= \V[\E[\rho_{\ate}(A, X)\mid Y]\,  Y] + \E[\V[\rho_{\ate}(A, X)\mid Y]\,Y^2]\\
    &= \V[w_{\ate}(Y)\,Y] + \E[\V[\rho_{\ate}(A, X)\mid Y]\,Y^2].
\end{align*}
Putting this together with \eqref{eq:variance_ate_ipw_minus_mr} we obtain,
\begin{align*}
    \V[\ateipw] - \V[\atemr] &= \frac{1}{n} \E[\V[\rho_{\ate}(A, X)\mid Y]\,Y^2],
\end{align*}
which straightforwardly leads to the result.
\end{proof}

Given the above definitions, the IPW estimator for $\E[Y(a)]$ would only consider datapoints with $A=a$, as it weights the samples using the policy ratios $\mathbbm{1}(A=a)/\beh(A|X)$ which are only non-zero when $A=a$. 
This is however not the case with the MR estimator, as it uses the weights $\ptar(Y)/\pbeh(Y)$ which are not necessarily zero for $A\neq a$. Therefore, MR uses all evaluation datapoints $\D$ when estimating $\E[Y(a)]$. The MR estimator therefore leads to a more efficient use of evaluation data in this example. 

Likewise, the doubly robust (DR) estimator applied to ATE estimation yields,
\begin{align*}
    \atedr \coloneqq \frac{1}{n} \sum_{i=1}^n \rho_{\ate}(a_i, x_i)\,\left(y_i - \hat{\mu}(a_i, x_i)\right) + \frac{1}{n} \sum_{i=1}^n \left( \hat{\mu}(1, x_i)-  \hat{\mu}(0, x_i)\right),
\end{align*}
where $\hat{\mu}(a, x)\approx \E[Y\mid X=x, A=a]$. 
Like in classical off-policy evaluation, DR yields an accurate estimator of ATE when either the weights $\rho_{\ate}(a, x)$ or the outcome model i.e., $\hat{\mu}(a, x) = \E[Y\mid X=x, A=a]$, are well estimated.
However, despite this doubly robust nature of the estimator, we can show that the variance of the DR estimator may be higher than that of the MR estimator in many cases. The following result formalises this variance comparison between the DR and MR estimators, and is analogous to the result in Proposition \ref{prop:var_dr} derived for classical off-policy evaluation. 
\begin{proposition}[Variance comparison with DR ATE estimator]\label{prop:ate_var_dr}
    When the weights $\rho_{\ate}(a, x)$ and $w_{\ate}(y)$ are known exactly,
    \begin{align*}
    \V[\atedr] - \V[\atemr] \geq \frac{1}{n}\E \left[ \V\left[ \rho_\ate(A, X)\,Y \mid Y \right] -  \V\left[ \rho_\ate(A, X)\mu(A, X) \mid X \right] \right],
\end{align*}
where $\mu(A, X) \coloneqq \E[Y\mid X, A]$.
\end{proposition}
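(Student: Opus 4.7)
The plan is to mirror the proof of Proposition \ref{prop:var_dr}, with $\rho_\ate$ playing the role of the policy ratio $\rho$. The enabling observation is that the ``direct method'' piece of $\atedr$ can be rewritten as a conditional expectation of a reweighted outcome model, namely
\[
\hat{\mu}(1,X) - \hat{\mu}(0,X) \;=\; \sum_{a'\in\{0,1\}}[\ind(a'=1)-\ind(a'=0)]\,\hat{\mu}(a',X) \;=\; \Ebeh\bigl[\rho_\ate(A,X)\,\hat{\mu}(A,X)\,\bigm|\,X\bigr].
\]
This lets me express $\atedr$ in the same structural form as $\thetadr$ in Proposition \ref{prop:var_dr}, so the bulk of the calculation carries over almost verbatim.

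First I would apply the law of total variance conditionally on $(X,A)$: the only randomness that is not $(X,A)$-measurable lives in $Y$, so the inner variance contributes $\Ebeh[\rho_\ate^2(A,X)\,\V[Y\mid X,A]]$, while the inner conditional expectation replaces $Y$ by $\mu(A,X)$. This gives
\[
n\,\V[\atedr] \;=\; \Ebeh[\rho_\ate^2(A,X)\,\V[Y\mid X,A]] \;+\; \Vbeh\!\left[\rho_\ate(A,X)(\mu(A,X)-\hat{\mu}(A,X)) + \Ebeh[\rho_\ate(A,X)\hat{\mu}(A,X)\mid X]\right].
\]
Then I would apply the law of total variance again, this time conditioning on $X$, to split the second term. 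The $\hat{\mu}$-dependent pieces collapse into a single inner conditional expectation $\Ebeh[\rho_\ate(A,X)\mu(A,X)\mid X]$ plus a non-negative residual conditional variance; dropping the latter yields
\[
n\,\V[\atedr] \;\geq\; \Ebeh[\rho_\ate^2(A,X)\,\V[Y\mid X,A]] \;+\; \Vbeh\!\left[\Ebeh[\rho_\ate(A,X)\mu(A,X)\mid X]\right].
\]

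To match this against $\V[\atemr]$, I would invoke Proposition \ref{prop:ate_variance}, which gives $n(\V[\ateipw]-\V[\atemr]) = \E[\V[\rho_\ate(A,X)\mid Y]\,Y^2]$, and then expand $\V[\rho_\ate(A,X)\,Y]$ via two applications of the law of total variance (once conditional on $(X,A)$, once conditional on $X$) to obtain
\[
\V[\rho_\ate(A,X)\,Y] \;=\; \Ebeh[\rho_\ate^2(A,X)\V[Y\mid X,A]] + \Vbeh[\Ebeh[\rho_\ate(A,X)\mu(A,X)\mid X]] + \Ebeh[\Vbeh[\rho_\ate(A,X)\mu(A,X)\mid X]].
\]
Substituting this identity into the lower bound for $n\V[\atedr]$ and subtracting $n\V[\atemr]$ gives exactly the claimed inequality after using Proposition \ref{prop:ate_variance} to swap $\V[\rho_\ate(A,X)\,Y] - \V[w_\ate(Y)\,Y]$ for $\E[\V[\rho_\ate(A,X)\mid Y]Y^2] = \E[\V[\rho_\ate(A,X)\,Y \mid Y]]$ (the last equality holding since $Y$ is fixed inside the inner variance).

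The main obstacle is bookkeeping rather than any new idea: keeping track of which variance is conditioned on $(X,A)$ versus $X$, and correctly collecting the cross terms so that the $\hat{\mu}$ terms are separated into a non-negative residual (which we discard) and the $\mu$-only part (which we retain). Once the identity $\hat{\mu}(1,X)-\hat{\mu}(0,X) = \Ebeh[\rho_\ate(A,X)\hat{\mu}(A,X)\mid X]$ is in hand, the algebra is a direct translation of the steps already carried out in the proof of Proposition \ref{prop:var_dr}.
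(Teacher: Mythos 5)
Your proposal is correct and follows essentially the same route as the paper's proof: the key identity $\hat{\mu}(1,X)-\hat{\mu}(0,X)=\Ebeh[\rho_\ate(A,X)\hat{\mu}(A,X)\mid X]$, two applications of the law of total variance (first on $(X,A)$, then on $X$) with the non-negative residual dropped, and the final substitution via Proposition \ref{prop:ate_variance} all match the paper's argument. No gaps.
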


\begin{proof}[Proof of Proposition \ref{prop:ate_var_dr}]
Using the law of total variance, we get that
\begin{align*}
    n\,\V[\atedr] &= \V[\rho_\ate(A, X)\,(Y -\hat{\mu}(A, X)) + (\hat{\mu}(1, X) - \hat{\mu}(0, X))]\\
    &= \V[ \E[\rho_\ate(A, X)\,(Y -\hat{\mu}(A, X)) + (\hat{\mu}(1, X) - \hat{\mu}(0, X))\mid X, A]] \\
    &\qquad+ \E[\V[\rho_\ate(A, X)\,(Y -\hat{\mu}(A, X)) + (\hat{\mu}(1, X) - \hat{\mu}(0, X))\mid X, A]]\\
    &= \V[ \rho_\ate(A, X)\,(\mu(A, X) -\hat{\mu}(A, X)) + (\hat{\mu}(1, X) - \hat{\mu}(0, X))]\\
    &\qquad+ \E[\rho^2_\ate(A, X)\V[Y\mid X, A]].
\end{align*}
    Again, using the law of total variance we can rewrite the first term on the RHS above as,
    \begin{align*}
        &\V[ \rho_\ate(A, X)\,(\mu(A, X) -\hat{\mu}(A, X)) + (\hat{\mu}(1, X) - \hat{\mu}(0, X))]\\
        &\quad= \V[\E[ \rho_\ate(A, X)\,(\mu(A, X) -\hat{\mu}(A, X)) + (\hat{\mu}(1, X) - \hat{\mu}(0, X))\mid  X]] \\
        &\qquad+ \E[\V[ \rho_\ate(A, X)\,(\mu(A, X) -\hat{\mu}(A, X)) + (\hat{\mu}(1, X) - \hat{\mu}(0, X))\mid  X]] \\
        &\quad\geq  \V[\E[ \rho_\ate(A, X)\,(\mu(A, X) -\hat{\mu}(A, X)) + (\hat{\mu}(1, X) - \hat{\mu}(0, X))\mid  X]]\\
        &\quad=  \V[\E[ \rho_\ate(A, X)\,(\mu(A, X) -\hat{\mu}(A, X)) + \rho_\ate(A, X)\,\hat{\mu}(A, X)\mid  X]]\\
        &\quad=  \V[\E[ \rho_\ate(A, X)\,\mu(A, X)\mid  X]],
    \end{align*}
    where, in the second last step above we use the fact that 
    \[
    \E[\rho_\ate(A, X)\,\hat{\mu}(A, X)\mid  X] = \hat{\mu}(1, X) - \hat{\mu}(0, X).
    \]

    Putting this together, we get that
    \begin{align*}
        n\,\V[\atedr] \geq \V[\E[ \rho_\ate(A, X)\,\mu(A, X)\mid  X]] + \E[\rho^2_\ate(A, X)\V[Y\mid X, A]].
    \end{align*}
    Therefore, 
    \begin{align*}
        &n\,(\V[\atedr] - \V[\atemr]) \\
        &\quad\geq \V[\E[ \rho_\ate(A, X)\,\mu(A, X)\mid  X]] + \E[\rho^2_\ate(A, X)\V[Y\mid X, A]] - \V[w_\ate(Y)\,Y]\\
        &\quad= \V[\E[ \rho_\ate(A, X)\,\mu(A, X)\mid  X]] + \E[\V[\rho_\ate(A, X)\,Y\mid X, A]] - \V[w_\ate(Y)\,Y]\\
        &\quad= \V[\E[ \rho_\ate(A, X)\,\mu(A, X)\mid  X]] + \V[\rho_\ate(A, X)\, Y] - \V[\E[\rho_\ate(A, X)\,Y\mid X, A]]\\
        &\qquad- \V[w_\ate(Y)\,Y]\\
        &\quad= \V[\E[ \rho_\ate(A, X)\,\mu(A, X)\mid  X]] + \V[\E[\rho_\ate(A, X)\mid Y]\, Y] + \E[\V[\rho_\ate(A, X)\mid Y]\,Y^2]\\
        &\qquad- \V[\E[\rho_\ate(A, X)\,Y\mid X, A]]- \V[w_\ate(Y)\,Y]\\
        &\quad= \V[\E[ \rho_\ate(A, X)\,\mu(A, X)\mid  X]] + \V[w_\ate(Y)\, Y] + \E[\V[\rho_\ate(A, X)\mid Y]\,Y^2]\\
        &\qquad- \V[\E[\rho_\ate(A, X)\,Y\mid X, A]]- \V[w_\ate(Y)\,Y]\\
        &\quad= \V[\E[ \rho_\ate(A, X)\,\mu(A, X)\mid  X]]- \V[\E[\rho_\ate(A, X)\,Y\mid X, A]] + \E[\V[\rho_\ate(A, X)\mid Y]\,Y^2]\\
        &\quad= \V[\rho_\ate(A, X)\,\mu(A, X)] - \E[\V[\rho_\ate(A, X)\,\mu(A, X)\mid  X]] - \V[\rho_\ate(A, X)\,\mu(A, X)]\\
        &\qquad+ \E[\V[\rho_\ate(A, X)\mid Y]\,Y^2]\\
        &\quad= \E \left[ \V\left[ \rho_\ate(A, X) \mid Y \right]\, Y^2 -  \V\left[ \rho_\ate(A, X)\mu(A, X) \mid X \right] \right].
    \end{align*}
\end{proof}

Proposition \ref{prop:ate_var_dr} shows that if $\V\left[Y\, \rho_\ate(A, X) \mid Y \right]$ is greater than $\V\left[ \rho_\ate(A, X)\mu(A, X) \mid X \right]$ on average, the variance of the MR estimator will be less than that of the DR estimator. Intuitively, this is likely to happen when the dimension of context space $\Xspace$ is high because in this case, the conditional variance over $X$ and $A$, $\V\left[Y\, \rho_\ate(A, X) \mid Y \right]$ is likely to be greater than the conditional variance over $A$, $\V\left[ \rho_\ate(A, X)\mu(A, X) \mid X \right]$.

\section{Experimental Results}\label{app:experiments}
In this section, we provide additional experimental details for the results presented in the main text. We also include extensive experimental results to provide further empirical evidence in favour of the MR estimator. 

\paragraph{Computational details}
We ran our experiments on Intel(R) Xeon(R) CPU E5-2690 v4 @ 2.60GHz with 8GB RAM
per core. We were able to use 150 CPUs in parallel to iterate over different configurations and seeds.
However, we would like to note that for each run our algorithms only requires 1 CPU and at most 30 minutes to run as our neural networks are relatively small. Throughout our experiments, whenever the outcome $Y$ was continuous, we used a fully connected neural network with three hidden layers with 512, 256 and 32 nodes respectively (and ReLU activation function) to estimate the weights $\hat{w}(y)$. On the other hand, when the outcome is discrete we can directly estimate $\hat{w}(y) \approx \E[\hat{\rho}(A, X)\mid Y=y]$ by calculating the sample mean of $\hat{\rho}(A, X)$ on samples with $Y=y$. Additionally, for each configuration of parameters in our experiments, we ran experiments for 10 different seeds (in \{0, 1, \ldots, 9\}).

\subsection{Alternative methodology of estimating MR}
In addition to the OPE baselines like IPW, DM and DR estimators considered in the main text, we also include empirically investigate an alternative methodology of estimating MR.
Below we describe this methodology, denoted as `MR (alt)', in greater detail:
\subsubsection{MR (alt)}\label{sec:alt-estimation-method}
Recall our definition of MR estimator:
\[
\thetamr \coloneqq \frac{1}{n} \sum_{i=1}^n w(y_i)\,y_i.
\]
In the main text, we propose estimating the weights $w(y)$ first and using this to estimate $\thetamr$ using the above expression. Alternatively, we can estimate $h(y) \coloneqq y\,w(y)$ using 
\begin{align*}
    h = \arg\min_{f} \, \Ebeh \left[ \Bigg(Y\,\frac{\tar(A|X)}{\beh (A|X)}-f(Y)\Bigg)^2\right].
\end{align*}
Subsequently, the MR estimator can be written as:
\[
\thetamr = \frac{1}{n}\sum_{i=1}^n h(y_i).
\]
We refer to this alternative methodology as `MR-alt' and compare it empirically against the original methodology (which we simply refer to as `MR'). 
In general, it is difficult to say which of the two methods will perform better. Intuitively speaking, in cases where the behaviour of the quantity $Y\,\frac{\tar(A|X)}{\beh (A|X)}$ with varying $Y$ is `smoother' than that of $\frac{\tar(A|X)}{\beh (A|X)}$, the alternative method is expected to perform better. Our empirical results in the next sections show that the relative performance of the two methods varies for different data generating mechanisms.

\subsection{Synthetic data experiments}\label{subsec:mips-empirical}
Here, we include additional experimental details for the synthetic data experiments presented in Section \ref{sec:exp-synth} for completeness. For this experiment, we use the same setup as the synthetic data experiment in \cite{saito2022off}, reproduced by reusing their code with minor modifications.

\paragraph{Setup}
Here, we sample the $d$-dimensional context vectors $x$ from a standard normal distribution. 
The setup used also includes $3$-dimensional categorical action embeddings $E \in \mathcal{E}$, which are sampled from the following conditional distribution given action $A=a$,
\[
p(e\mid a) = \prod_{k=1}^{3}\frac{\exp{(\alpha_{a, e_{k}})}}{\sum_{e'\in \mathcal{E}_k} \exp{(\alpha_{a, e'})}},
\]
which is independent of the context $X$. $\{\alpha_{a, e_k}\}$ is a set of parameters sampled independently from the standard normal distribution. Each dimension of $\mathcal{E}$ has a cardinality of $10$, i.e., $\mathcal{E}_k = \{1, 2, \dots, 10\}$.

\paragraph{Reward function}
The expected reward is then defined as:
\[
q(x, e) = \sum_{k=1}^{3} \eta_k \cdot (x^T\, M\, x_{e_k} + \theta_x^T\, x + \theta_e^T\, x_{e_k}),
\]
where $M$, $\theta_x$ and $\theta_e$ are parameter matrices or vectors sampled from a uniform distribution with range $[-1, 1]$. $x_{e_k}$ is a context vector corresponding to the $k$-th dimension of the action embedding, which is unobserved to the estimators. $\eta_k$ specifies the importance of the $k$-th dimension of the action embedding, sampled from Dirichlet distribution so that $\sum_{k=1}^{3} \eta_k = 1$. 

\paragraph{Behaviour and target policies}
The behaviour policy $\beh$ is defined by applying the softmax function to $q(x, a) = \E[q(X, E)\mid A=a, X=x]$ as 
\[
\beh(a\mid x) = \frac{\exp{(-q(x, a))}}{\sum_{a'\in \Aspace}\exp{(-q(x, a'))}}.
\]

For the target policy, we define the class of parametric policies,
\[
\pi^{\alpha^\ast}(a | x) = \alpha^\ast\,\ind(a = \arg\max_{a'\in \Aspace} q(x, a')) + \frac{1-\alpha^\ast}{|\Aspace|},
\]
where $\alpha^\ast \in [0, 1]$ controls the shift between the behaviour and target policies. As shown in the main text, as $\alpha^\ast \rightarrow 1$, the shift between behaviour and target policies increases.

\paragraph{Baselines}
In the main text, we compare MR with DM, IPW, DR and MIPS estimators. In addition to these baselines, here we also consider Switch-DR \citep{wang2017optimal} and DR with Optimistic Shrinkage (DRos) \citep{su2020doubly}.
Following \cite{saito2022off}, we use the random forest \citep{breiman2001machine} along with 2-fold cross-fitting \citep{newey2018cross} to obtain $\hat{q}(x, e)$ for DR and DM methods.
To estimate $\pbeh(a\mid x, e)$ for MIPS estimator, we use logistic regression. 
We also include the results for MR estimated using the alternative methodology described in Section \ref{sec:alt-estimation-method}. We refer to this as `MR (alt)'.

\paragraph{Estimation of behaviour policy $\hatbeh$ and marginal ratio $\hat{w}(y)$}
We do not assume that the true behaviour policy $\beh$ is known, and therefore estimate $\hatbeh$ using the available training data.
For the MR estimator, we estimate the behaviour policy using a random forest classifier trained on 50\% of the training data and use the rest of the training data to estimate the marginal ratios $\hat{w}(y)$ using multi-layer perceptrons (MLP). Moreover, for a fair comparison we use a different behaviour policy estimate $\hatbeh$ for all other baselines which is trained on the entire training data.

\begin{figure}[h!]
    \centering
	\begin{subfigure}{0.8\textwidth}
	    \centering
	    \includegraphics[width=1\textwidth]{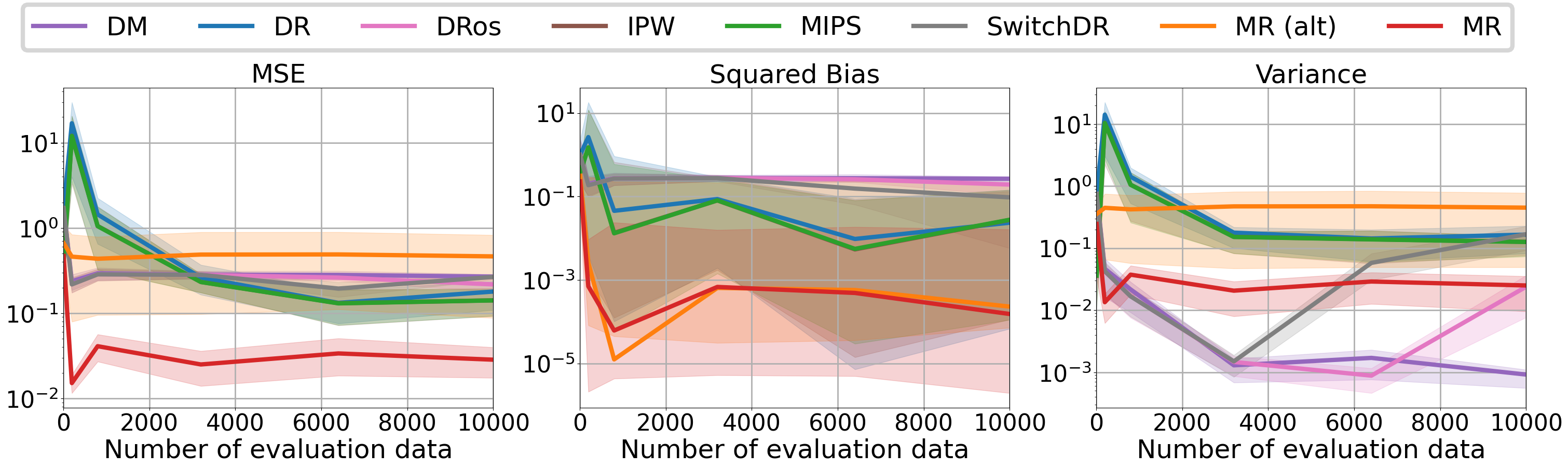}
	    \subcaption{$d=1000$, $n_{a}=100$, $\alpha^\ast = 0.8$}
	    \label{subfig:d-1000-na-250-neval-mips}
	\end{subfigure}\\
	\begin{subfigure}{0.8\textwidth} 
	    \centering
	    \includegraphics[width=1\textwidth]{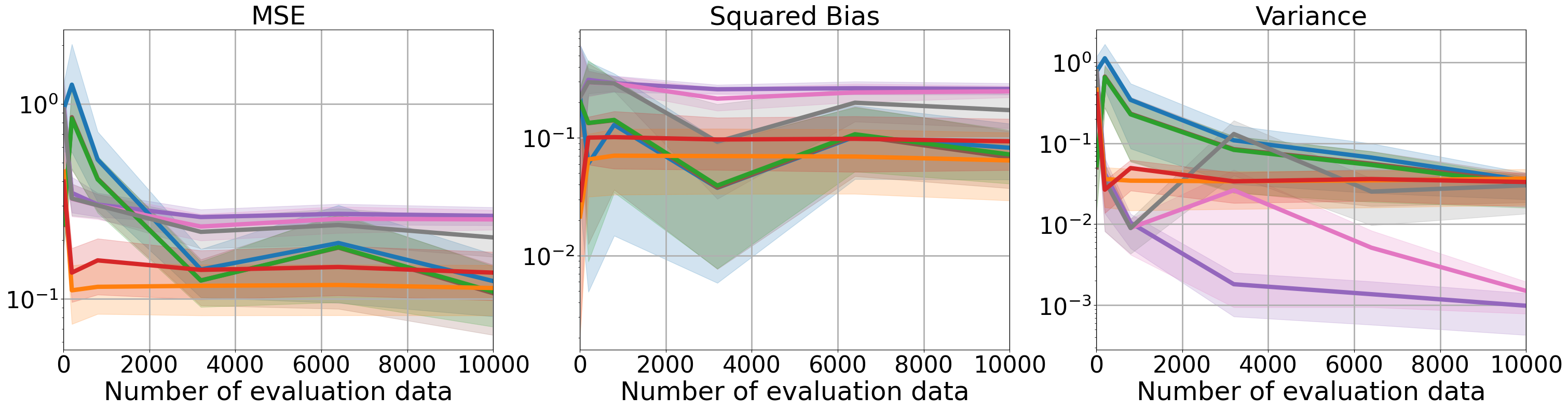}
	    \subcaption{$d=5000$, $n_{a}=250$, $\alpha^\ast = 0.8$}
	    \label{subfig:d-5000-na-250-neval-08-mips}
	\end{subfigure}\\
	\begin{subfigure}{0.8\textwidth} 
	    \centering
	    \includegraphics[width=1\textwidth]{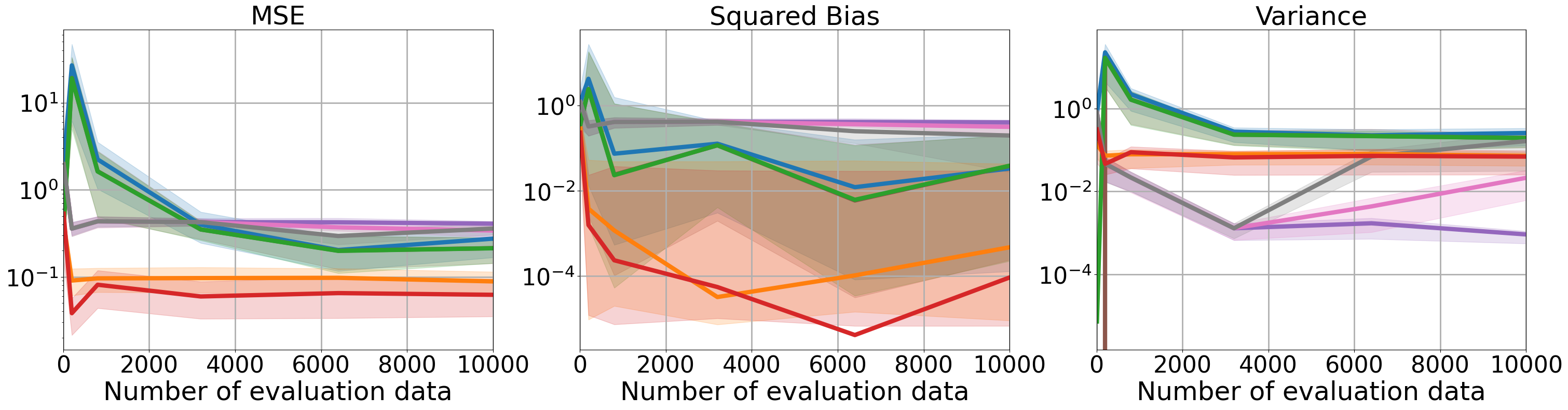}
	    \subcaption{$d=5000$, $n_{a}=250$, $\alpha^\ast = 1.0$}
	    \label{subfig:d-5000-na-250-neval-1-mips}
	\end{subfigure}
    \caption{MSE with varying size of evaluation dataset $n$ for different choices of parameters.}
    \label{fig:mse-vs-neval-mips}
\end{figure}

\begin{figure}[h!]
    \centering
	\begin{subfigure}{0.8\textwidth}
	    \centering
	    \includegraphics[width=1\textwidth]{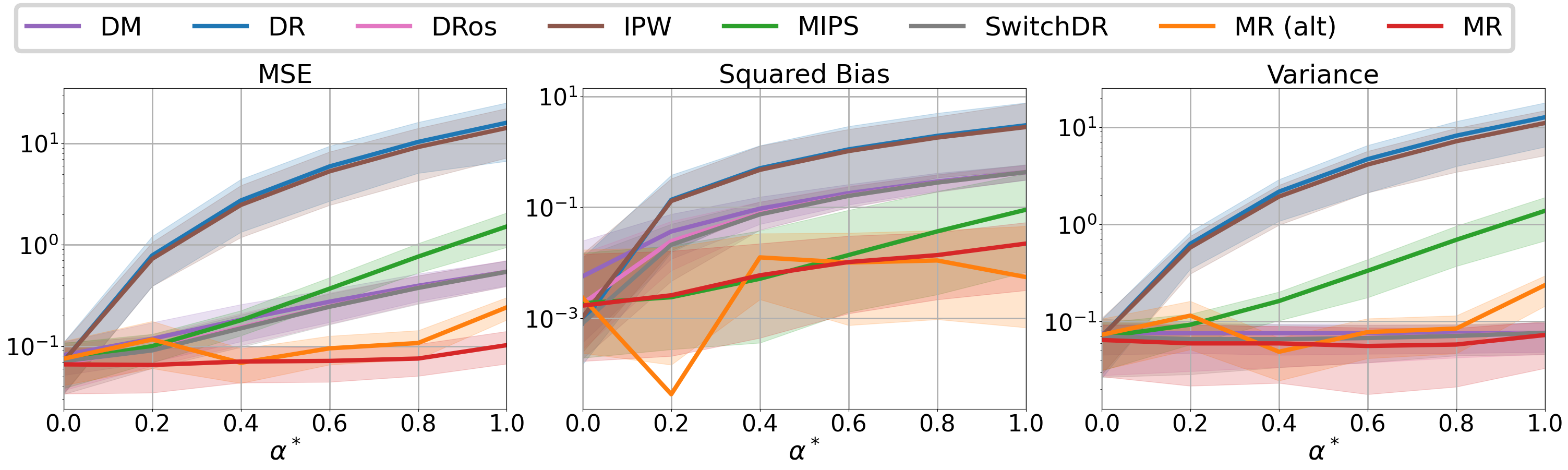}
	    \subcaption{$d=100$, $n_{a}=100$, $n = 100$}
	    \label{subfig:d-100-na-100-neval-100-alphatar-mips}
	\end{subfigure}\\
	\begin{subfigure}{0.8\textwidth} 
	    \centering
	    \includegraphics[width=1\textwidth]{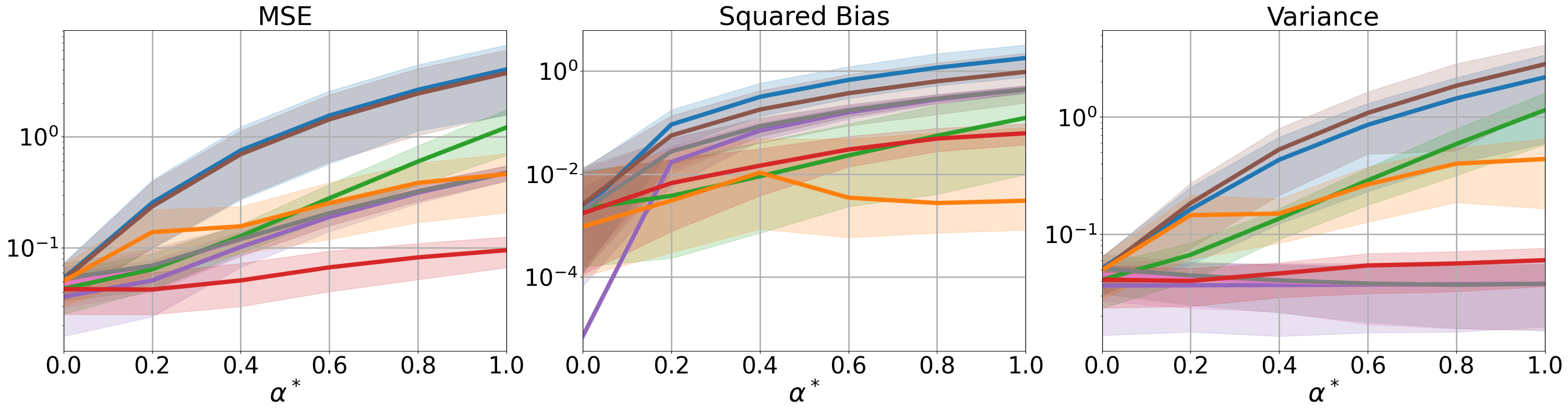}
	    \subcaption{$d=100$, $n_{a}=250$, $n = 100$}
	    \label{subfig:d-100-na-250-neval-100-alphatar-mips}
	\end{subfigure}\\
	\begin{subfigure}{0.8\textwidth} 
	    \centering
	    \includegraphics[width=1\textwidth]{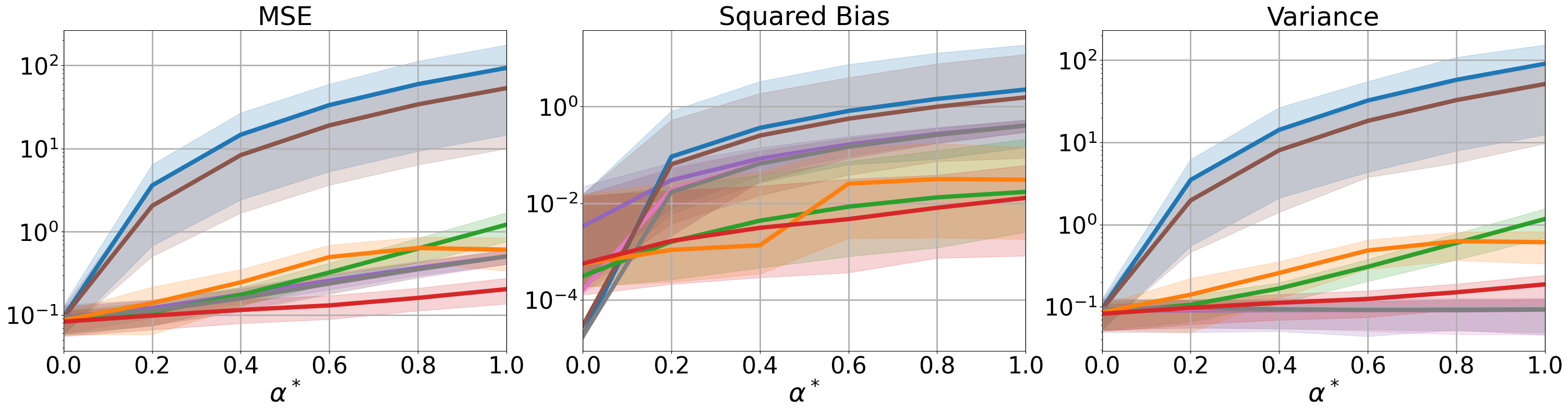}
	    \subcaption{$d=1000$, $n_{a}=250$, $n = 100$}
	    \label{subfig:d-1000-na-250-neval-100-alphatar-mips}
	\end{subfigure}
    \caption{MSE with varying $\alpha^\ast$ for different choices of parameters.}
    \label{fig:mse-vs-alphatar-mips}
\end{figure}

\subsubsection{Results}
For this experiment, the results are computed over 10 different sets of logged data replicated with different seeds, and in Figures \ref{fig:mse-vs-neval-mips} - \ref{fig:mse-vs-nac-mips} we use a total of $m=5000$ training data. 

\paragraph{Varying size of evaluation data $n$}
Figure \ref{fig:mse-vs-neval-mips} shows that MR outperforms the other baselines, in terms of MSE and squared bias, when the number of evaluation data $n\leq 1000$. Additionally, we observe that in this experiment, MR estimated using our original methods (`MR'), yields better results than the alternative method of estimating MR (`MR (alt)'). Moreover, while the variance of DM is lower than that of MR, the DM method has a high bias and consequently a high MSE. We note that while the difference between MSE and variance of MIPS and MR estimators decreases with increasing evaluation data size, MR still outperforms MIPS in terms of both MSE and variance.

\paragraph{Varying $\alpha^\ast$}
Figure \ref{fig:mse-vs-alphatar-mips} shows the results with increasing policy shift. It can be seen that overall MR methods achieve the smallest MSE with increasing policy shift. Moreover, the difference between MSE and variance of MR and IPW/DR methods increases with increasing policy shift, showing that MR performs especially better than these baselines when the difference between behaviour and target policies is large. Similarly, we observe in Figure \ref{fig:mse-vs-alphatar-mips} that as the shift between the behaviour and target policy increases with increasing $\alpha^\ast$, so does the difference between the MSE and variance of MR and the MIPS estimators. This shows that generally MR outperforms MIPS estimator in terms of variance and MSE, and that MR performs especially better than MIPS as the difference between behaviour and target policies increases.

\paragraph{Varying $d$ and $n_a$}
Figures \ref{fig:mse-vs-d-mips} and \ref{fig:mse-vs-nac-mips} show that MR outperforms the other baselines as the context dimensions and/or number of actions increase. In fact, these figures show that MR is significantly robust to increasing dimensions of action and context spaces, whereas baselines like IPW and DR perform poorly in large action spaces.

\paragraph{Varying $m$}
Figure \ref{fig:mse-vs-ntr-mips} shows the results with increasing number of training data $m$. We again observe that the MR methods `MR' and `MR (alt)' outperforms the other baselines in terms of the MSE and squared bias even when the number of training data is low. Moreover, the variance of both the MR estimators continues to improve with increasing number of training data.

In this experiment, we observe that overall `MR (alt)' performs worse than the original MR estimator (`MR' in the figures). However, as we observe in Appendix \ref{sec:app-additional-results}, this does not happen consistently across all experiments, which suggests that the comparative performance of the two MR methods depends on the data generating mechanism.

\begin{figure}[h!]
    \centering
	\begin{subfigure}{0.8\textwidth}
	    \centering
	    \includegraphics[width=1\textwidth]{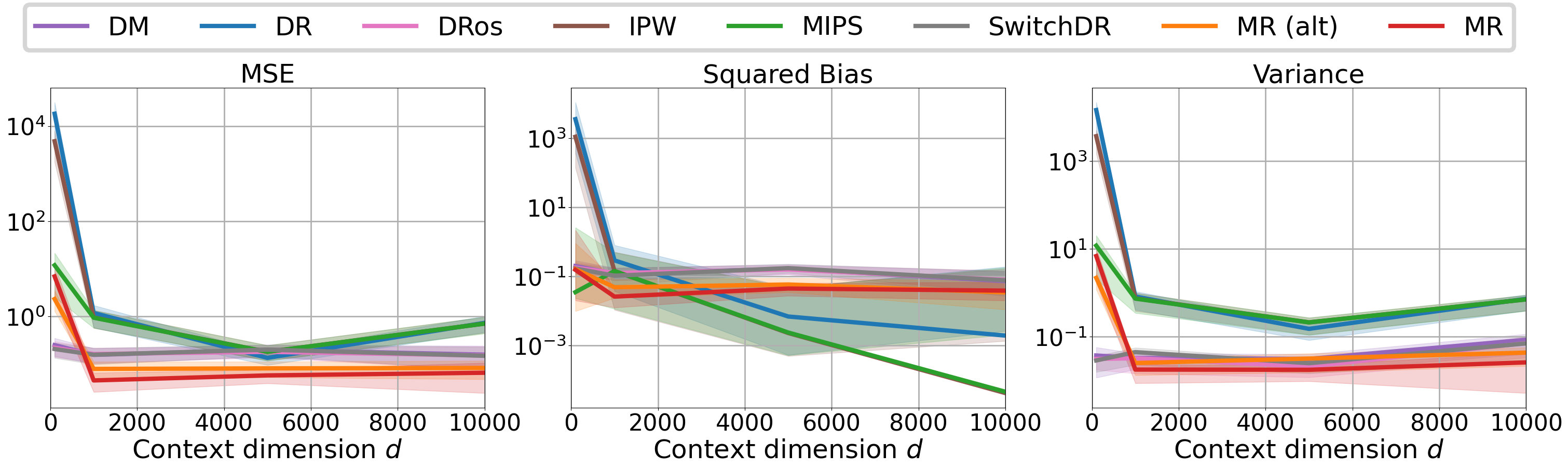}
	    \subcaption{$n_{a}=20$, $n = 200$, $\alpha^\ast = 0.8$}
	    \label{subfig:na-20-neval-200-alphatar-0-8-d-mips}
	\end{subfigure}\\
	\begin{subfigure}{0.8\textwidth} 
	    \centering
	    \includegraphics[width=1\textwidth]{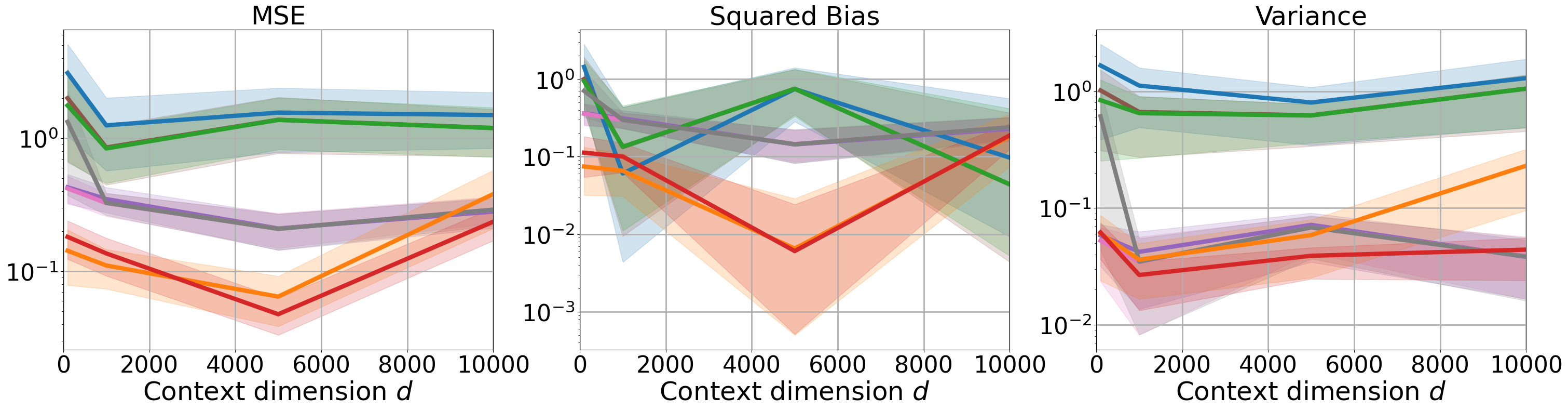}
	    \subcaption{$n_{a}=100$, $n = 200$, $\alpha^\ast = 0.8$}
	    \label{subfig:na-100-neval-200-alphatar-0-8-d-mips}
	\end{subfigure}\\
	\begin{subfigure}{0.8\textwidth} 
	    \centering
	    \includegraphics[width=1\textwidth]{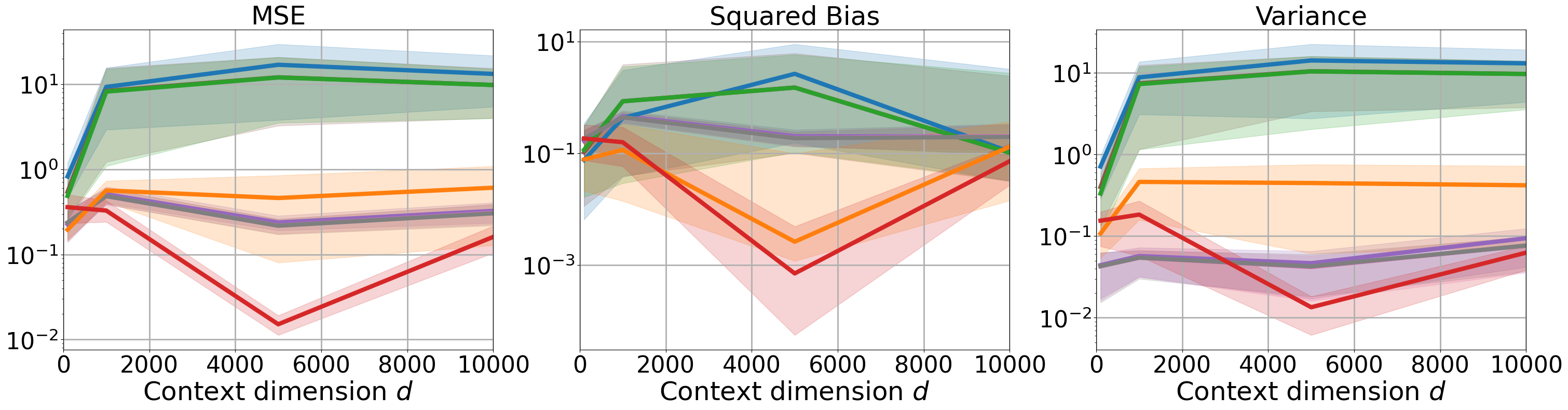}
	    \subcaption{$n_{a}=250$, $n = 200$, $\alpha^\ast = 0.8$}
	    \label{subfig:na-250-neval-200-alphatar-0-8-d-mips}
	\end{subfigure}
    \caption{MSE with varying context dimensions $d$ for different choices of parameters.}
    \label{fig:mse-vs-d-mips}
\end{figure}

\begin{figure}[h!]
    \centering
	\begin{subfigure}{0.8\textwidth}
	    \centering
	    \includegraphics[width=1\textwidth]{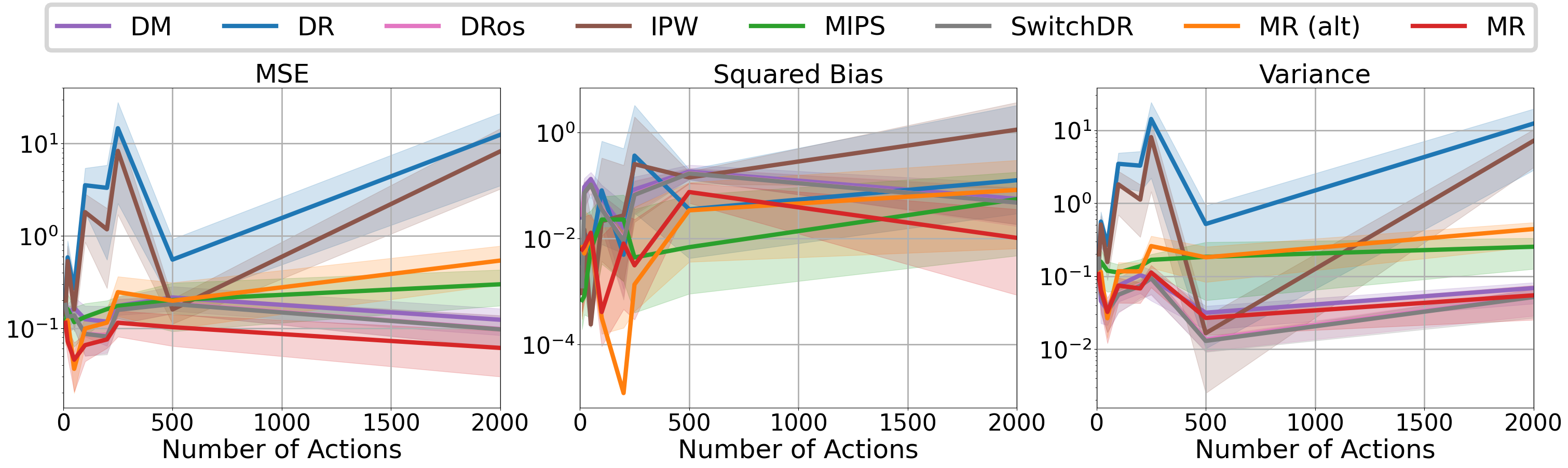}
	    \subcaption{$d=1000$, $n = 100$, $\alpha^\ast = 0.4$}
	    \label{subfig:d-1000-neval-100-alphatar-0-4-nac-mips}
	\end{subfigure}\\
	\begin{subfigure}{0.8\textwidth} 
	    \centering
	    \includegraphics[width=1\textwidth]{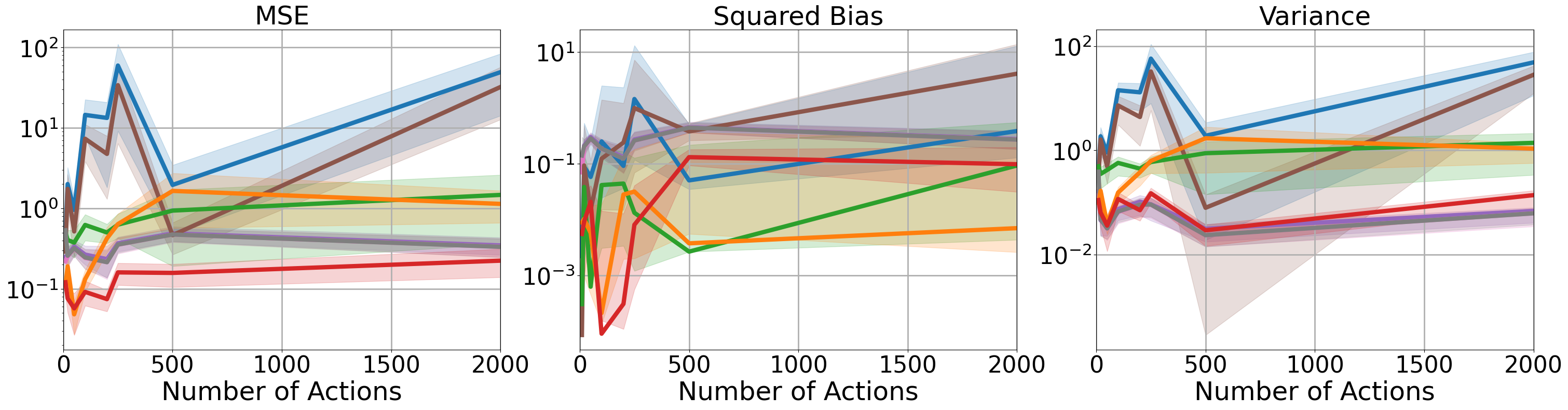}
	    \subcaption{$d=1000$, $n = 100$, $\alpha^\ast = 0.8$}
	    \label{subfig:d-1000-neval-100-alphatar-0-8-nac-mips}
	\end{subfigure}\\
	\begin{subfigure}{0.8\textwidth} 
	    \centering
	    \includegraphics[width=1\textwidth]{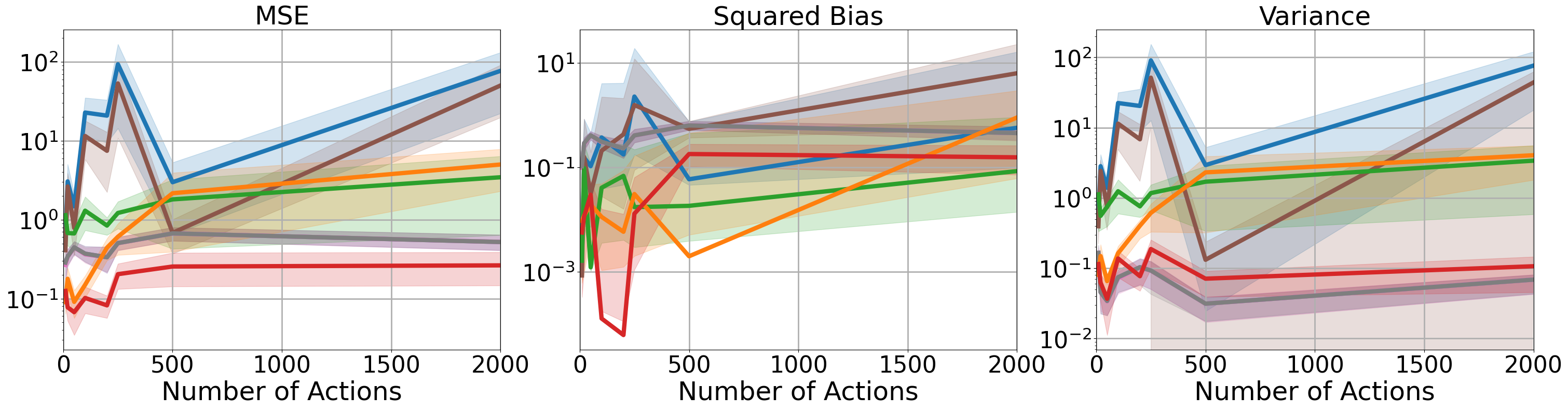}
	    \subcaption{$d=1000$, $n = 100$, $\alpha^\ast = 1.0$}
	    \label{subfig:d-1000-neval-100-alphatar-1-0-nac-mips}
	\end{subfigure}
    \caption{MSE with varying number of actions $n_a$ for different choices of parameters.}
    \label{fig:mse-vs-nac-mips}
\end{figure}

\subsubsection{Known policy ratios $\rho(a, x)$}
\begin{table}[t]
    \centering
        \caption{Mean-squared error results with 2 standard errors for synthetic data setup considered in Section \ref{sec:exp-synth} with $d=5000$, $n_a = 50$, $\alpha^\ast = 0.8$. We use a fixed budget of datapoints (denoted by $N$) for each baseline and in the case of MR we use $m=2000$ of the available datapoints to estimate $\hat{w}(y)$ and the rest of data to evaluate the MR estimator (i.e. $n = N-2000$ for MR). In contrast, for IPW and MIPS since the importance ratios are already known, we use all of the $N$ datapoints for evaluation of the off-policy value (i.e. $n=N$ for IPW and MIPS).}
    \label{tab:known_ratios}
    \begin{tiny}
    \begin{tabular}{l|llllll}
\toprule
& $N$ & 2800 & 3200 & 6400 & 10000 & 12000 \\
\midrule
\multirow{2}{*}{
\begin{tiny}
\textbf{GT weights $\rho(a, x)$ and estimated reward model $\hat{\mu}(a, x)$}
\end{tiny}}
& DM & 0.137$\pm$0.028 & 0.099$\pm$0.012 & 0.103$\pm$0.012 & 0.093$\pm$0.010 & 0.089$\pm$0.010 \\
\multirow{2}{*}{
\begin{tiny}
($m=2000$ used for training $\hat{\mu}(a, x)$ and $n=N-2000$ 
\end{tiny}}
& DR & 0.227$\pm$0.065 & 0.068$\pm$0.035 & 0.068$\pm$0.022 & \textbf{0.024$\pm$0.011} & 0.045$\pm$0.015 \\
\multirow{2}{*}{
\begin{tiny}
used for evaluation)
\end{tiny}} & DRos & 0.128$\pm$0.027 & 0.072$\pm$0.011 & 0.049$\pm$0.014 & 0.063$\pm$0.014 & 0.051$\pm$0.016 \\
& SwitchDR & 0.128$\pm$0.027 & 0.059$\pm$0.014 & 0.052$\pm$0.013 & 0.061$\pm$0.015 & 0.056$\pm$0.016 \\
\hline
\\
\multirow{2}{*}{
\begin{tiny}
\textbf{GT weights} (all of $N$ datapoints are used for evaluation)
\end{tiny}}
& IPW & 0.237$\pm$0.062 & 0.066$\pm$0.036 & 0.067$\pm$0.021 & 0.025$\pm$0.011 & \textbf{0.044$\pm$0.014} \\
& MIPS & 0.236$\pm$0.062 & 0.065$\pm$0.035 & 0.067$\pm$0.021 & 0.025$\pm$0.011 & \textbf{0.044$\pm$0.014} \\
\hline
\multirow{3}{*}{
\begin{tiny}
\textbf{Estimated weights $\hat{w}(y)$}
\end{tiny}}
\multirow{3}{*}{
\begin{tiny}
($m=2000$ used for training
\end{tiny}} \\
\multirow{3}{*}{
\begin{tiny}
and $n=N-2000$ used for evaluation)
\end{tiny}} 
& MR (Ours) & \textbf{0.045$\pm$0.015} & \textbf{0.042$\pm$0.014} & \textbf{0.048$\pm$0.020} & 0.049$\pm$0.020 & 0.047$\pm$0.016
\\
\\
\bottomrule
\end{tabular}
    \end{tiny}
\end{table}
Our previous setting of unknown importance policy ratios $\rho(a, x)$ captures a wide variety of real-world applications, ranging from health care to autonomous driving. In addition, to demonstrate the utility of MR in settings with known $\rho(a, x), p(e\mid a, x)$ and unknown $w(y)$ (for our proposed method, MR), we have conducted additional experiments. Here, we use a fixed budget of datapoints (denoted by $N$) for each baseline and for MR we allocate $m=2000$ of the available datapoints to estimate $\hat{w}(y)$ and use the remaining for evaluating the MR estimator (i.e., $n=N-2000$ for MR). In contrast, for IPW and MIPS (since the importance ratios are already known), we use all of the $N$ datapoints to evaluate the off-policy value (i.e. $n=N$ for IPW and MIPS).

The results included in Table \ref{tab:known_ratios} show that MR achieves the smallest MSE among the baselines for $N\leq 6400$. However, we observe that the MSE of IPW, DR and MIPS (with true importance weights) falls below that of MR (with estimated weights $\hat{w}$) when the data size $N$ is large enough (i.e., $N\geq 10,000$). This is to be expected since IPW, DR and MIPS are unbiased (i.e., use ground truth importance ratios $\rho(a, x)$) whereas MR uses estimated weights $\hat{w}(y)$ (and hence may be biased). MR still performs the best when $N\leq 6400$.

\subsection{Experiments on classification datasets}\label{subsec:additional-experiments-classification}
Here, we conduct experiments on four classification datasets, OptDigits, PenDigits, SatImage and Letter datasets from the UCI repository \citep{dua2019uci}, the Digits dataset from scikit-learn library, as well as the Mnist \citep{deng2012mnist} and CIFAR-100 datasets \citep{krizhevsky2009learning}.

\paragraph{Setup}
Following previous works \citep{dudik2014doubly, kallus2021optimal, mehrdad2018more,wang2017optimal}, the classification datasets are transformed to contextual bandit feedback data. The classification dataset comprises $\{x_i, a^\gt_i\}_{i=1}^{n_0}$, where $x_i\in \Xspace$ are feature vectors and $a^\gt_i\in \Aspace$ are the ground-truth labels. In the contextual bandits setup, the feature vectors $x_i$ are considered to be the contexts, whereas the actions correspond to the possible class of labels. We split the dataset into training and testing datasets of sizes $m$ and $n$ respectively. We present the results for a range of different values of $m$ and $n$.

\paragraph{Reward function}
Let $X$ be a context with ground truth label $A^\gt$, we define the reward for action $A$ as:
\[
Y \coloneqq \ind(A = A^\gt).
\]

\paragraph{Behaviour and target policies}
Using the $m$ training datapoints, we first train a classifier $f: \Xspace\rightarrow \mathbb{R}^{|\Aspace|}$ which takes as input the feature vectors $x_i$ and outputs a vector of softmax probabilities over labels, i.e. the $a$-th component of the vector $f(x)$, denoted as $(f(x))_{a}$ corresponds to the estimated probability $\p(A^{\gt} = a \mid X=x)$.

Next, we use $f$ to define the ground truth behaviour policy, 
\[
\beh(a\mid x) = (f(x))_{a}.
\]
For the target policies, we use $f$ to define a parametric class of target policies using a trained classifier $f: \Xspace\rightarrow \mathbb{R}^{|\Aspace|}$. 
\[
\pi^{\alpha^\ast}(a\mid x) =\alpha^\ast\cdot \ind(a= \arg\max_{a' \in \Aspace} (f(x))_{a'}) +  \frac{1-\alpha^\ast}{|\Aspace|},
\]
where $\alpha^\ast \in [0, 1]$. A value of $\alpha^\ast$ close to 1 leads to a near-deterministic and well-performing policy. As $\alpha^\ast$ decreases, the policy gets increasingly worse and `noisy'. In this experiment, we consider target policies $\tar = \pi^{\alpha^\ast}$ for $\alpha^\ast \in \{0.0, 0.2, 0.4, \dots, 1.0\}$. 

Using the behaviour policy defined above, we generate the contextual bandits data described with training and evaluation datasets of sizes $m$ and $n$ respectively.

\paragraph{Estimation of behaviour policy $\hatbeh$ and marginal ratio $\hat{w}(y)$}
We do not assume that the behaviour policy $\beh$ is known, and therefore estimate it using training data. To estimate the behaviour policy $\hatbeh$, we train a random forest classifier using the training data. This estimate of behaviour policy is used for all the baselines in our experiment. 
Since the reward is binary, we can estimate the marginal ratios $\hat{w}(y) = \Ebeh[\hat{\rho}(A, X)\mid Y=y]$ by directly estimating the sample mean of $\hat{\rho}(A, X)$ for datapoints with $Y=y$. We re-use the $m$ training datapoints to estimate this sample mean. 

\paragraph{Baselines}
We compare our estimator with Direct Method (DM), IPW and DR estimators. 
In addition, we also consider Switch-DR \citep{wang2017optimal} and DR with Optimistic Shrinkage (DRos) \citep{su2020doubly}.
To estimate $\hat{q}(x, a)$ for DM and DR estimators, we use random forest classifiers (since reward $Y$ is binary). Moreover, because of the binary nature of $Y$, the alternative method of estimating MR yields the same estimator as the original method, therefore we do not consider the two separately here. Additionally, in this experiment, we do not include MIPS (or G-MIPS) baseline, as there is no natural informative embedding $E$ of the action $A$. 

\subsubsection{Results}
For this experiment, we compute the results over 10 different sets of logged data replicated with different seeds.
Figures \ref{fig:optdigits} - \ref{fig:cifar100} show the results corresponding to each baseline for the different datasets. It can be seen that across all datasets, the MR achieves the smallest MSE with increasing evaluation data size $n$. Moreover, across all datasets, MR attains the minimum MSE with relatively small number of evaluation data ($n\leq 100$).

Unlike the experiments in Section \ref{sec:exp-synth}, we observe that the KL-divergence between target and behaviour policy decreases as $\alpha^\ast$ increases (see Figure \ref{fig:kl_div_multiclass}). 
Therefore, as $\alpha^\ast$ increases the shift between target and behaviour policies decreases.
Figures \ref{fig:optdigits} - \ref{fig:digits} show that as $\alpha^\ast$ increases, 
the difference between the MSE, squared bias and variance of MR and the other baselines decreases. This confirms our findings from earlier experiments that MR performs especially better than the other baselines when the difference between behaviour and target policies is large.

Moreover, the figures also include results with increasing number of training data $m$. It can be seen that MR out-performs the baselines even when the number of training data $m$ is small ($m = 100$). Moreover, the relative advantage of MR improves with increasing $m$.

\begin{figure}[t]
    \centering
    \includegraphics[width=0.3\textwidth]{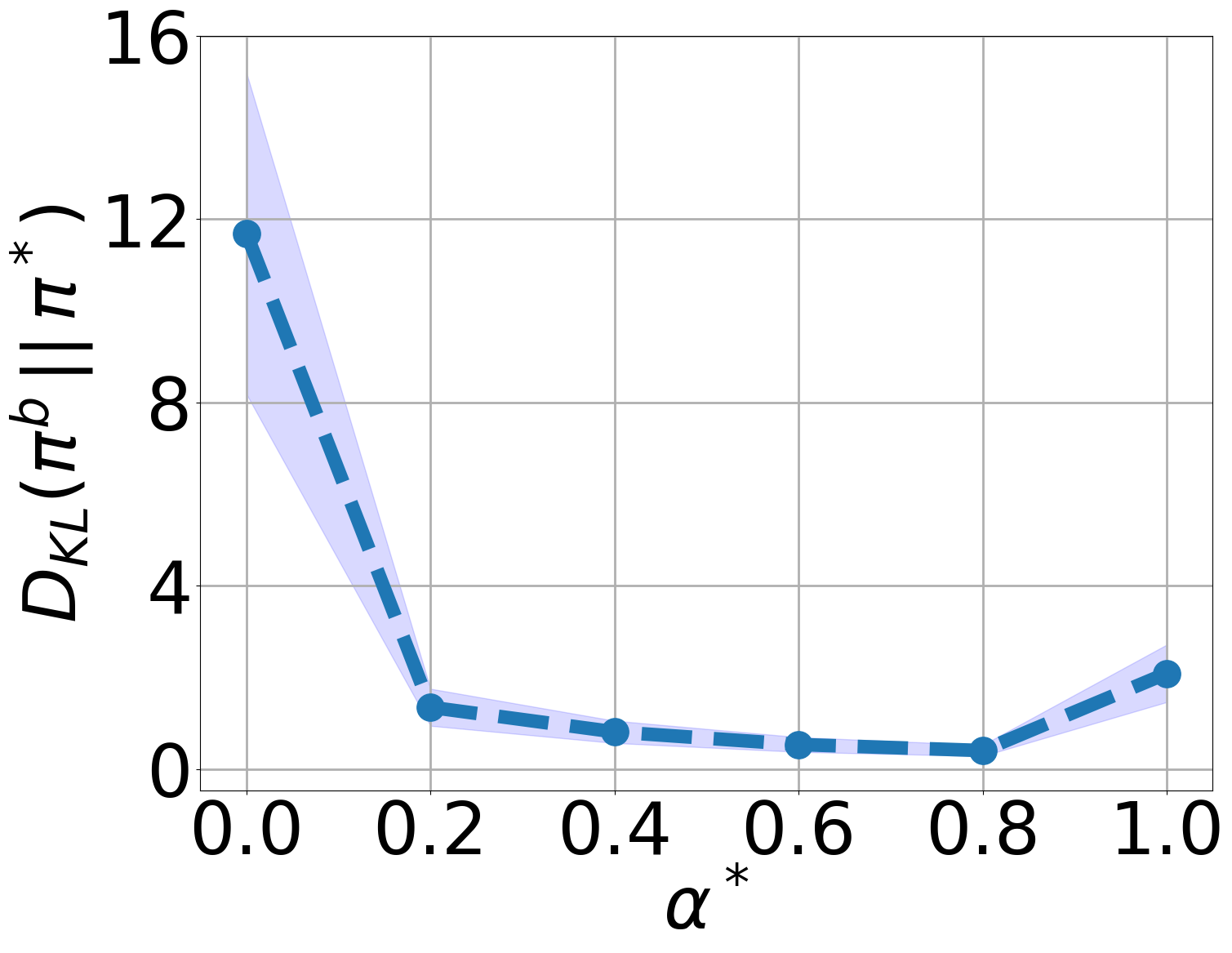}
    \caption{KL divergence $D_{\textup{KL}}(\beh \, || \, \tar)$ with increasing $\alpha^\ast$ for the classification data experiments. Here, we only include the results for a specific choice of parameters for the Letter dataset. We observe similar results for other datasets and parameter choices.}
    \label{fig:kl_div_multiclass}
\end{figure}

\begin{figure}[ht]
    \centering
	\begin{subfigure}{0.8\textwidth}
	    \centering
	    \includegraphics[width=1\textwidth]{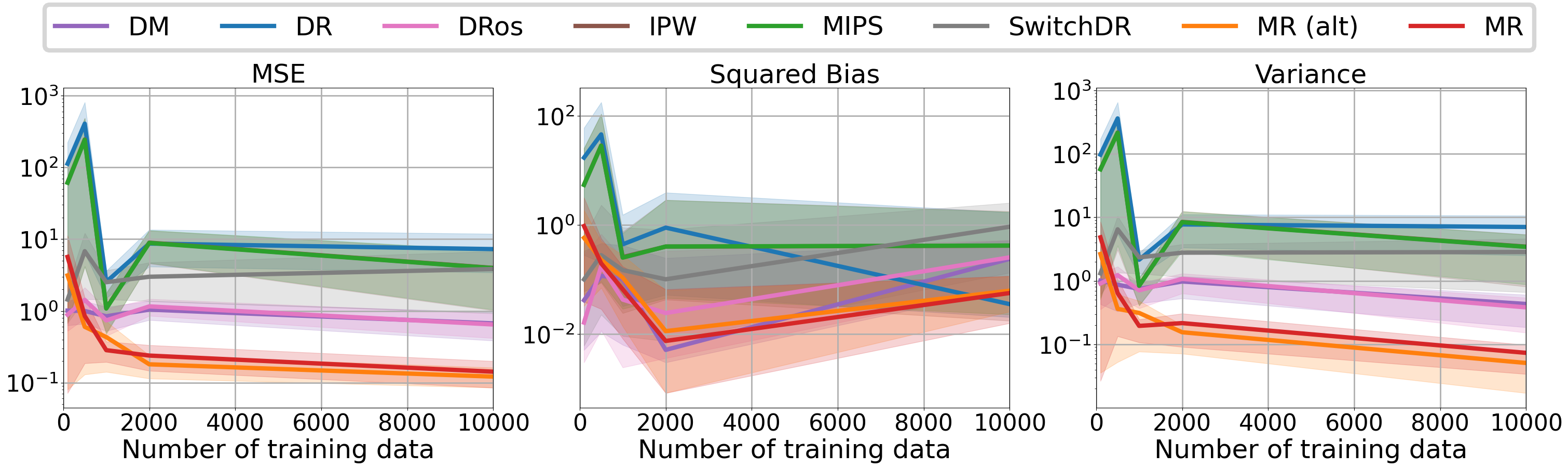}
	    \subcaption{$d=1000$, $n = 10$, $\alpha^\ast = 0.8$}
	    \label{subfig:d-1000-neval-10-alphatar-0-8-ntr-mips}
	\end{subfigure}\\
	\begin{subfigure}{0.8\textwidth} 
	    \centering
	    \includegraphics[width=1\textwidth]{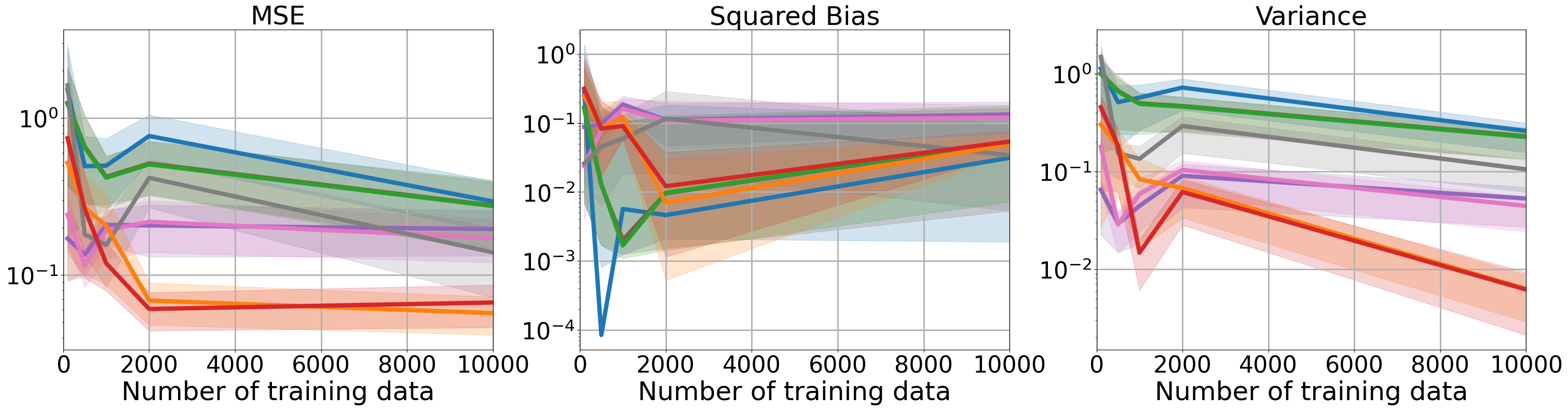}
	    \subcaption{$d=1000$, $n = 200$, $\alpha^\ast = 0.8$}
	    \label{subfig:d-1000-neval-200-alphatar-0-8-ntr-mips}
	\end{subfigure}\\
	\begin{subfigure}{0.8\textwidth} 
	    \centering
	    \includegraphics[width=1\textwidth]{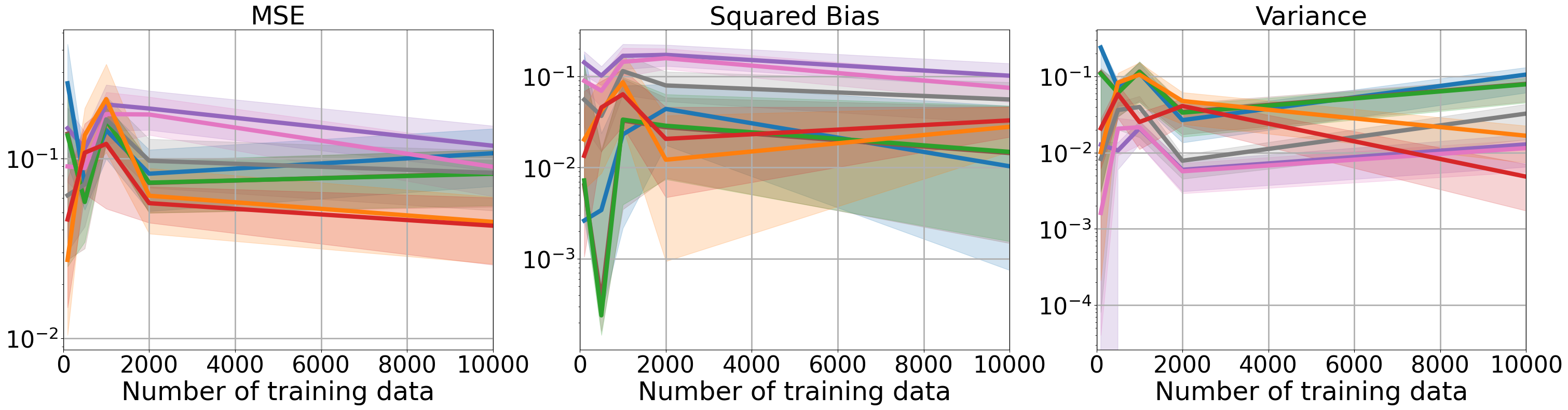}
	    \subcaption{$d=1000$, $n = 800$, $\alpha^\ast = 0.8$}
	    \label{subfig:d-1000-neval-800-alphatar-0-8-ntr-mips}
	\end{subfigure}
    \caption{MSE with varying number of training data $m$ for different choices of parameters.}
    \label{fig:mse-vs-ntr-mips}
\end{figure}

\begin{figure}[h!]
    \centering
	\begin{subfigure}{0.8\textwidth}
	    \centering
	    \includegraphics[width=1\textwidth]{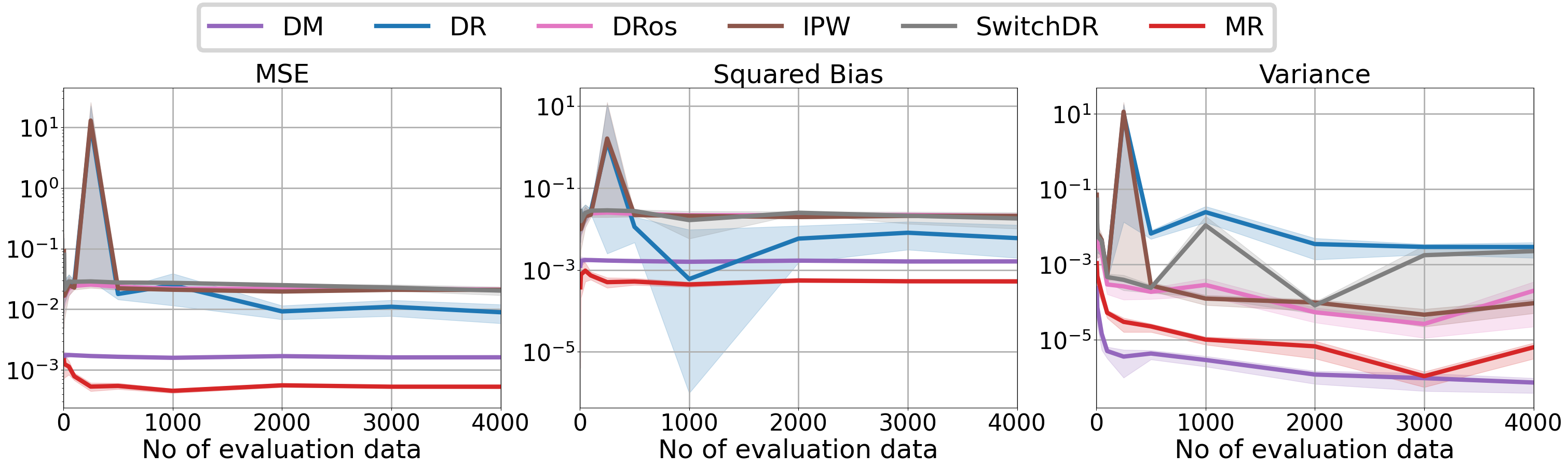}
	    \subcaption{Results with varying $n$ for $\alpha^\ast = 0.2$ and $m=1000$}
	    \label{subfig:opt-neval}
	\end{subfigure}\\
	\begin{subfigure}{0.8\textwidth} 
	    \centering
	    \includegraphics[width=1\textwidth]{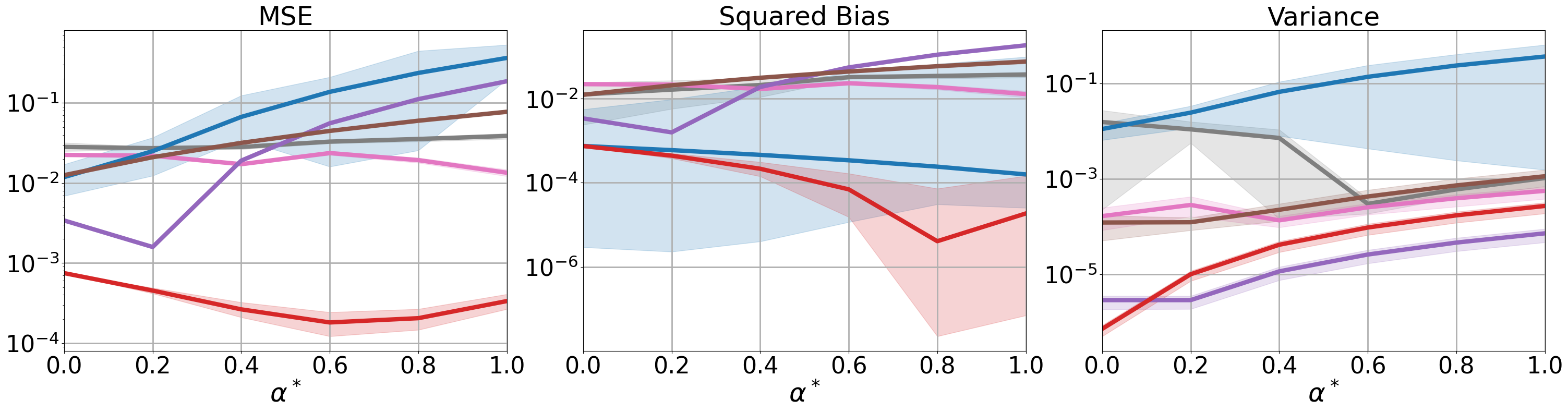}
	    \subcaption{Results with varying $\alpha^\ast$ for $m = n = 1000$}
	    \label{subfig:opt-ae}
	\end{subfigure}\\
    \begin{subfigure}{0.8\textwidth} 
	    \centering
	    \includegraphics[width=1\textwidth]{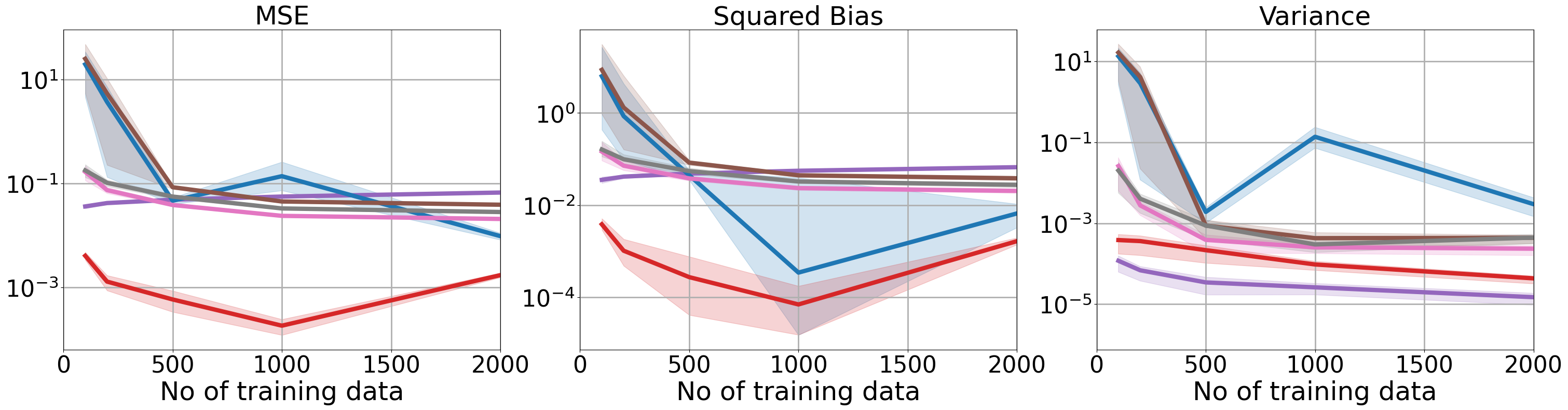}
	    \subcaption{Results with varying $m$ for $n = 1000$ and $\alpha^\ast = 0.6$}
	    \label{subfig:opt-ntr}
	\end{subfigure}\\
    \caption{Results for OptDigits dataset}
    \label{fig:optdigits}
\end{figure}

\begin{figure}[h!]
    \centering
	\begin{subfigure}{0.8\textwidth}
	    \centering
	    \includegraphics[width=1\textwidth]{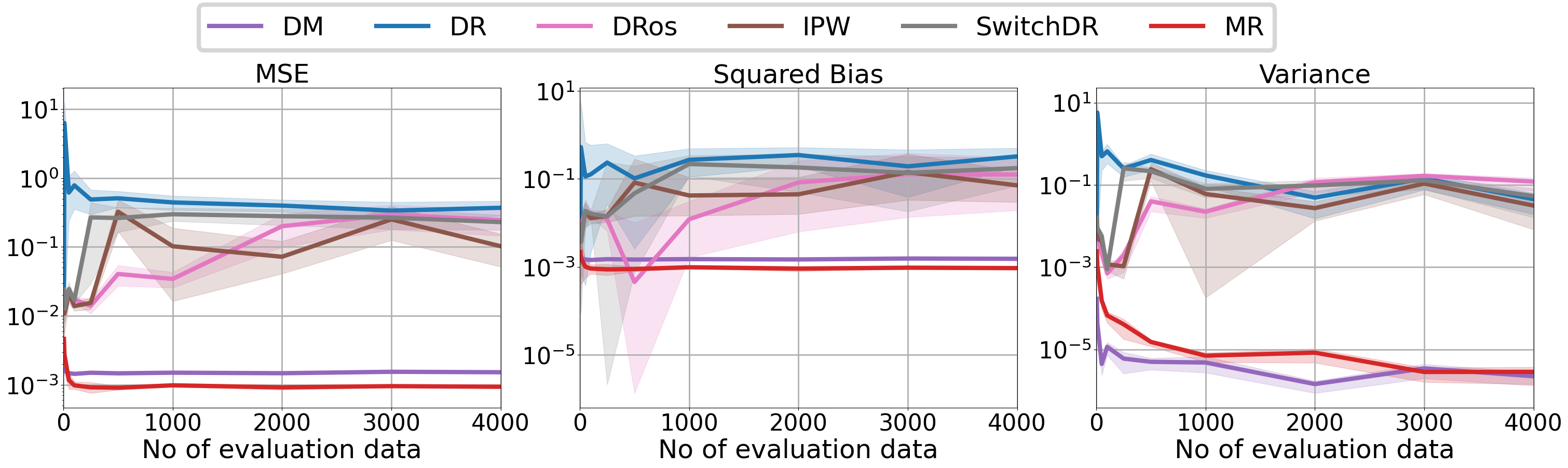}
	    \subcaption{Results with varying $n$ for $\alpha^\ast = 0.2$ and $m=1000$}
	    \label{subfig:pen-neval}
	\end{subfigure}\\
	\begin{subfigure}{0.8\textwidth} 
	    \centering
	    \includegraphics[width=1\textwidth]{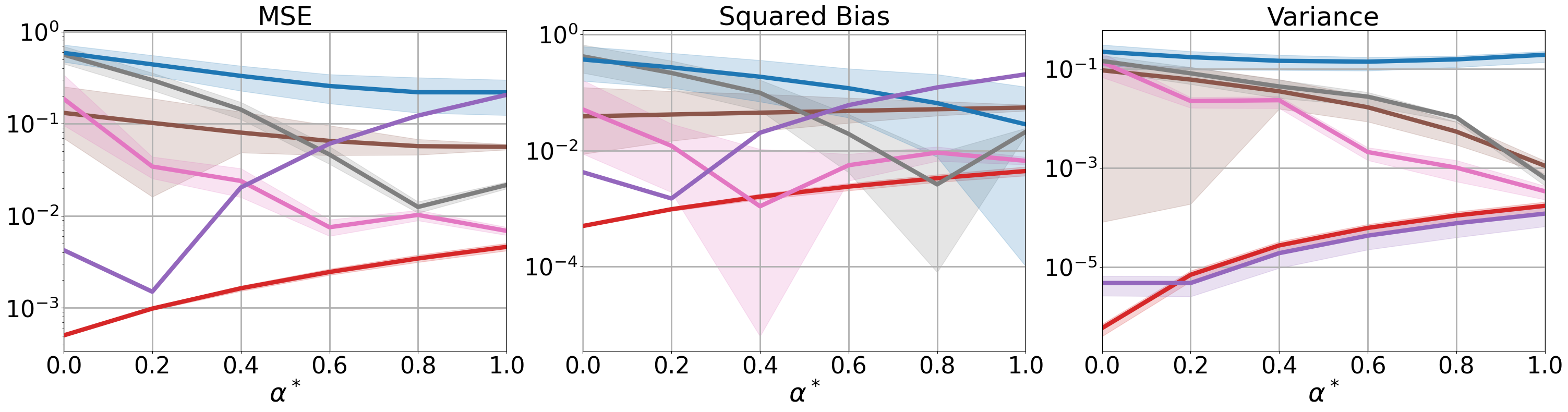}
	    \subcaption{Results with varying $\alpha^\ast$ for $m = n = 1000$}
	    \label{subfig:pen-ae}
	\end{subfigure}\\
        \begin{subfigure}{0.8\textwidth} 
	    \centering
	    \includegraphics[width=1\textwidth]{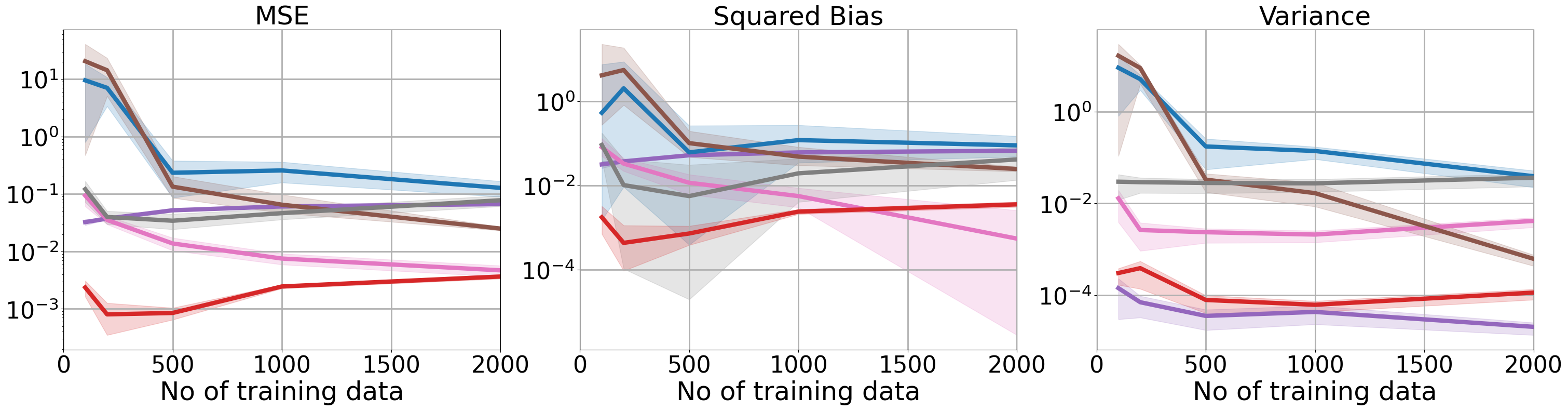}
	    \subcaption{Results with varying $m$ for $\alpha^\ast=0.6$ and $n = 1000$}
	    \label{subfig:pen-tr}
	\end{subfigure}
    \caption{Results for PenDigits dataset}
    \label{fig:pendigits}
\end{figure}

\begin{figure}[h!]
    \centering
	\begin{subfigure}{0.8\textwidth}
	    \centering
	    \includegraphics[width=1\textwidth]{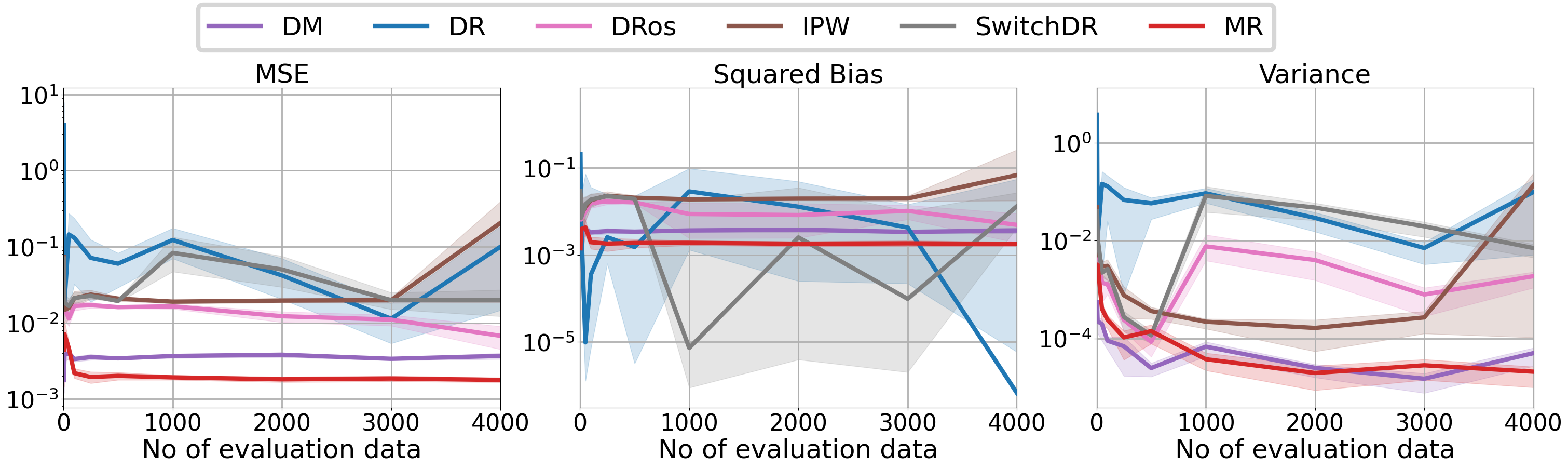}
	    \subcaption{Results with varying $n$ for $\alpha^\ast = 0.2$ and $m=1000$}
	    \label{subfig:sat-neval}
	\end{subfigure}\\
	\begin{subfigure}{0.8\textwidth} 
	    \centering
	    \includegraphics[width=1\textwidth]{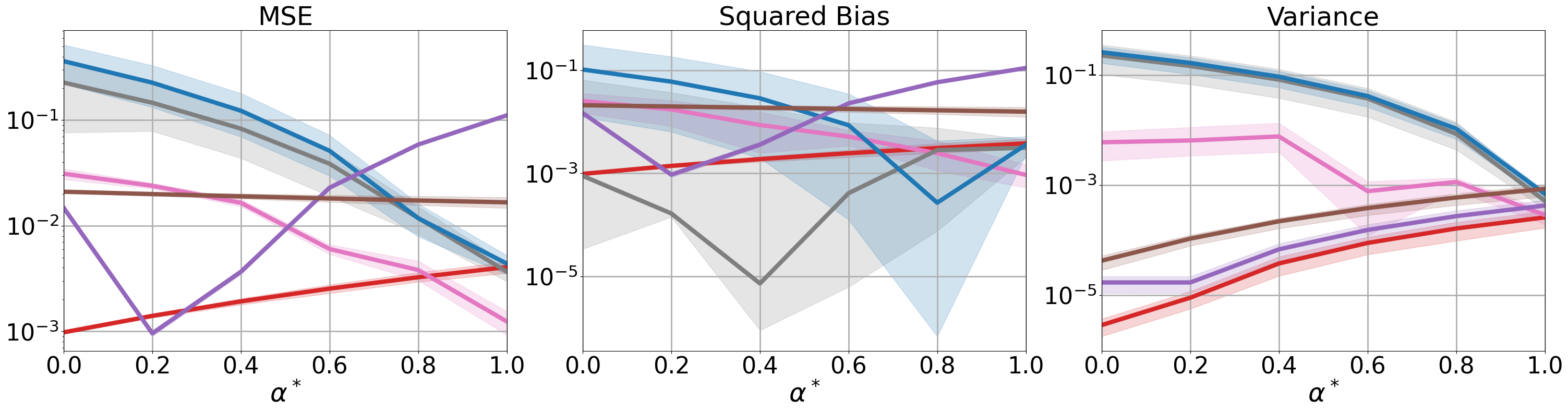}
	    \subcaption{Results with varying $\alpha^\ast$ for $n = 1000$}
	    \label{subfig:sat-ae}
	\end{subfigure}\\
 	\begin{subfigure}{0.8\textwidth} 
	    \centering
	    \includegraphics[width=1\textwidth]{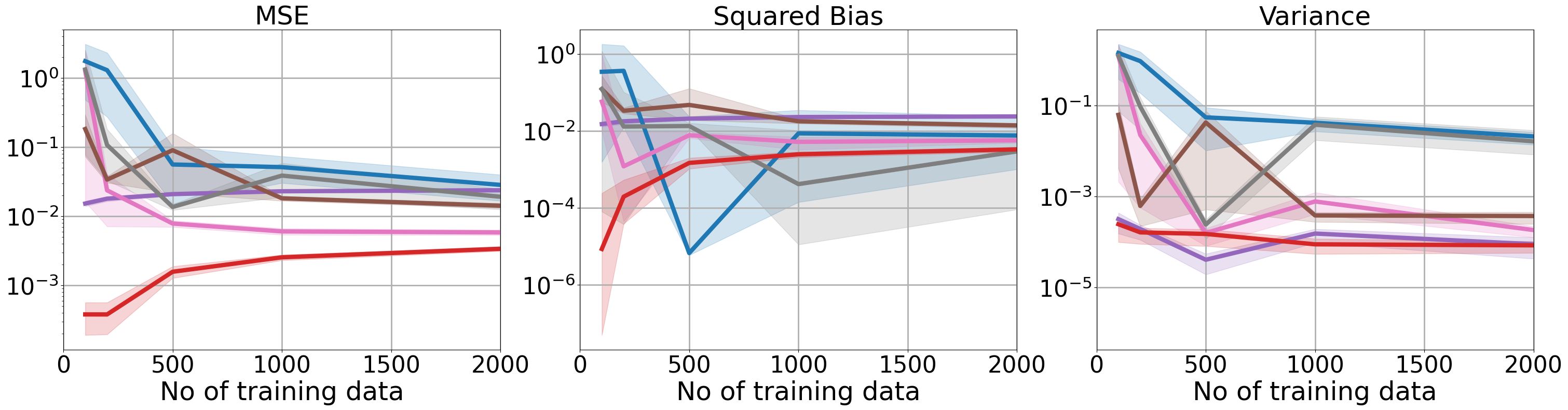}
	    \subcaption{Results with varying $m$ for $\alpha^\ast=0.6$ and $n = 1000$}
	    \label{subfig:sat-tr}
	\end{subfigure}
    \caption{Results for SatImage dataset}
    \label{fig:satimage}
\end{figure}

\begin{figure}[h!]
    \centering
	\begin{subfigure}{0.8\textwidth}
	    \centering
	    \includegraphics[width=1\textwidth]{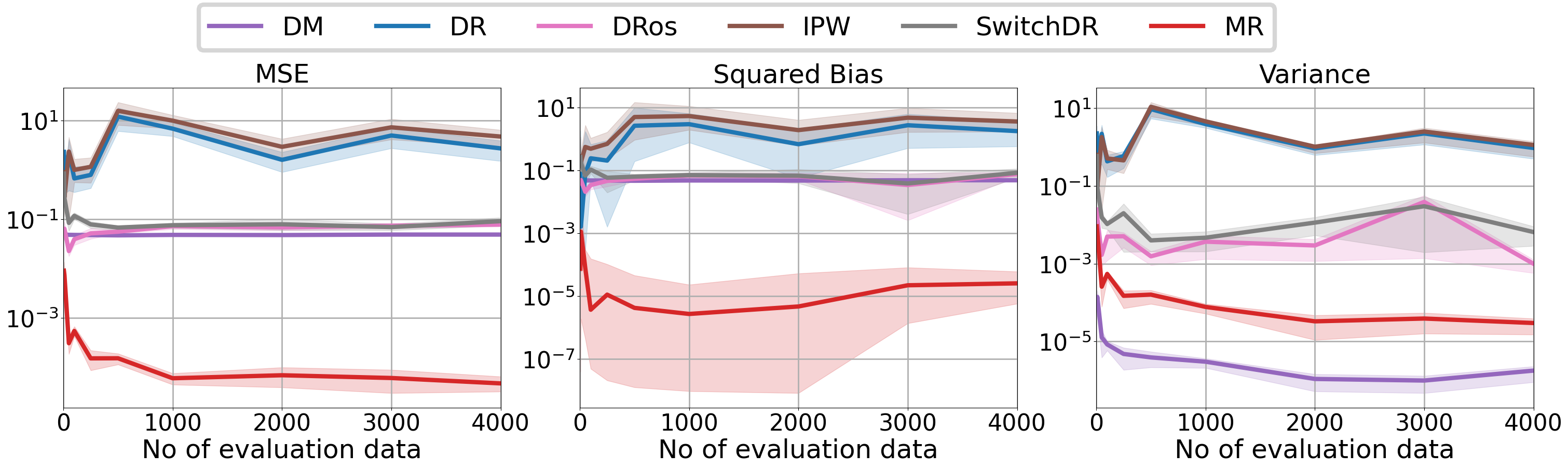}
	    \subcaption{Results with varying $n$ for $\alpha^\ast = 0.2$ and $m=1000$}
	    \label{subfig:letter-neval}
	\end{subfigure}\\
	\begin{subfigure}{0.8\textwidth} 
	    \centering
	    \includegraphics[width=1\textwidth]{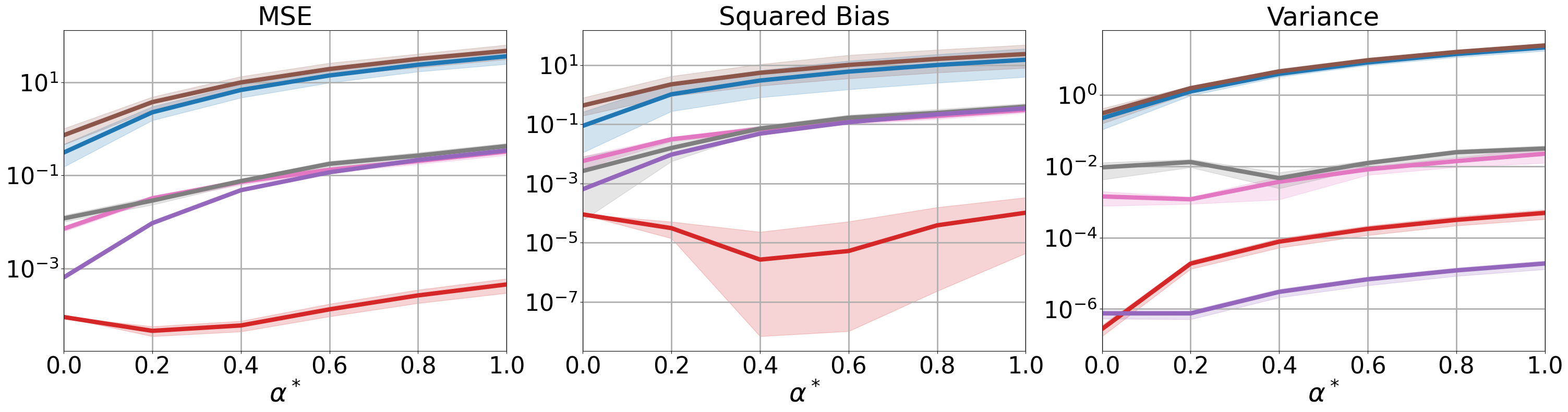}
	    \subcaption{Results with varying $\alpha^\ast$ for $m = n = 1000$}
	    \label{subfig:letter-ae}
	\end{subfigure}\\
        \begin{subfigure}{0.8\textwidth} 
	    \centering
	    \includegraphics[width=1\textwidth]{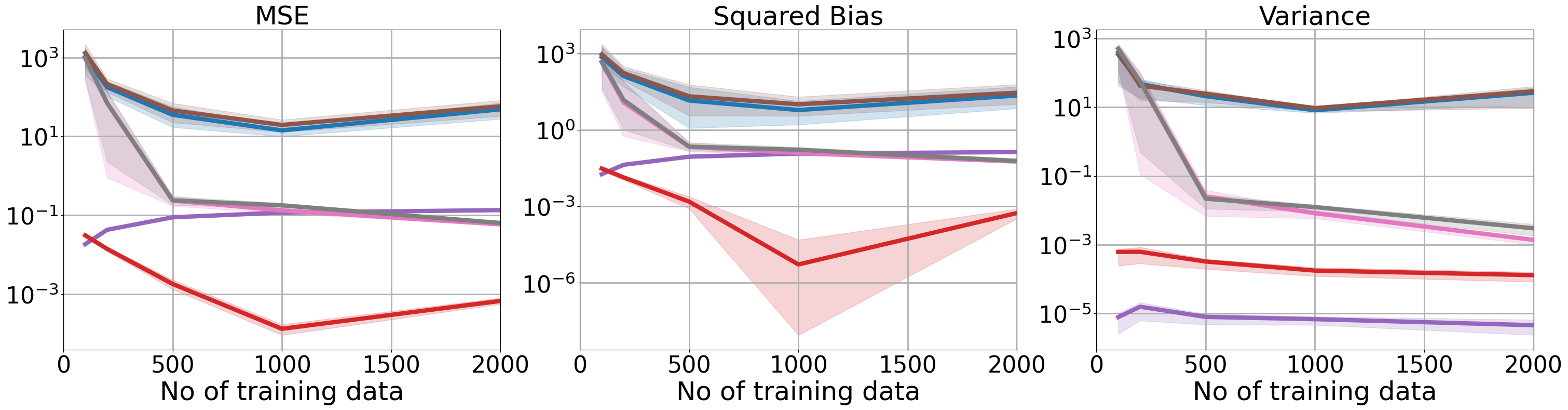}
	    \subcaption{Results with varying $m$ for $\alpha^\ast=0.6$ and $n = 1000$}
	    \label{subfig:letter-tr}
	\end{subfigure}
    \caption{Results for Letter dataset}
    \label{fig:letter}
\end{figure}

\begin{figure}[h!]
    \centering
	\begin{subfigure}{0.8\textwidth}
	    \centering
	    \includegraphics[width=1\textwidth]{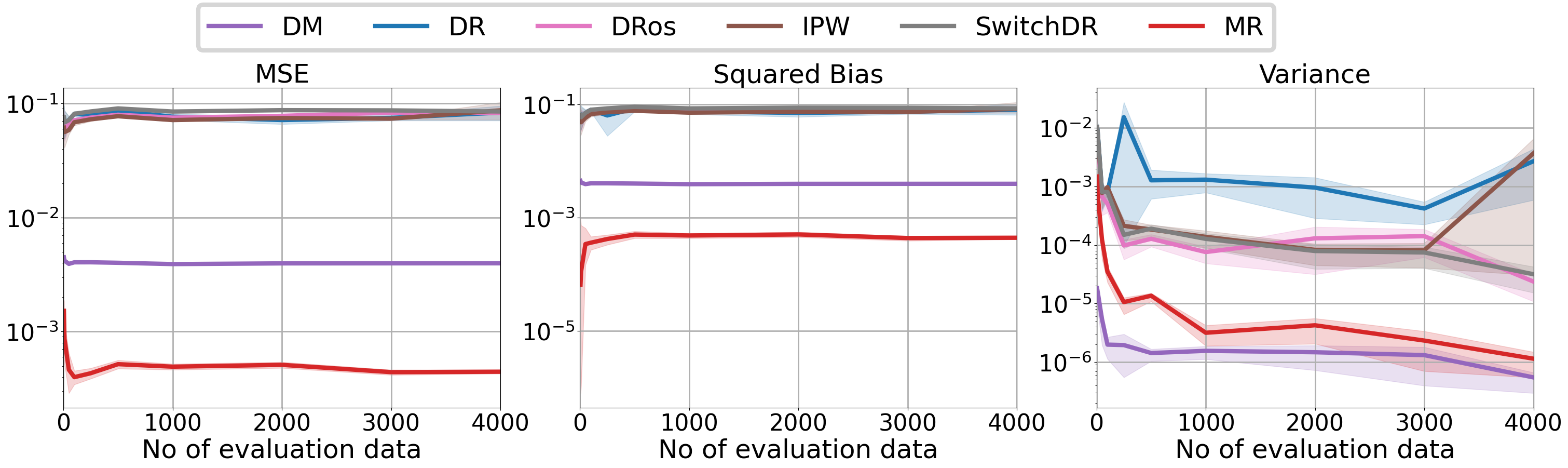}
	    \subcaption{Results with varying $n$ for $\alpha^\ast = 0.2$ and $m=1000$}
	    \label{subfig:mnist-neval}
	\end{subfigure}\\
	\begin{subfigure}{0.8\textwidth} 
	    \centering
	    \includegraphics[width=1\textwidth]{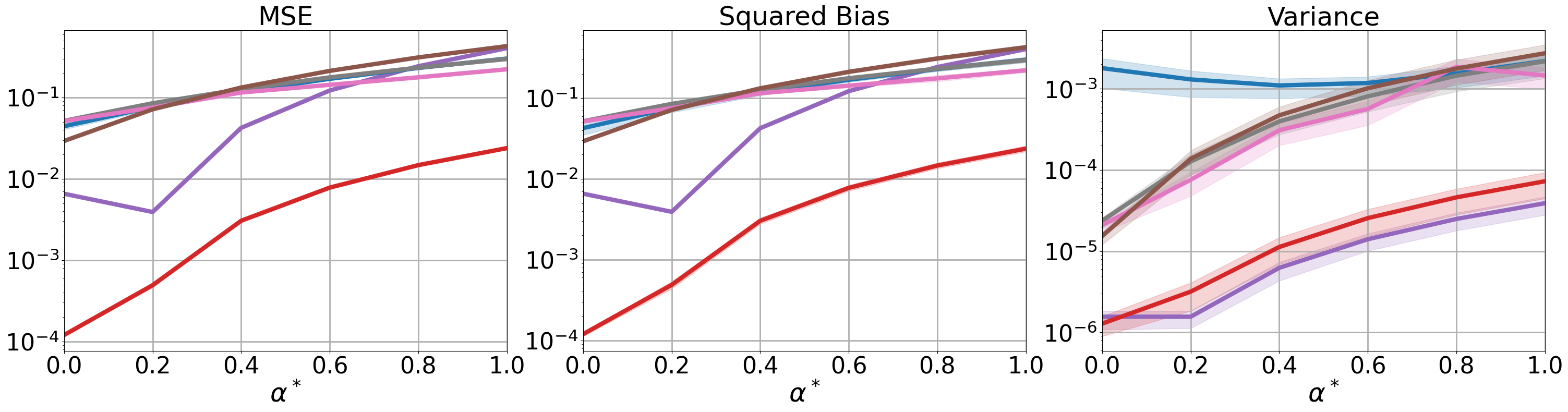}
	    \subcaption{Results with varying $\alpha^\ast$ for $m= n = 1000$}
	    \label{subfig:mnist-ae}
	\end{subfigure}\\
 	\begin{subfigure}{0.8\textwidth} 
	    \centering
	    \includegraphics[width=1\textwidth]{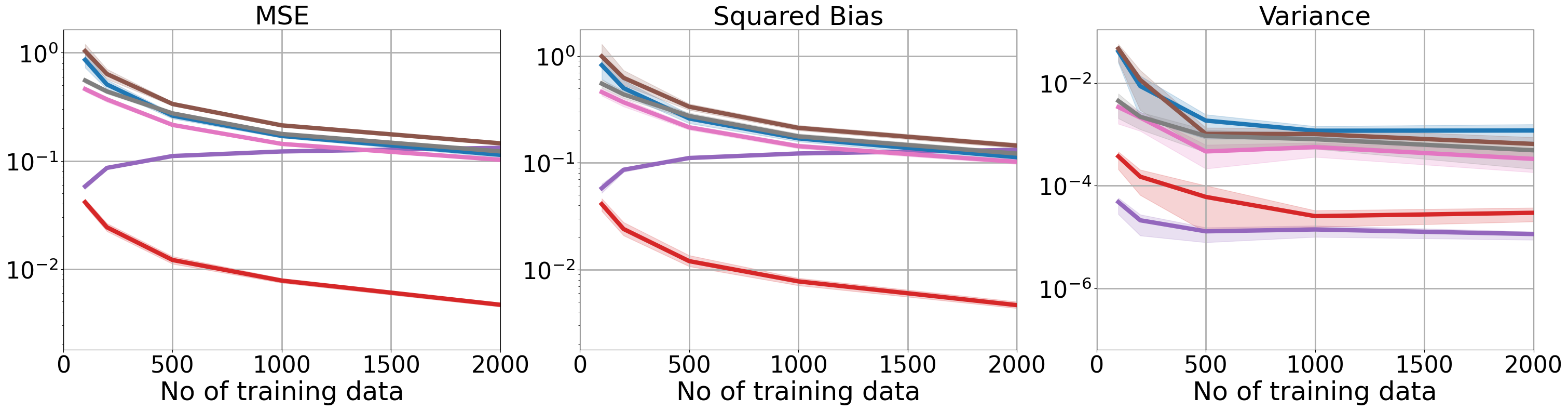}
	    \subcaption{Results with varying $m$ for $\alpha^\ast=0.6$ and $n = 1000$}
	    \label{subfig:mnist-tr}
	\end{subfigure}
    \caption{Results for Mnist dataset}
    \label{fig:mnist}
\end{figure}

\begin{figure}[h!]
    \centering
	\begin{subfigure}{0.8\textwidth}
	    \centering
	    \includegraphics[width=1\textwidth]{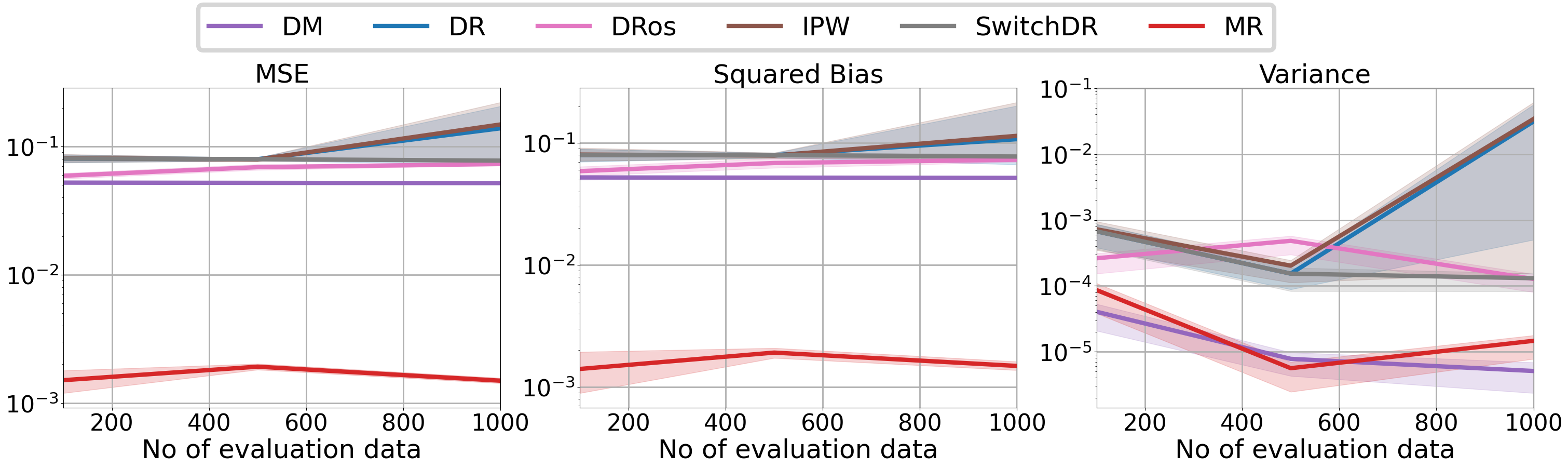}
	    \subcaption{Results with varying $n$ for $\alpha^\ast = 0.2$ and $m=500$}
	    \label{subfig:digits-neval}
	\end{subfigure}\\
	\begin{subfigure}{0.8\textwidth} 
	    \centering
	    \includegraphics[width=1\textwidth]{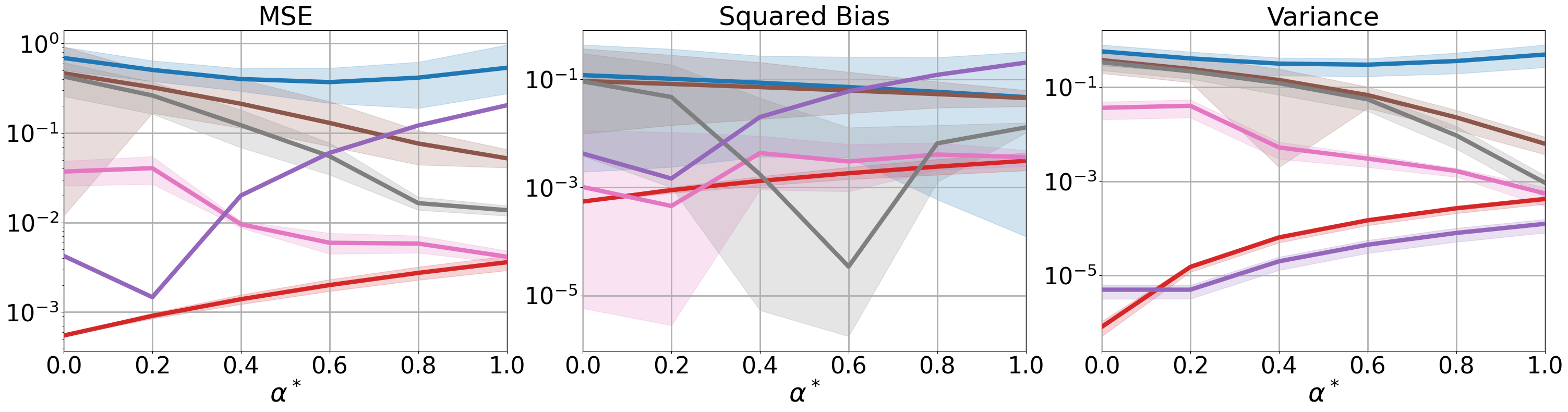}
	    \subcaption{Results with varying $\alpha^\ast$ for $n = 500$ and $m=1000$}
	    \label{subfig:digits-ae}
	\end{subfigure}\\
 	\begin{subfigure}{0.8\textwidth} 
	    \centering
	    \includegraphics[width=1\textwidth]{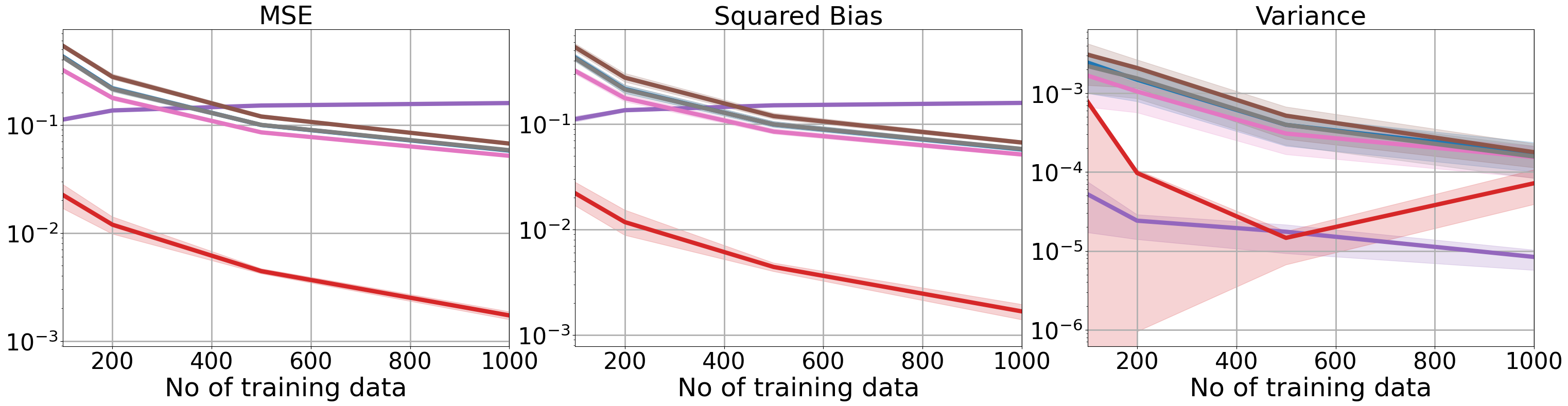}
	    \subcaption{Results with varying $m$ for $\alpha^\ast=0.6$ and $n = 500$}
	    \label{subfig:digits-tr}
	\end{subfigure}
    \caption{Results for Digits dataset. Note that compared to other datasets we consider smaller maximum dataset sizes $m,n$ here as the total number of available datapoints was 1797.}
    \label{fig:digits}
\end{figure}

\begin{figure}[h!]
    \centering
	\begin{subfigure}{0.8\textwidth}
	    \centering
	    \includegraphics[width=1\textwidth]{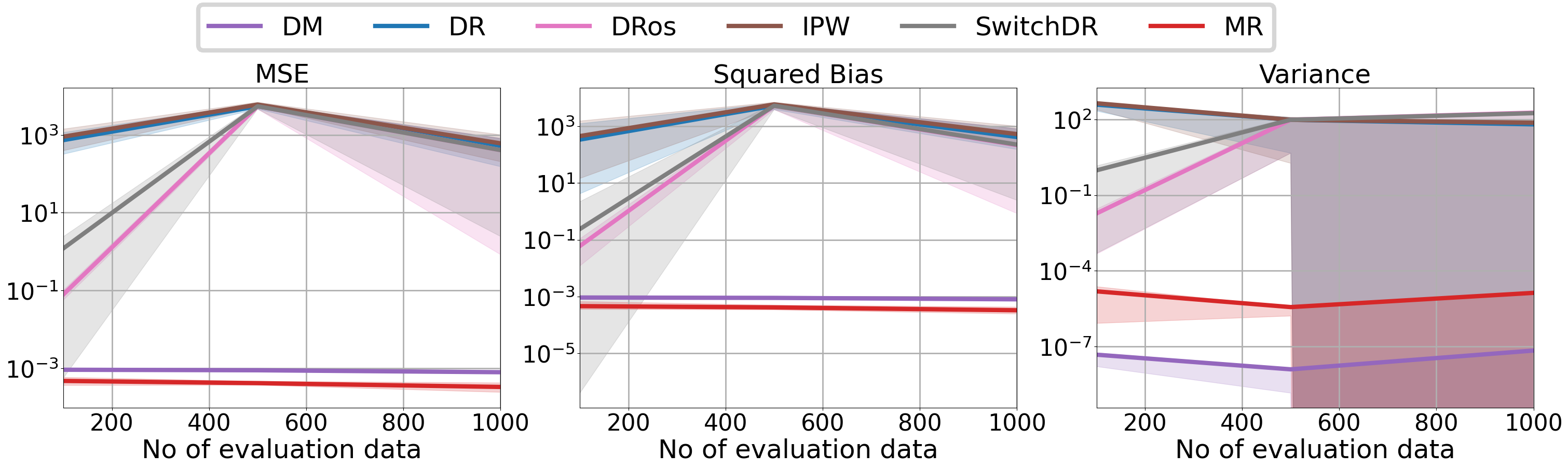}
	    \subcaption{Results with varying $n$ for $\alpha^\ast = 0.4$ and $m=2000$}
	    \label{subfig:cifar100-neval}
	\end{subfigure}\\
	\begin{subfigure}{0.8\textwidth} 
	    \centering
	    \includegraphics[width=1\textwidth]{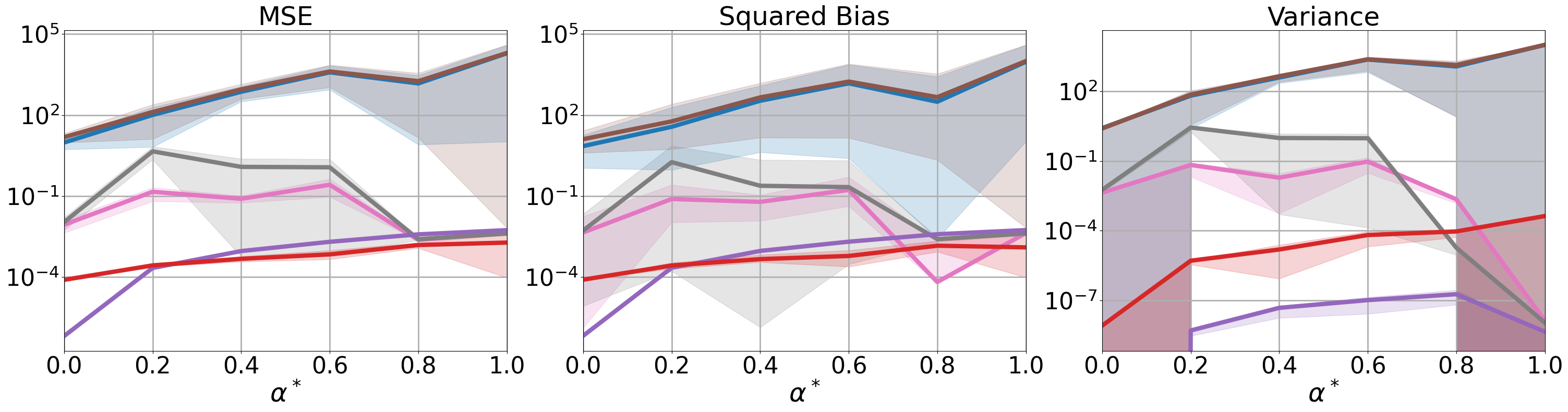}
	    \subcaption{Results with varying $\alpha^\ast$ for $n = 100$ and $m=2000$}
	    \label{subfig:cifar-ae}
	\end{subfigure}\\
 	\begin{subfigure}{0.8\textwidth} 
	    \centering
	    \includegraphics[width=1\textwidth]{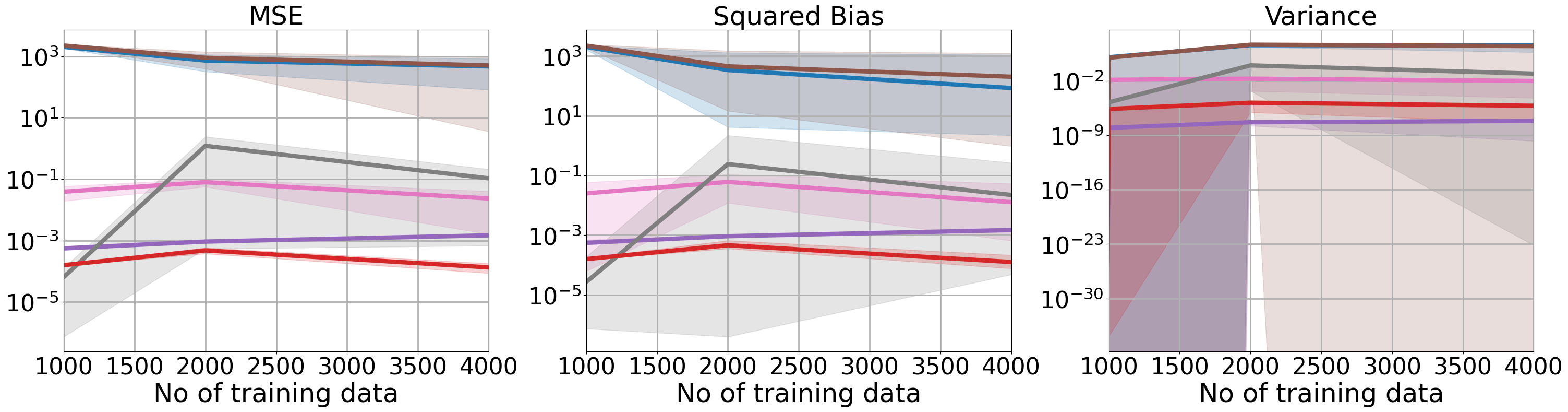}
	    \subcaption{Results with varying $m$ for $\alpha^\ast=0.4$ and $n = 100$}
	    \label{subfig:cifar100-tr}
	\end{subfigure}
    \caption{Results for CIFAR-100 dataset.}
    \label{fig:cifar100}
\end{figure}

\subsection{Application to Average Treatment Effect (ATE) estimation}\label{app:ate-empirical}
In this subsection, we provide additional details for our experiment applying MR to the problem of ATE estimation presented in the main text. We begin by describing the dataset being used in this experiment.

\paragraph{Twins dataset}
We use the Twins dataset as studied by \cite{louizos2017causal}, which comprises data from twin births in the USA between 1989-1991. The treatment $a=1$ corresponds to being born the heavier twin and the outcome $Y$ corresponds to the mortality of each of the twins in their first year of life. Since the data includes records for both twins, their outcomes would be considered as the two potential outcomes. Specifically, $Y(1)$ corresponds to the mortality of the heavier twin (and likewise for $Y(0)$). Closely following the methodology of \cite{louizos2017causal}, we only chose twins which are the same sex and weigh less than 2kgs. This provides us with a dataset of 11984 pairs of twins. 

The mortality rate for the lighter twin is 18.9\% and for the heavier twin is 16.4\%, leading to the ATE value being $\theta_\ate = -2.5\%$. For each twin-pair we obtained 46 covariates relating to the parents, the pregnancy and birth. 

\paragraph{Treatment assignment}
To simulate an observational study, we selectively hide one of the two twins by defining the treatment variable $A$ which depends on the feature \emph{GESTAT10}. This feature, which takes integer values from 0 to 9, is obtained by grouping the number of gestation weeks prior to birth into 10 groups.
Then we sample actions $A$ as follows, 
\[
A \mid X \sim \textup{Bern}(Z/10),
\]
where $Z$ is \emph{GESTAT10}, and $X$ are all the 46 features corresponding to a twin pair (including \emph{GESTAT10}). 

Using the treatment assignments defined above, we generate the observational data by selectively hiding one of the two twins from each pair. Next, we randomly split this dataset into training and evaluation datasets of sizes $m$ and $n$ respectively. In this experiment, we consider $m=5000$ training datapoints. 

\paragraph{Baselines}
Recall that ATE estimation can be formulated as the difference between off-policy values of deterministic policies $\pi^{(1)} \coloneqq \ind(A=1)$ and $\pi^{(0)} \coloneqq \ind(A=0)$. Therefore, any OPE estimator can be applied to ATE estimation. In this experiment, we compare our estimator against the baselines considered in our OPE experiments in Section \ref{subsec:additional-experiments-classification}. This includes the Direct Method (DM), IPW and DR estimators as well as Switch-DR \citep{wang2017optimal} and DR with Optimistic Shrinkage (DRos) \citep{su2020doubly}. To estimate $\hat{q}(x, a)$ for DM and DR estimators, we use multi-layer perceptrons (MLP) trained on the $m$ training datapoints. Additionally, we estimate the behaviour policy $\hatbeh$ using random forest classifier trained on the full training dataset. 

Since the outcome in this experiment is binary, we estimate the weights $w(y) = \Ebeh[\hat{\rho}(A, X)\mid Y=y]$ directly by estimating the sample mean of $\hat{\rho}(A, X)$ for datapoints with $Y=y$. This means that the alternative method of estimating MR yields the same value as the default method. We therefore do not consider these estimators separately. Additionally, since there is no natural embedding $R$ of the covariate-action space which satisfies the conditional dependence Assumption \ref{assum:indep-general}, we do not consider the G-MIPS (or MIPS) estimator either.   

\paragraph{Performance metric}
For our evaluation, we consider the absolute error in ATE estimation, $\epsilon_\ate$, defined as:
\[
\epsilon_\ate \coloneqq | \hat{\theta}^{(n)}_\ate - \theta_\ate |.
\]
Here, $\hat{\theta}^{(n)}_\ate$ denotes the value of the ATE estimated using $n$ evaluation datapoints. For example, for the IPW estimator, the $\hat{\theta}^{(n)}_\ate$ can be written as:
\[
\hat{\theta}^{(n)}_\ate = \ateipw = \frac{1}{n} \sum_{i=1}^n \left(\frac{\ind(a_i=1)-\ind(a_i=0)}{\hatbeh(a_i\mid x_i)}\right)\, y_i.
\]

All results for this experiment are provided in the main text.

\subsection{Additional synthetic data experiments} \label{sec:app-additional-results}
\begin{figure}[ht]
     \centering
     \begin{subfigure}[b]{0.8\textwidth}
         \centering
         \includegraphics[width=\textwidth]{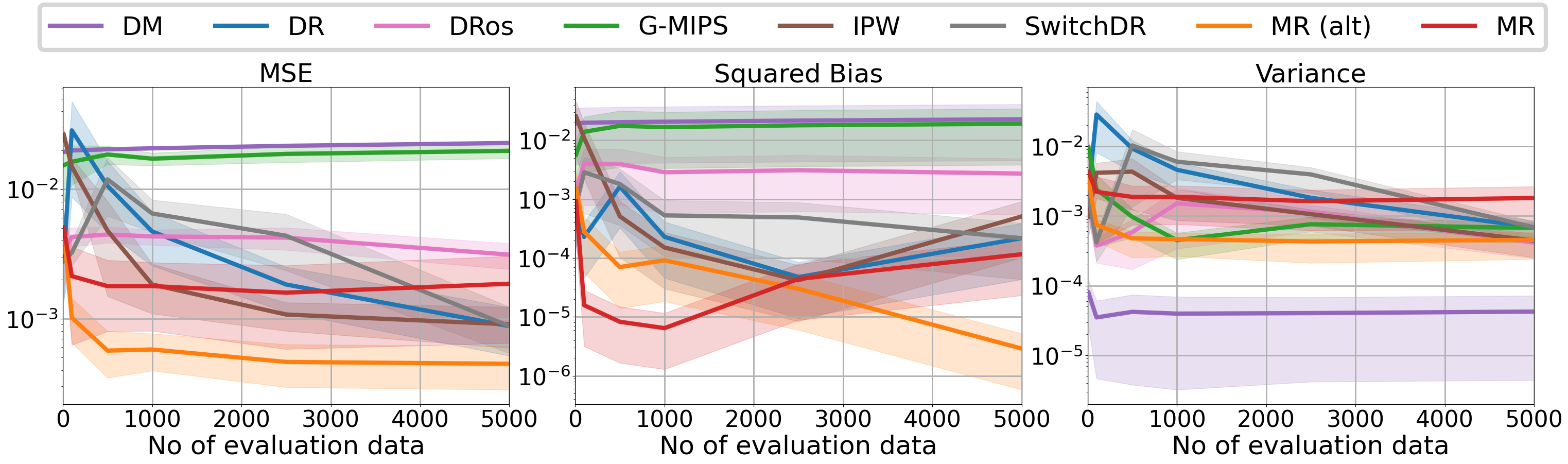}
         \caption{$d=1000$, $n_{a}=100$, $\alpha^\ast = 0.4$.}
         \label{fig:mse-vs-neval-conf2a}
     \end{subfigure}\\
     \begin{subfigure}[b]{0.8\textwidth}
         \centering
         \includegraphics[width=\textwidth]{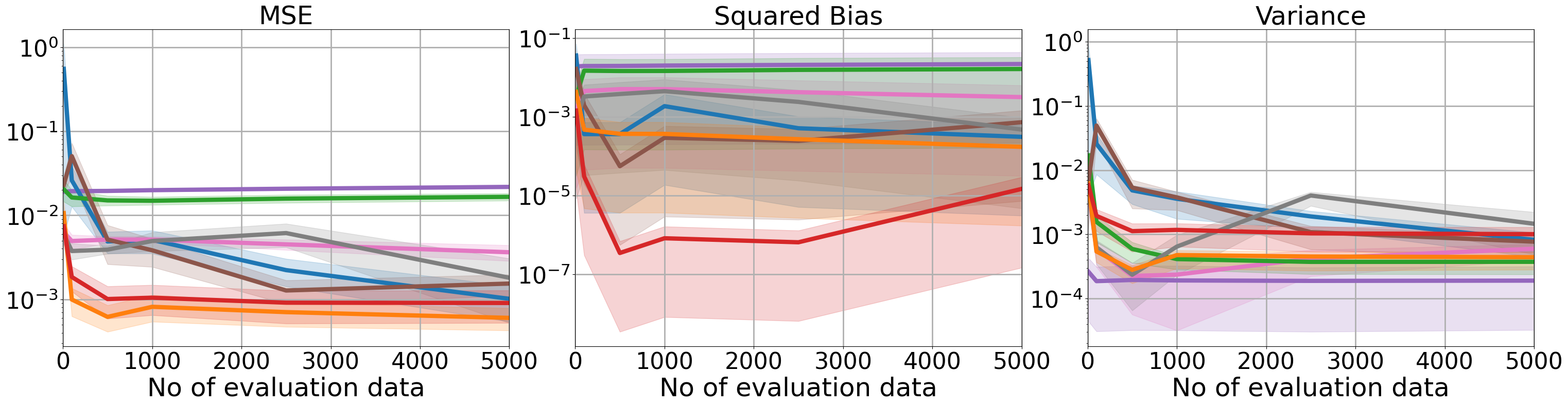}
         \caption{$d=10000$, $n_{a}=100$, $\alpha^\ast = 0.4$.}
         \label{fig:mse-vs-neval-conf2b}
     \end{subfigure}
     \caption{Results with varying size of evaluation dataset $n$.}
     \label{fig:mse-vs-neval-conf2}
 \end{figure}

 \begin{figure}[ht]
     \centering
    \begin{subfigure}[b]{0.8\textwidth}
         \centering
         \includegraphics[width=\textwidth]{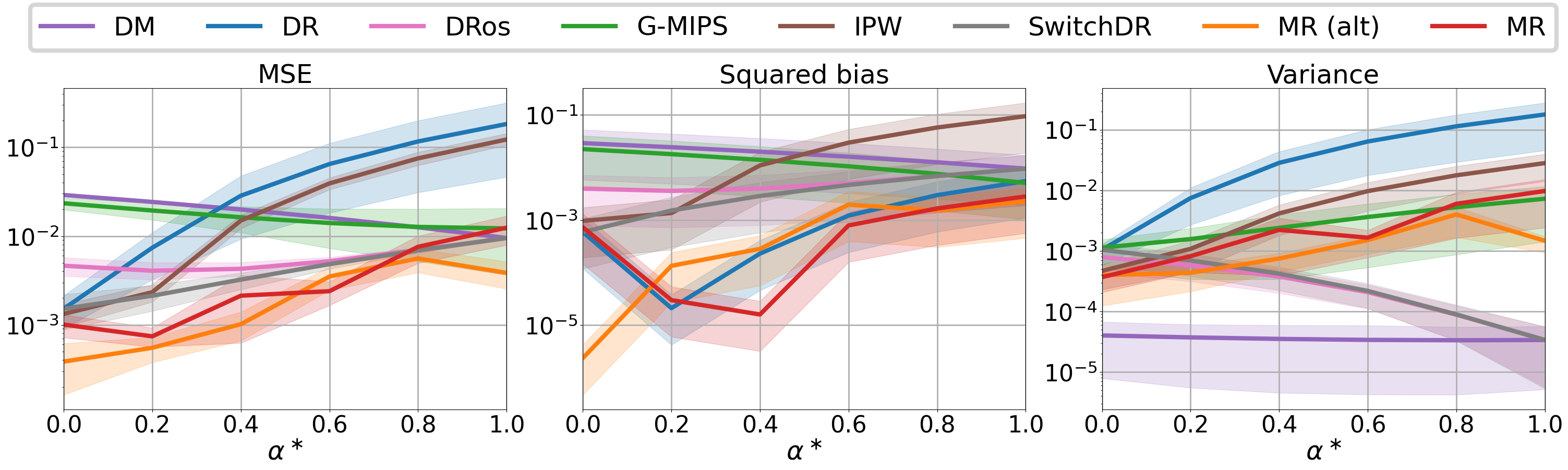}
         \caption{$d=1000$, $n_{a}=100$, $n = 100$.}
         \label{fig:mse-vs-betatar-conf2a}
     \end{subfigure}\\
     \begin{subfigure}[b]{0.8\textwidth}
         \centering
         \includegraphics[width=\textwidth]{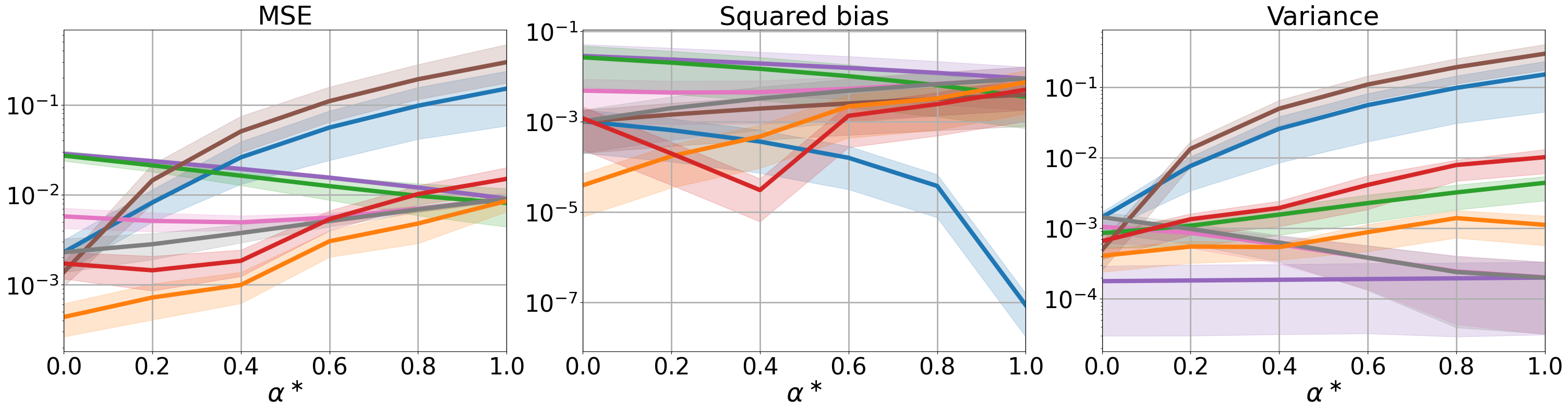}
         \caption{$d=10000$, $n_{a}=100$, $n = 100$.}
         \label{fig:mse-vs-betatar-conf2b}
     \end{subfigure}
     \caption{Results with varying $\alpha^\ast$.}
     \label{fig:mse-vs-betatar-conf2}
 \end{figure}

 \begin{figure}[ht]
     \centering
    \begin{subfigure}[b]{0.8\textwidth}
         \centering
         \includegraphics[width=\textwidth]{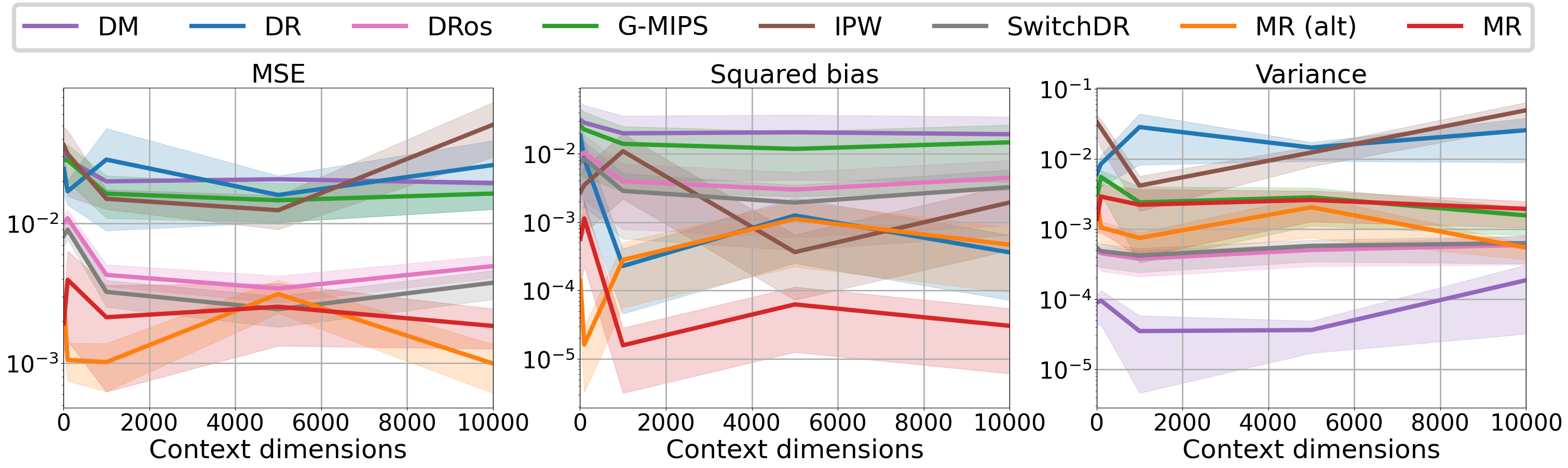}
         \caption{$n_{a}=100$, $n = 100$, $\alpha^\ast = 0.4$.}
         \label{fig:mse-vs-d-conf2a}
     \end{subfigure}\\
     \begin{subfigure}[b]{0.8\textwidth}
         \centering
         \includegraphics[width=\textwidth]{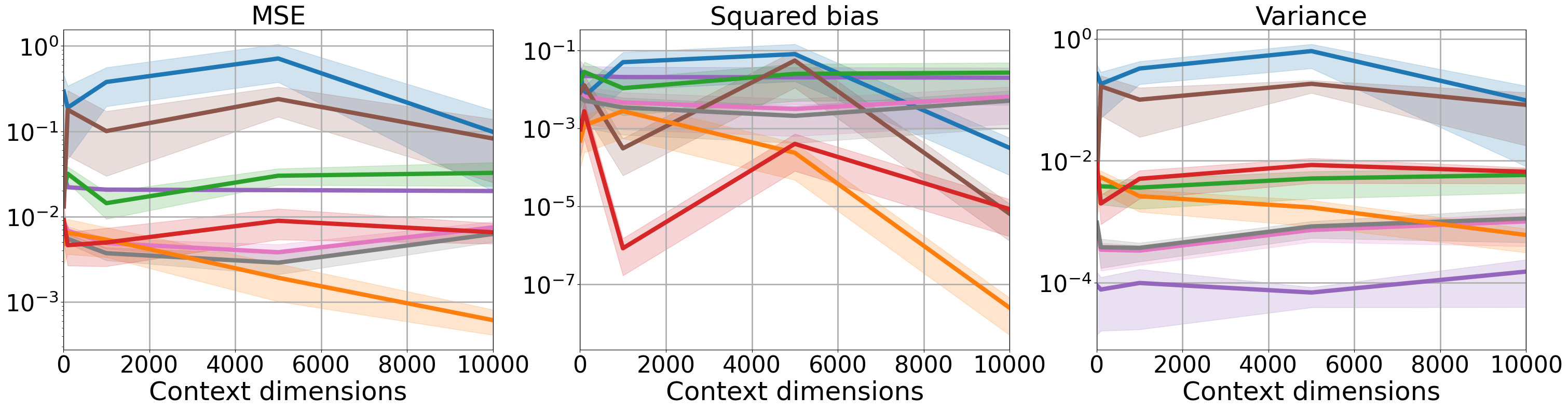}
         \caption{$n_{a}=500$, $n = 100$, $\alpha^\ast = 0.4$.}
         \label{fig:mse-vs-d-conf2b}
     \end{subfigure}
     \caption{Results with varying context dimensions $d$.}
     \label{fig:mse-vs-d-conf2}
 \end{figure}

 \begin{figure}[ht]
     \centering
    \begin{subfigure}[b]{0.8\textwidth}
         \centering
         \includegraphics[width=\textwidth]{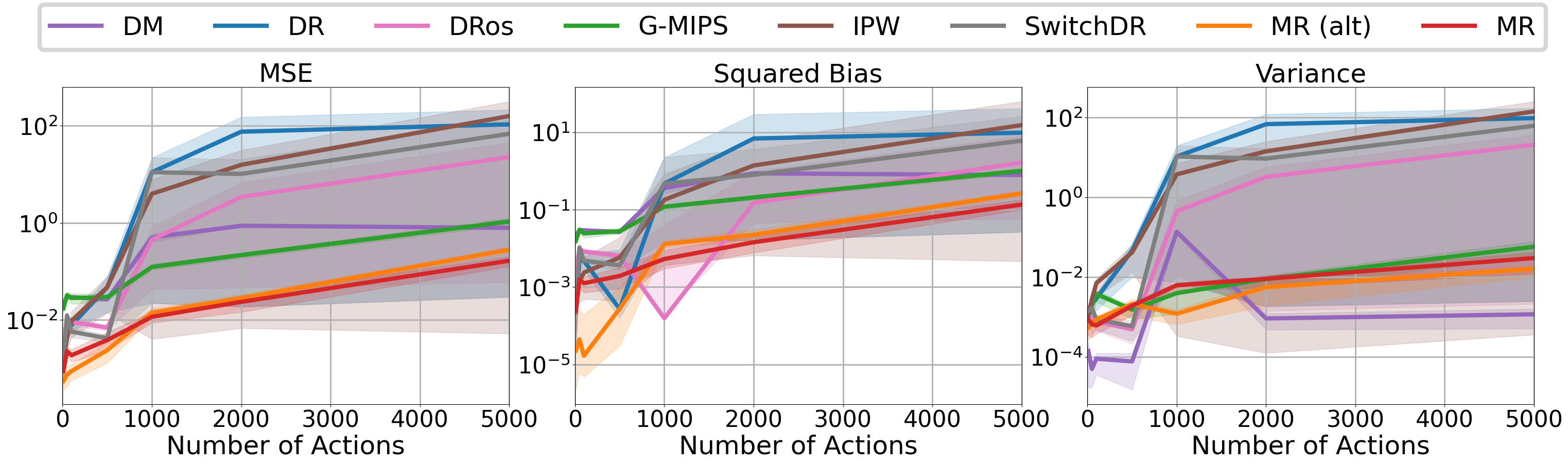}
         \caption{$d=100$, $n = 100$, $\alpha^\ast = 0.2$.}
         \label{fig:mse-vs-nac-conf2a}
     \end{subfigure}\\
     \begin{subfigure}[b]{0.8\textwidth}
         \centering
         \includegraphics[width=\textwidth]{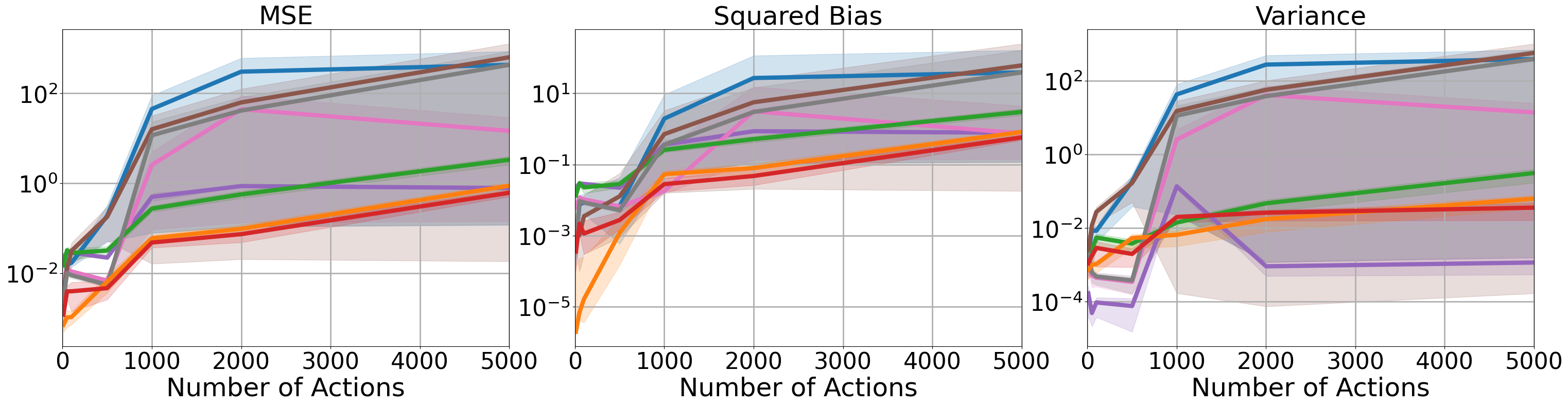}
         \caption{$d=100$, $n = 100$, $\alpha^\ast = 0.4$.}
         \label{fig:mse-vs-nac-conf2b}
     \end{subfigure}
     \caption{Results with varying number of actions $n_{a}$.}
     \label{fig:mse-vs-nac-conf2}
 \end{figure}

In addition to the synthetic data experiments provided in Section \ref{sec:exp-synth}, we also consider an additional synthetic data setup to obtain further empirical evidence in favour of the MR estimator, and also compare it against the generalised version of the MIPS estimator (described as G-MIPS in Appendix \ref{app:gmips}).
Here, we use a similar setup to \cite{saito2022off} (albeit without action embeddings $E$) where the $d$-dimensional context vectors $x$ are sampled from a standard normal distribution. Likewise, the action space is finite and comprises of $n_a$ actions, i.e.\ $\Aspace = \{0, \dots, n_a-1\}$, with $n_a$ taking a range of different values. The reward function is defined as follows:

 \paragraph{Reward function}
The expected reward $q(x, a)\coloneqq\E[Y\mid x, a]$ for these experiments is defined as follows:
\[
    q(x, a) = \sin \left(a \cdot ||x||_2 \right). 
\]
The reward $Y$ is obtained by adding a normal noise random variable to $q(x, a)$
\[
Y = q(X, A) + \epsilon, 
\]
where $\epsilon \sim \mathcal{N}(0, 0.01)$. Here, it can be seen that conditional on $R=(||X||_2, A)$, the reward $Y$ does not depend on $(X, A)$, i.e., the embedding $R$ satisfies the conditional independence assumption $Y \indep (X, A) \mid R$. 

\paragraph{Behaviour and target policies}
We first define a behaviour policy by applying softmax function to $q(x, a)$ as
\[
\beh(a\mid x) = \frac{\exp{(q(x, a))}}{\sum_{a' \in \Aspace} \exp{(q(x, a'))}}.
\]
Just like in Section \ref{sec:exp-synth}, to investigate the effect of increasing policy shift, we define a class of policies,
\[
\pi^{\alpha^\ast}(a | x) = \alpha^\ast\,\ind(a = \arg\max_{a'\in \Aspace} q(x, a')) + \frac{1-\alpha^\ast}{|\Aspace|} \quad \textup{where} \quad q(x, a) \coloneqq \E[Y\mid X=x, A=a],
\]
where $\alpha^\ast \in [0, 1]$ allows us to control the shift between $\beh$ and $\tar$. Again, the shift between $\beh$ and $\tar$ increases as $\alpha^\ast \rightarrow 1$. Using the ground truth behaviour policy $\beh$, we generate a dataset which is split into training and evaluation datasets of sizes $m$ and $n$ respectively. 

In Figures \ref{fig:mse-vs-neval-conf2} - \ref{fig:mse-vs-nac-conf2}, we present the results for this experimental setup for different choices of paramater configurations. 

\paragraph{Estimation of behaviour policy $\hatbeh$ and marginal ratio $\hat{w}(y)$}
For the MR estimator, we estimate the behaviour policy using a random forest classifier trained on 50\% of the training data and use the rest of the training data to estimate the marginal ratios $\hat{w}(y)$ using multi-layer perceptrons (MLP). Moreover, for a fair comparison we use a different behaviour policy estimate $\hatbeh$ for all other baselines which is trained on the entire training data. 

\paragraph{Additional Baselines}
In addition to the baselines considered in the main text (Section \ref{sec:exp-synth}), we also consider Switch-DR \citep{wang2017optimal} and DR with Optimistic Shrinkage (DRos) \citep{su2020doubly}. In addition, we also include the results for MR estimated using the alternative method (`MR (alt)') outlined in Section \ref{sec:alt-estimation-method}. For the G-MIPS estimator (defined in Appendix \ref{app:gmips}) considered here, we use $R = (a, ||x||_2)$\footnote{It is easy to see that in our setup, the embedding $R = (a, ||x||_2)$ satisfies the conditional independence assumption $Y \indep (X, A) \mid R$ needed for G-MIPS estimator to be unbiased}. 
To estimate $\hat{q}(x, a)$ for DM and DR estimators, we use multi-layer perceptrons (MLPs).

\subsubsection{Results}
For this experiment, the results are computed over 10 different sets of logged data replicated with different seeds, and in Figures \ref{fig:mse-vs-neval-conf2} - \ref{fig:mse-vs-nac-conf2} we use a total of $m=5000$ training data. 

\paragraph{Varying $n$}
Figure \ref{fig:mse-vs-neval-conf2} shows that MR outperforms the other baselines, in terms of MSE and squared bias, when the number of evaluation data $n\leq 1000$. Additionally, we observe that in this experiment, MR esitmated using alternative methods, MR (alt), yields better results than the original method of estimating MR. Moreover, while the variance of DM is lower than that of MR, the DM method has a high bias and consequently a high MSE.

\paragraph{Varying $\alpha^\ast$}
Figure \ref{fig:mse-vs-betatar-conf2} shows the results with increasing policy shift. It can be seen that overall MR methods achieve the smallest MSE with increasing policy shift. Moreover, the difference between MSE and variance of MR and IPW/DR methods increases with increasing policy shift, showing that MR performs especially better than these baselines when the difference between behaviour and target policies is large.

\paragraph{Varying $d$ and $n_a$}
Figures \ref{fig:mse-vs-d-conf2} and \ref{fig:mse-vs-nac-conf2} show that MR outperforms the other baselines as the context dimensions and/or number of actions increase. In fact, Figure \ref{fig:mse-vs-nac-conf2} shows that MR is significantly robust to increasing action space, whereas baselines like IPW and DR perform poorly in large action spaces.

\paragraph{Varying $m$}
Figure \ref{fig:mse-vs-ntr-conf2} shows the results with increasing number of training data $m$. We again observe that the MR methods `MR' and `MR (alt)' outperforms the other baselines in terms of the MSE and squared bias even when the number of training data is low. Moreover, the variance of both the MR estimators continues to improve with increasing number of training data.

Unlike our experimental results in Section \ref{subsec:mips-empirical}, `MR (alt)' performs better than the original MR estimator overall. This shows that one of these two methods is not better than the other consistently in all cases, and their relative performance depends on the dataset under consideration. 

\begin{figure}[ht]
     \centering
    \begin{subfigure}[b]{0.8\textwidth}
         \centering
         \includegraphics[width=\textwidth]{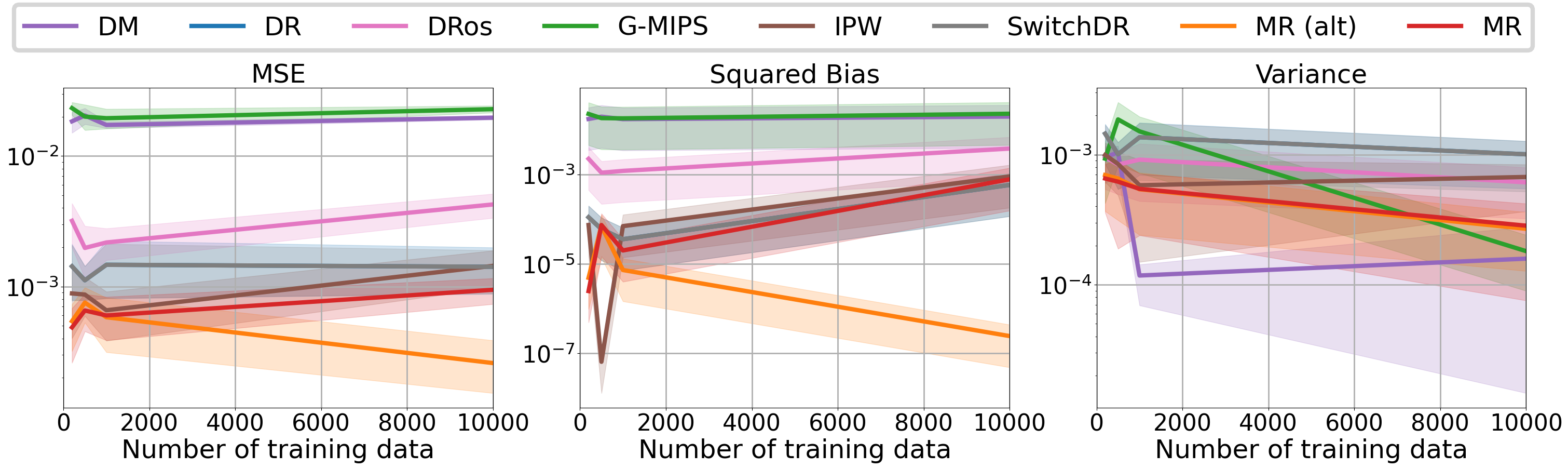}
         \caption{$d=5000$, $n = 100$, $n_a = 10$, $\alpha^\ast = 0.2$.}
         \label{fig:mse-vs-ntr-conf2a}
     \end{subfigure}\\
     \begin{subfigure}[b]{0.8\textwidth}
         \centering
         \includegraphics[width=\textwidth]{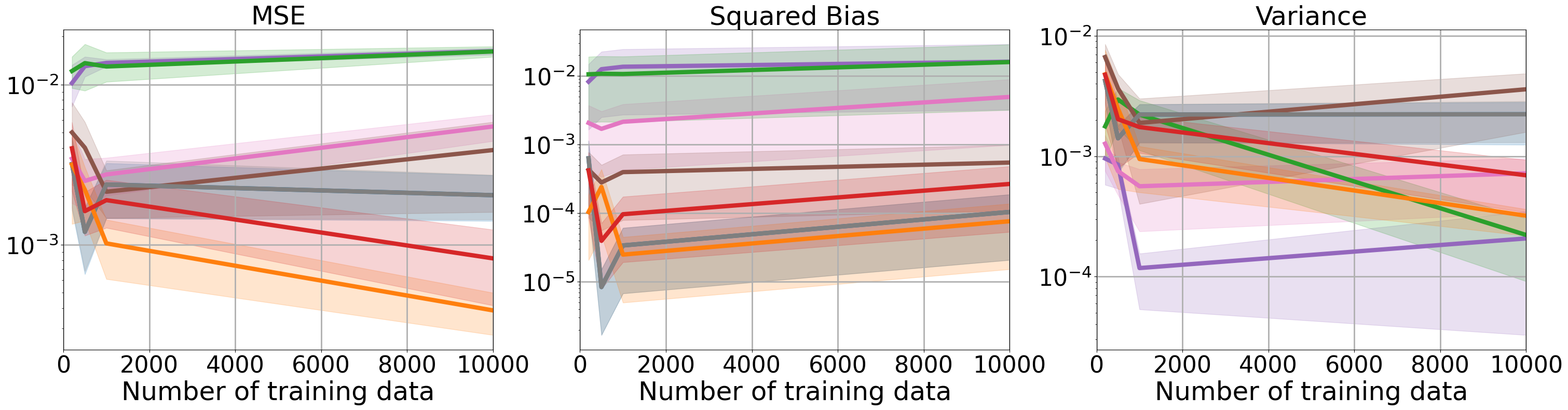}
         \caption{$d=5000$, $n = 100$, $n_a = 10$, $\alpha^\ast = 0.4$.}
         \label{fig:mse-vs-ntr-conf2b}
     \end{subfigure}
     \caption{Results with varying number of training data $m$.}
     \label{fig:mse-vs-ntr-conf2}
 \end{figure}

\subsection{Self-normalised MR estimator}
\begin{figure}[ht]
     \centering
    \begin{subfigure}[b]{0.75\textwidth}
         \centering
         \includegraphics[width=\textwidth]{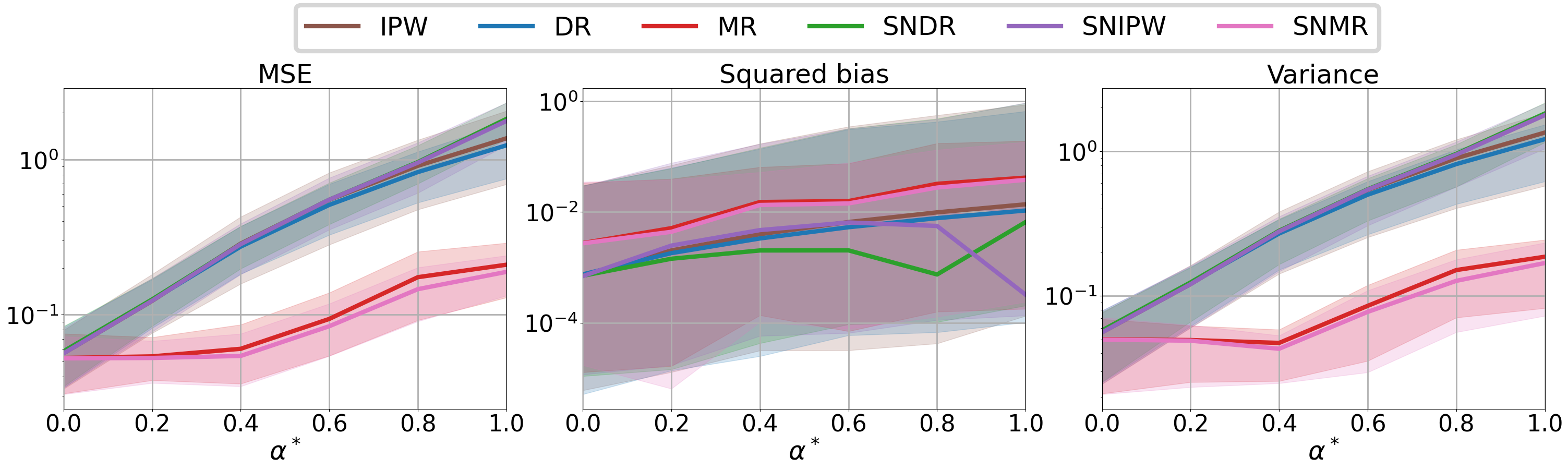}
         \caption{$d=10000$, $n = 200$, $n_a = 20$, $m = 5000$.}
         \label{fig:self-norma}
     \end{subfigure}\\
     \begin{subfigure}[b]{0.75\textwidth}
         \centering
         \includegraphics[width=\textwidth]{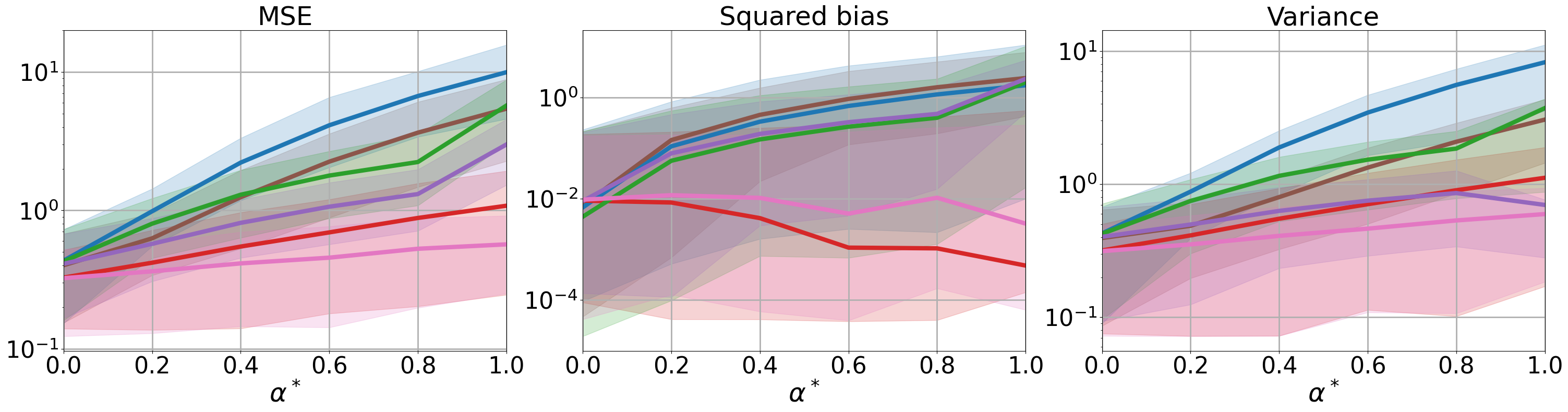}
         \caption{$d=5000$, $n = 200$, $n_a = 20$, $m=1000$.}
         \label{fig:self-normb}
     \end{subfigure}\\
     \begin{subfigure}[b]{0.75\textwidth}
         \centering
         \includegraphics[width=\textwidth]{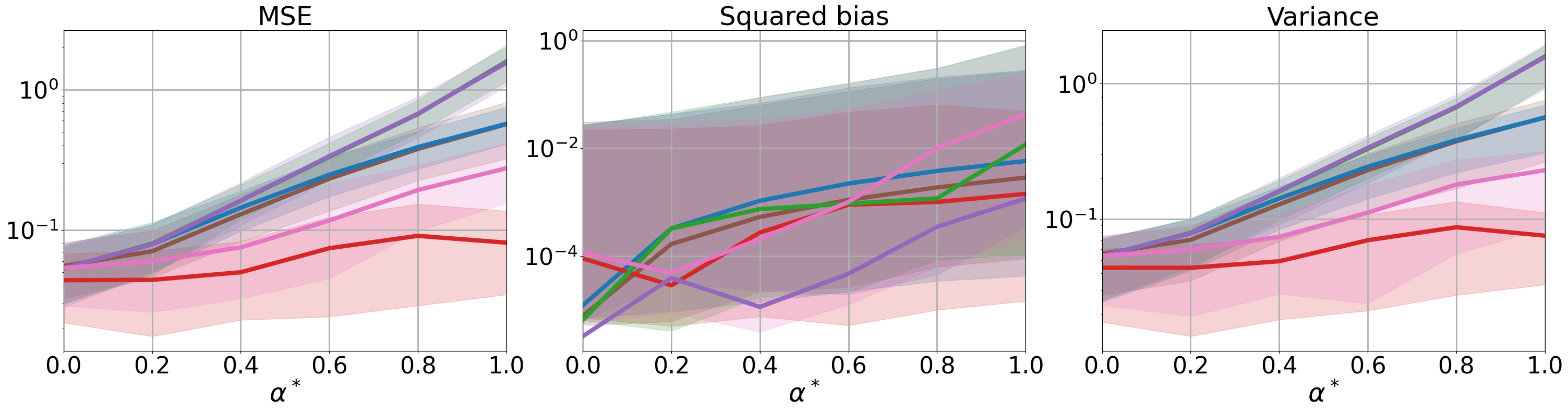}
         \caption{$d=10000$, $n = 200$, $n_a = 20$, $m=5000$.}
         \label{fig:self-normc}
     \end{subfigure}
     \caption{Results for self-normalised estimators with varying target policy shift $\alpha^\ast$ for synthetic data setup considered in Section \ref{sec:exp-synth}. Here, ``SN'' denotes self-normalised estimators.}
     \label{fig:self-norm}
 \end{figure}
Self-normalization trick has been used in practice to reduce the variance in off-policy estimators \citep{swaminathan2015the}. This technique is also applicable to the MR estimator, and leads to the self-normalized MR estimator (denoted as $\thetasnmr$) defined as follows:
\[
\thetasnmr \coloneqq \sum_{i=1}^n \frac{w(Y_i)}{\sum_{j=1}^n w(Y_j)}\,Y_i.
\]

We conducted experiments to investigate the effect of self-normalisation on the performance of the IPW, DR and MR estimators. Figure \ref{fig:self-norm} shows results for three different choices of parameter configurations. Overall, we observe that in all settings, the MR and self-normalised MR (SNMR) estimator outperform all other baselines including the self-normalised IPW and DR estimators (denoted as SNIPW and SNDR respectively). Moreover, in some settings, where the importance ratios achieve very high values, self-normalisation can reduce the variance and MSE of the corresponding estimator (for example, Figure \ref{fig:self-normb}). However, we also observe cases in which self-normalization does not significantly change the results (Figure \ref{fig:self-norma}), or may even slightly worsen the MSE of the estimators (Figure \ref{fig:self-normc}).

\end{document}